\documentclass{article} 
\usepackage{iclr2027_conference,times}

\usepackage{amsmath,amsfonts,bm}

\def\eqref#1{equation~\ref{#1}}
\def\Eqref#1{Equation~\ref{#1}}

\def\1{\bm{1}}

\DeclareMathAlphabet{\mathsfit}{\encodingdefault}{\sfdefault}{m}{sl}
\SetMathAlphabet{\mathsfit}{bold}{\encodingdefault}{\sfdefault}{bx}{n}

\usepackage{hyperref}
\usepackage{graphicx}
\usepackage{url}
\usepackage{subfigure}
\usepackage{amsthm}
\usepackage{booktabs}
\usepackage{wrapfig}
\usepackage{xcolor}
\usepackage{fancybox}
\cornersize*{8pt}
\newtheorem{claim}{Claim}

\theoremstyle{remark}
\newtheorem{remark}{Remark}

\title{Vision-Language-Action Autonomous Driving Agent with Language-based Memory}

\author{
Kai Yan$^{1}$\thanks{Corresponding author: \texttt{kaiyan3@illinois.edu}.
Work done as an intern at NVIDIA.},
Xiangyu Chen$^{2}$, Yulong Cao$^{2}$, Alex Naumann$^{2}$, Peter Karkus$^{2}$, Yan Wang$^{2}$,\\
\textbf{Jef Packer$^{2}$, Alex Schwing$^{1}$, Yuxiong Wang$^{1}$, Boris Ivanovic$^{2}$,
Wenjie Luo$^{2}$, Marco Pavone$^{2}$} \\
$^{1}$University of Illinois Urbana-Champaign, $^{2}$NVIDIA\\
\texttt{https://kaiyan289.github.io/projects/ad-memo/}
}

\iclrfinalcopy 
\begin{document}

\maketitle

\begin{abstract}

Vision-Language-Action (VLA) foundation models have recently emerged as one of the prevailing solutions for autonomous driving, as they can utilize knowledge acquired during vision-language pretraining for accurate and interpretable driving. However, VLAs can take only a limited number of frames as visual input due to the high token cost of an image, which is problematic for memory-dependent tasks such as determining the arrival order at all-way stops and long-horizon driving scene understanding. Existing solutions use latent vector memories accessed through cross-attention, which are neither interpretable nor portable. In this paper, we propose AD-Memo, a general-purpose VLA driving agent with language-based memory. The agent outputs memory as an extension of its Chain-of-Thought (CoT) to record surrounding objects critical to driving; this memory becomes part of the agent's future input. We curate memory-based datasets and train VLAs with a two-stage recipe: Supervised Fine-Tuning (SFT) and \textit{Da Capo}, a novel semi-closed-loop Reinforcement Learning (RL) algorithm which uses trajectory-level advantage for memory and step-level advantage for driving, leading to better credit assignment. Across scenarios such as all-way stops and general driving, AD-Memo improves driving quality, enables better question answering on driving scenes, and produces plug-and-play memory for other models.
\end{abstract}

\section{Introduction}


In recent years autonomous driving has successfully transitioned from the lab to real-world commercial deployment at scale~\citep{kusano2025comparison,waymo2026scale}. During this period, Vision-Language-Action (VLA) models~\citep{brohan2023rt,o2024open} have rapidly emerged as one of the prevailing solutions~\citep{wang2025alpamayo,li2026unidrivevla,zhou2026qwen} for autonomous driving, as they acquire knowledge through large-scale vision-language pretraining and apply it after fine-tuning on driving data. Through this architectural synergy, VLA driving agents produce accurate and interpretable driving trajectories in an end-to-end manner.


However, the high token cost of multiple images and the quadratic computational cost of attention~\citep{dao2022flashattention} severely limit the time horizon of VLA inputs~\citep{shi2026memoryvla}. For example, for a $4$-camera VLA receiving visual input at $10$Hz with $200$ tokens per image, a $0.4$s history requires $0.4/0.1 \times 200 \times 4 = 3200$ tokens; expanding this history to $4$ seconds increases the input length to $32$K tokens and the quadratic attention cost by a factor of $100$, which is prohibitively expensive for time-sensitive autonomous driving tasks~\citep{liu2022understanding}. While one can simply shorten the input horizon for faster inference, the lack of historical context becomes problematic for memory-dependent driving tasks, such as determining the arrival order at all-way stops~\citep{liang2026planning}, identifying occluded road objects~\citep{kumar2025occluded,xie2025drivebench}, and long-horizon driving scene understanding~\citep{zeng2025vision,zeng2026streamforest} --- all of which are important for driving.


To address this issue, prior works have introduced memory mechanisms into VLA driving agents using two prevalent approaches: \textit{latent vectors}~\citep{fu2025orion,huang2026mindvla} and \textit{language in-context learning}~\citep{wen2024dilu, mei2024continuously}. The former stores latent vectors extracted from past inputs and accesses them via cross-attention~\citep{vaswani2017attention}, while the latter stores structured examples of prior driving experiences. However, both approaches have shortcomings. Latent vector memory is not interpretable by humans or portable to other agents; it does not make full use of the VLA's reasoning ability inherited from its backbone, which motivates the use of VLAs in the first place. In turn, the language in-context learning approaches do not provide \textit{in-episode} memory, i.e., they do not help retain information about critical objects or events that affect current driving. Thus, in this work we ask: \textit{Can we fully exploit the knowledge in a VLA's backbone by using language-based, in-episode memory that is brief, explainable, and portable?}

\begin{figure}
    \centering
    \includegraphics[width=0.9\linewidth]{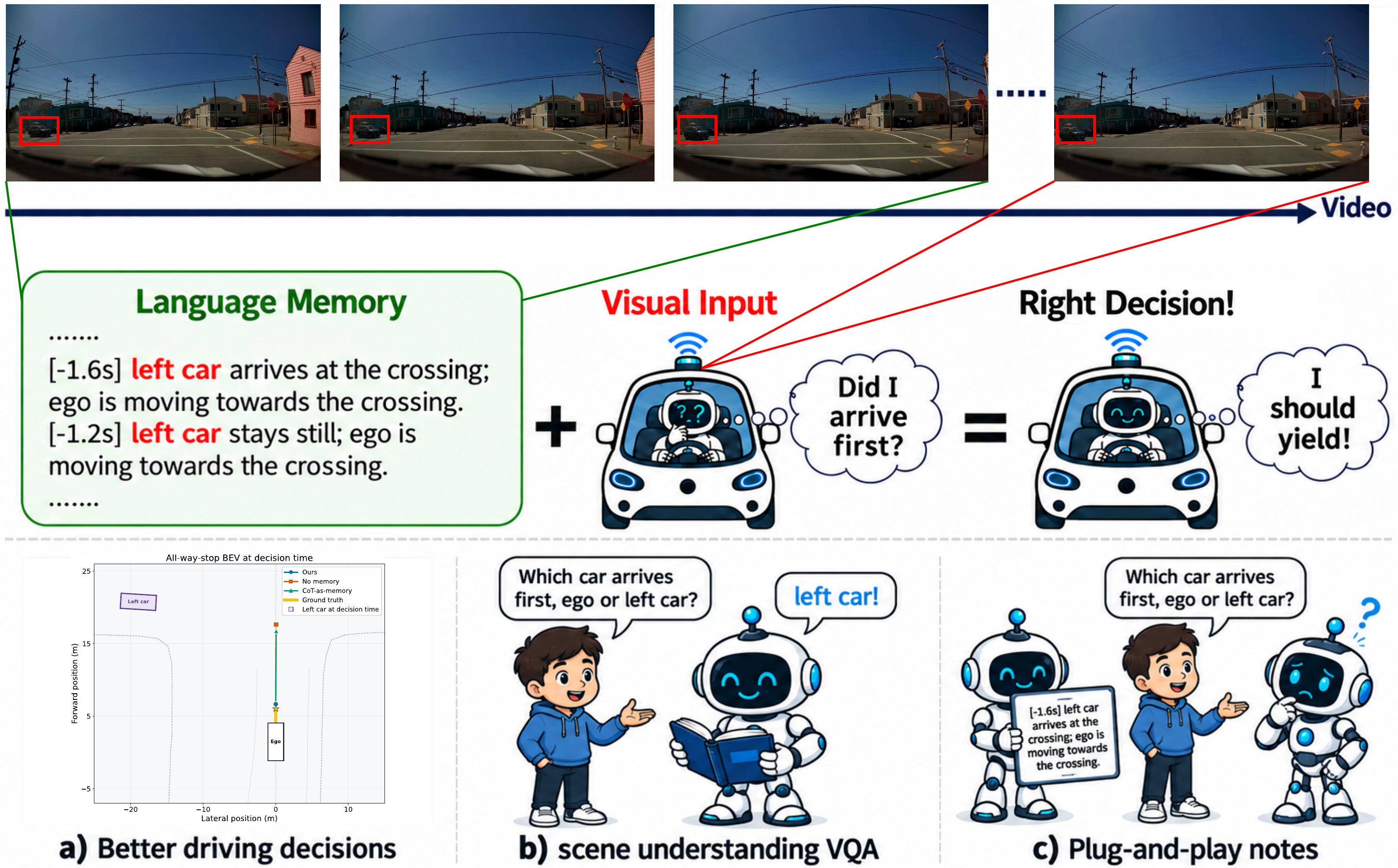}
    \caption{An illustration of AD-Memo. AD-Memo generates language-based memory periodically, which is beneficial for a diverse set of tasks illustrated in panel (a), (b) and (c), including driving, scene understanding question-answering tasks, and producing portable memory for other models. 
    }
    \label{fig:teaser}
\end{figure}


To achieve this, we propose AD-Memo, a general-purpose \textbf{A}utonomous \textbf{D}riving VLA agent with language-based, in-episode \textbf{memo}ry (see Fig.~\ref{fig:teaser} for an overview). The agent outputs memory as an extension of its Chain-of-Thought (CoT)~\citep{wei2022chain}, which then becomes part of the agent's future input. AD-Memo acquires this ability through a two-stage training recipe: Supervised Fine-Tuning (SFT) and \textit{Da Capo}, a novel semi-closed-loop RL algorithm. \textit{Da Capo} has two features: (1) it uses open-loop RL (i.e., ground-truth trajectory replay) for driving states to avoid the high cost of environment simulation, while using closed-loop memory (i.e., the agent's own memory generated at past steps) to overcome exposure bias~\citep{Bengio2015ScheduledSampling}; (2) inspired by stochastic computation graph theory~\citep{schulman2015gradient}, it assigns different reward components to driving tokens, which do not affect future steps under open-loop control, and memory tokens which do affect future steps, improving credit assignment. To apply this recipe, we curate a task-specific (all-way stop) dataset and a task-agnostic general driving dataset using rule-based and agentic frameworks, respectively. We then evaluate our model on the corresponding test sets, showing that AD-Memo not only improves driving quality but also better understands driving scenes, answers related questions, and generates plug-and-play memories that help other models with driving-related tasks.


Our contributions are as follows: (1) We introduce AD-Memo, the first general-purpose VLA driving agent that uses language-based, in-episode memory, which is scalable, explainable, and portable; (2) we propose \textit{Da Capo}, a novel semi-closed-loop RL algorithm that combines the fine-grained, per-step driving reward signal with the future-dependent, per-episode memory reward signal, overcoming exposure bias in the agent's memory while avoiding the expensive environment simulation required for fully closed-loop RL training; (3) we build novel agentic data pipelines and curate challenging, memory-dependent datasets; (4) we evaluate AD-Memo across multiple driving scenarios and empirically demonstrate that AD-Memo can not only improve driving performance, but also aid scene understanding and serve as a plug-and-play module for driving-related reasoning tasks.
\section{Preliminaries}

\textbf{Driving Vision-Language-Action (VLA) Models.} In this paper, we consider VLA driving agents with three input modalities: (1) \textit{past trajectory} $\tau^{\text{past}}=(\tau^1,\tau^2,\dots,\tau^{M_1})$, a special token sequence describing the waypoints of the past ego trajectory; (2) \textit{visual input} $I=(I^1, I^2,\dots,I^{M_2})$, a sequence of past camera images ordered chronologically; and (3) \textit{language input} $L=(L^1, L^2, \dots, L^{M_3})$, a sequence of language tokens. The agent $\pi_\theta$ first outputs a Chain-of-Thought (CoT)~\citep{wei2022chain} and then a future trajectory $\tau^{\text{future}}$. We let $y$ denote the combination of CoT and $\tau^{\text{future}}$. The primary goal of the agent is to minimize the difference between the human expert's future trajectory $\tau^{\text{gold}}$ and its prediction $\tau^{\text{future}}$ (see Appendix~\ref{sec:metrics} for detailed metrics) on video \textit{clips} $V=\{(\tau^{\text{past}}_1, I_1,L_1,y_1), (\tau^{\text{past}}_2,I_2,L_2,y_2),\dots, (\tau^{\text{past}}_n,I_n,L_n,y_n)\}$, each with $n$ steps corresponding to temporally ordered frames. Besides driving, we also expect the model to develop a good understanding of the scene and support multiple tasks. Thus, we consider two other types of tasks in this paper: (1) \textit{memory writing}, where the model takes a single image $I$ and past memory $L$ as input and outputs plug-and-play language memory $y$ that helps other Vision-Language Models (VLMs) understand the driving scene; (2) \textit{Visual Question Answering (VQA)}, where the model takes the same visual input $I$ as in driving, optional language input $L$, and a natural-language question $Q$, then generates a language answer $y^{\text{VQA}}$ about the driving history.


\textbf{Open-Loop and Closed-Loop RL for VLAs.} Open-loop RL~\citep{zhou2026autovla,xiong2026recogdrive} is an RL training paradigm in which the action at the current step does not affect future environmental states in the trajectory, i.e., ground-truth inputs from expert demonstrations are provided at every step. In contrast, closed-loop RL~\citep{shi2026clear} resembles MuJoCo-based RL tasks~\citep{todorov2012mujoco,yan2024reinforcement}, in which the future state depends on the current action. While the latter can better address exposure bias~\citep{Bengio2015ScheduledSampling} and enable more effective exploration~\citep{li2026simplevla}, the former remains a prevalent choice in autonomous driving~\citep{zhou2026autovla,sheng2026explorevla} and robotics~\citep{huang2026thinkact,huang2026mobilevla} due to the difficulty and high cost of simulating real-world environments~\citep{wang2024gensim,choi2026scale}. Our work strikes a balance between the two with semi-closed-loop RL, which combines open-loop replay of visual input $I$ and past trajectory $\tau^{\text{past}}$ with closed-loop memory $y$ generated from previous steps.


\textbf{Group Relative Policy Optimization (GRPO).} GRPO~\citep{shao2024deepseekmath} is a prevalent Reinforcement Learning with Verifiable Rewards (RLVR) method for building Large Language Model (LLM) agents. Given an input $x$ ($x=(\tau^{\text{past}},I,L)$ for driving, $(I,L,Q)$ for question-answering, and $(I,L)$ for memory writing) and a group of model responses $y_1,y_2,\dots,y_n$, GRPO optimizes the policy $\pi_\theta$ by minimizing
%
\begin{equation}
\label{eq:grpo}
\begin{aligned}
&-\frac{1}{n}\sum_{i=1}^{n}\frac{1}{|y_i|}
\sum_{j=1}^{|y_i|}
\min\!\bigl[
\rho_{i,j}A_i,\,
\operatorname{clip}(\rho_{i,j},1-\epsilon_1,1+\epsilon_2)A_i
\bigr]
+\beta\operatorname{KL}(\pi_\theta\|\pi_{\mathrm{old}}),
\end{aligned}
\end{equation}
%
%
where 
$\rho_{i,j}
=\frac{\pi_\theta(y_{i,j}\mid x,y_{i,<j})}
{\pi_{\theta_{\mathrm{old}}}(y_{i,j}\mid x,y_{i,<j})}$, $\beta\geq 0$ and $\epsilon_2>\epsilon_1>0$ following DAPO~\citep{yu2025dapo}. Here, $\pi_{\theta_{\text{old}}}$ is the old policy used to generate responses in the current sampling-training iteration, and $\pi_{\text{old}}$ is the fixed reference policy before GRPO training. The advantage $A_i$ is calculated relative to the group mean as $A_{i}=\frac{r(x,y_i)-\text{avg}(r(x,y_k))}{\text{std}(r(x,y_k))}$, where $\text{avg}$ and $\text{std}$ are the mean and standard deviation over $k\in\{1,2,\dots,n\}$.
\section{Methodology}

The proposed AD-Memo, a general-purpose VLA driving agent, learns driving, question-answering and memory-writing jointly. Sec.~\ref{sec:lbm} describes how the language-based memory works, while Sec.~\ref{sec:training} describes our training recipe.

\subsection{Language-Based Memory}
\label{sec:lbm}


At a high level, AD-Memo is a \textit{streaming video agent} that uses the current visual input to summarize objects of interest that could affect future driving. This design differs fundamentally from existing VLA approaches with language-based memory in robotics~\citep{haresh2026self,torne2026mem}, where notes mainly record the agent's state and subgoals. This is because \textit{in robotics, objects are often relatively stable but tasks are heterogeneous:} For example, making a cup of tea requires sequentially completing very different tasks, such as grabbing the cup, boiling water, and pouring from the kettle, but these objects do not move by themselves. In contrast, \textit{in autonomous driving, objects are often transient but tasks are homogeneous:} pedestrians ahead may disappear from view within seconds, but the task remains to output a safe and efficient trajectory --- regardless of whether the agent passed an intersection a minute ago. Thus, in this work, we focus on a memory horizon of around 10 seconds, which, as shown in Appendix~\ref{sec:datastats-aws} largely suffices for many scenarios of interest, but remains expensive to cover with visual input alone, requiring at least $100$ images at $10$Hz. We discuss the potential need for even longer-term memory, e.g., remembering speed limits, in Appendix~\ref{sec:limit}.

To expand AD-Memo's memory to the horizon discussed above, we divide the memory horizon by the visual input horizon $t$, and generate language-based memory for each interval such that the content of the memory covers the video without overlap. More specifically, suppose we have a driving clip $V$ with $n+1$ keyframes spaced $t$ seconds apart.\footnote{In this paper, we focus on performance at keyframes; for intermediate frames, we can simply reuse the memory input used at the previous keyframe and discard memory generated at non-keyframes.} On those keyframes, AD-Memo switches between two modes: driving and VQA. At the $i$-th keyframe ($i\in\{1,2,\dots,n\}$), AD-Memo operates in driving mode, in which it generates Chain-of-Thought (CoT)~\citep{wei2022chain} and a future trajectory $\tau^{\text{future}}_i$ for driving control. Within CoT, AD-Memo outputs \textit{memory} $\text{mem}_i$, which is a formatted substring appended to the input of future keyframes and can be also used by other models as input. At the $(n+1)$-th keyframe, i.e., the end of the clip, AD-Memo switches to VQA mode, in which it takes the current visual input, memories from a window of up to $T$ previous steps $\{\text{mem}_{\max(1,n+1-T)},\dots,\text{mem}_n\}$, and a question $Q$, then generates an answer. The question concerns a crucial object of interest that affects driving in the video (e.g., a pedestrian or traffic light) and is presented as a Multiple-Choice Question (MCQ) for verifiability. As AD-Memo does not have access to $Q$ during driving, it must identify and record all important objects on the road to answer the question, which also helps its VLM backbone to reason about and improve driving. When deployed on the road, AD-Memo remains in driving mode but continues recording important objects on the road, as it does when preparing for questions during training.


As no public autonomous driving dataset with language-based memory annotations exists, we curate two datasets from NVIDIA's Physical-AV dataset~\citep{nvidia2026physicalaiav}: the all-way stop dataset (Sec.~\ref{sec:aws_exp}) and the general driving dataset (Sec.~\ref{sec:gen_exp}), where the former provides a task-specific setting in which memory is beneficial, while the latter serves to evaluate generalizability. Each dataset is divided into three splits: the SFT training set $\mathcal{D}_{\text{SFT}}$, the RL training set $\mathcal{D}_{\text{RL}}$, and the test set $\mathcal{D}_{\text{test}}$, in a $3:1:1$ ratio. Each dataset consists of triplets $(V, Q, y^{\text{VQA}})$, which are the clip itself $V=\{(\tau^{\text{past}}_1, I_1,L_1,y_1),\dots, (\tau^{\text{past}}_n,I_n,L_n,y_n)\}$, the accompanying question $Q$, and the answer to the question $y^{\text{VQA}}$. See Appendix~\ref{sec:datastats} for details on dataset curation and statistics.

\subsection{Two-Stage Training Recipe}
\label{sec:training}

\begin{figure}[t]
    \centering
    \subfigure[Driving]{
    \includegraphics[height=4cm]{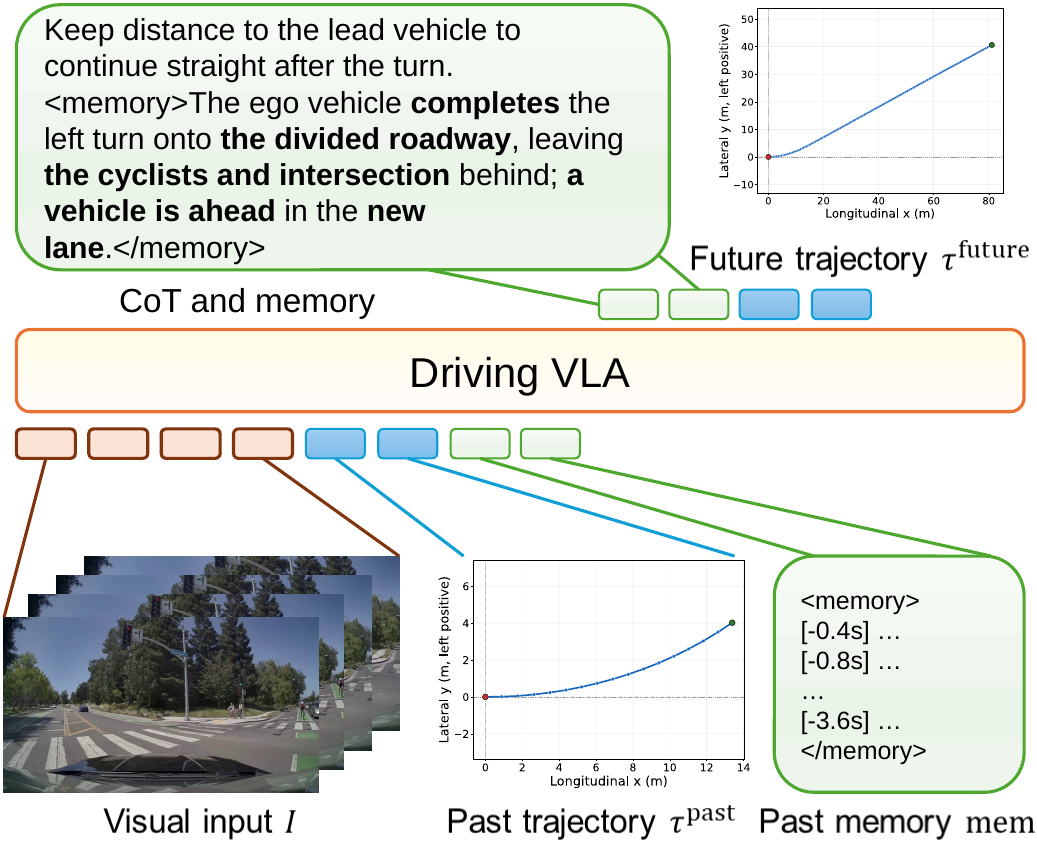}}
    \subfigure[Question-Answering]{
    \includegraphics[height=4cm]{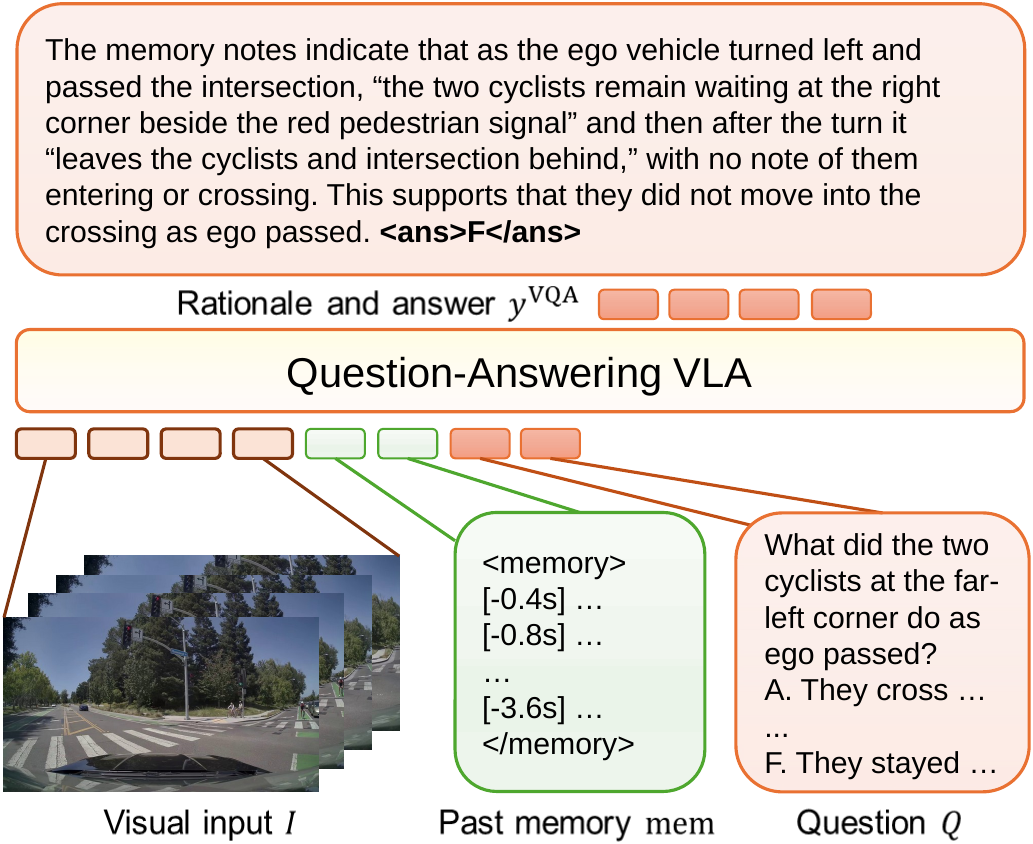}}
    \subfigure[Memory-Writing]{
    \includegraphics[height=4cm]{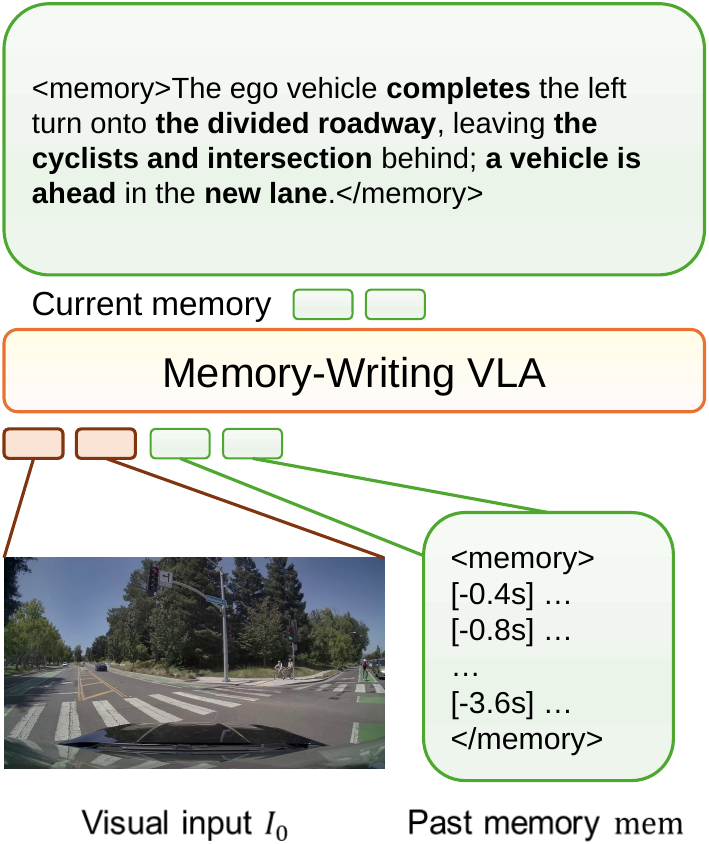}}
    \caption{An illustration of supervised training data of AD-Memo; the three tasks listed in panel (a), (b) and (c) are jointly trained in the SFT stage (all-way stop dataset only has (a) and (b)). Driving and memory-writing data are one entry per step, while question-answering data is one entry per clip.}
    \label{fig:sft-data}
\end{figure}

AD-Memo uses a two-stage training recipe: Supervised Fine-Tuning (SFT) on the SFT training set $\mathcal{D}_{\text{SFT}}$, and semi-closed loop Reinforcement Learning (RL) on the RL training set $\mathcal{D}_{\text{RL}}$.


\textbf{Supervised Fine-Tuning (SFT).} This is the main stage for equipping the VLA model with memory-writing and memory-dependent reasoning abilities, during which the model jointly learns three tasks: driving while writing memory, answering VQA questions based on its memory, and standalone memory writing, as described in Sec.~\ref{sec:lbm}. 
The training loss for this stage is
\begin{equation}
\label{eq:sftloss}
\begin{aligned}
&-\mathbb{E}_{(V,Q,y^{\mathrm{VQA}})\sim\mathcal{D}_{\mathrm{SFT}}}
\biggl[
\sum_{i=1}^{n}(\ell_{1,i}+c_1\ell_{2,i})
+c_2\ell_3
\biggr],
\quad \text{where}
\\
&\ell_{1,i}=\log\pi_\theta(y_i^{\mathrm{drive}}|x_i),\;
\ell_{2,i}=\log\pi_\theta(\mathrm{mem}_i|I_i^0,L_i),\;
\ell_3=\log\pi_\theta(y^{\mathrm{VQA}}|I_{n+1},L_{n+1},Q).
\end{aligned}
\end{equation}
In~\Eqref{eq:sftloss}, $I_i$ contains the past few frames at step $i$, $I_i^0$ is the current single frame ($I_i^0\in I_i$), $L_i=\{\text{mem}_{i-T},\dots,\text{mem}_{i-1}\}$ is the memory from prior steps, $y_i^{\text{drive}}=(\text{CoT}_i, \text{mem}_i,\tau^{\text{future}})$ is the driving task output, $\text{mem}_i$ is the current memory, $Q$ is the question, and $y^{\text{VQA}}$ is the response for the VQA task. $c_1, c_2\geq 0$ are constants. See examples of driving, question-answering and memory-writing data in Fig.~\ref{fig:sft-data}. For both datasets, we train 2 epochs as discussed in Appendix~\ref{sec:longer}.


\textbf{Semi-Closed-Loop RL.} While our SFT stage effectively equips the VLA agent with memory-related abilities, one key shortcoming remains: exposure bias. When driving and answering driving-related questions, the model relies on its own memory from prior steps, which it does not encounter during SFT and which may be out of distribution at inference time. Closed-loop RL using the agent's own memory is a natural way to address this mismatch. However, fully closed-loop RL requires a simulator to reproduce changes in environmental inputs and the reactions of other road users. Simulators like CARLA~\citep{dosovitskiy2017carla} exhibit a substantial sim-to-real appearance gap~\citep{pasios2025carla2real}, while 3D reconstruction-based simulators such as NuRec~\citep{nvidia2026instantnurec} can be costly and still inaccurate.

\begin{wrapfigure}{r}{0.35\linewidth}
    \centering
    \includegraphics[width=\linewidth]{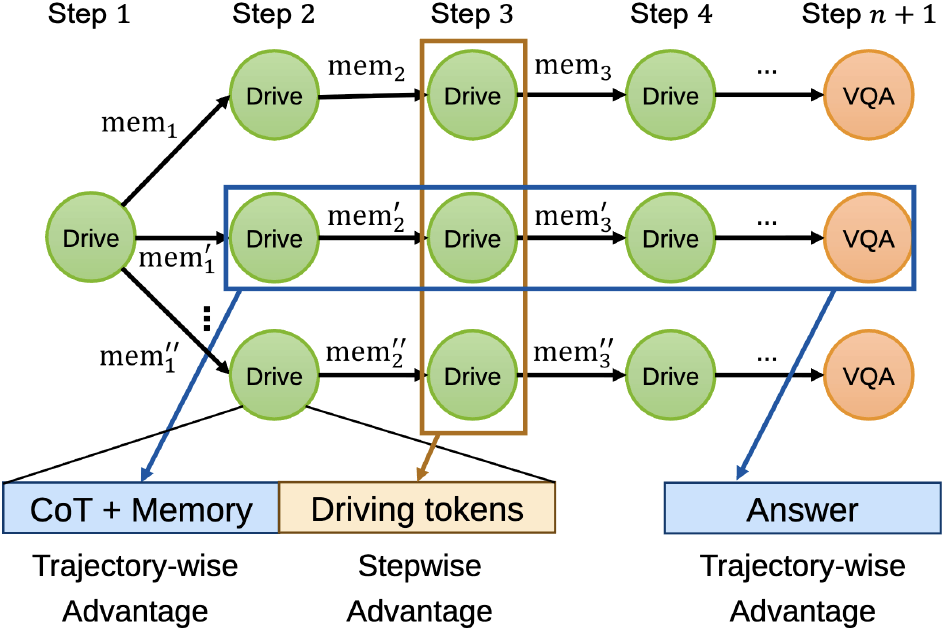}
    \caption{An illustration of semi-closed loop RL, where we keep $K$ parallel memories; the memory and driving tokens require heterogeneous handling, with the former being trajectory-wise and the latter being stepwise.}
    \label{fig:semiclosed}
\end{wrapfigure}


To address this challenge, we propose a novel training paradigm: \textit{semi-closed-loop RL}, which is open-loop in control and closed-loop in memory. More specifically, for a given clip with $|V|=n+1$ keyframes, we start from the first keyframe and sample $K$ independent rollouts, each using memory generated at previous steps of the same rollout; the other inputs, such as visual observations and past trajectories, are replayed from ground truth and are therefore identical at the same step across all $K$ rollouts. We then apply GRPO to these rollouts.


However, the key difficulty of semi-closed-loop RL lies in computing advantages for two heterogeneous components: the driving trajectory $\tau^{\text{future}}$ and memory/VQA answers. The former does not affect future decisions in our open-loop control process, so its advantage is computed across rollouts at the same step to provide denser feedback and reduce variance; memory, in contrast, affects subsequent driving and question answering, so its advantage must account for downstream rewards, while terminal VQA answers are evaluated using the final MCQ reward (see Fig.~\ref{fig:semiclosed} for an illustration). Considering these factors, and inspired by factored policy gradients~\citep{spooner2021factored}, we propose a novel RL algorithm named \textbf{D}eviation-\textbf{A}djusted \textbf{C}ausal \textbf{A}dvantage \textbf{P}olicy \textbf{O}ptimization (\textit{Da Capo}). More specifically, for rollout $i\in\{1,\ldots,K\}$ at step $j\in\{1,\dots,n\}$, let $r^{\text{Driving}}_{i,j}$ denote the driving reward and $r^{\text{VQA}}_i$ the terminal MCQ reward. We define the following returns for driving trajectories, memory, and VQA answers, respectively: $G^{\text{Driving}}_{i,j}=\frac{r^{\text{Driving}}_{i,j}}{n}$, $G^{\text{Memory}}_{i,j}=\frac{1}{n}\sum_{k=j}^n r^{\text{Driving}}_{i,k}+\lambda^{\text{VQA}}r^{\text{VQA}}_i$, and $G^{\text{VQA}}_{i}=\lambda^{\text{VQA}}r^{\text{VQA}}_i$, where $\lambda^{\text{VQA}}>0$ is a constant. The corresponding advantage $A^c_{i,j}$ for component $c\in\{\text{Driving}, \text{Memory}, \text{VQA}\}$ in rollout $i$ at step $j$ is:
\begin{equation}
\label{eq:adv}
A^c_{i,j}=(G^c_{i,j}-\frac{1}{K}\sum_{k=1}^KG^c_{k,j})/\text{std}(G^c_{*,j}),
\end{equation}
with the step index $j$ omitted for VQA (i.e., $\forall i,j$,  $G^{\text{VQA}}_{i,j}=G^{\text{VQA}}_i$). The name ``causal advantage'' reflects the fact that, without standard-deviation normalization, the resulting on-policy gradient estimator has the same expectation as one based on the trajectory-level reward, while improving credit assignment by removing conditionally independent terms from the applied advantage; for example, as the driving tokens in step 1 do not affect the memory in step 10 with our open-loop control, reward from driving at step 1 should not be considered on memory advantage at step 10. Formally:

\begin{claim}
\label{thm:cda-unbiased}

The advantage in~\Eqref{eq:adv}, without division by $\text{std}(G^c_{*,j})$, yields the same expected on-policy score-function gradient as using the group-centered trajectory-level reward $\frac{1}{n}\sum_{s=1}^{n}r^{\text{Driving}}_{i,s}+\lambda^{\text{VQA}}r^{\text{VQA}}_i$, also without standard-deviation normalization.
\end{claim}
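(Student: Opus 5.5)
We model one semi-closed-loop group as $K$ i.i.d. rollouts of a stochastic computation graph. In rollout $i$, at step $j$, the policy samples driving tokens $d_{i,j}$ and memory tokens $m_{i,j}$, conditioned on ground-truth replayed inputs $x_j$ and on $m_{i,<j}$. At the end it samples the VQA answer $a_i$, conditioned on $x_{n+1}$, $Q$ and the last $T$ memories. The score function of rollout $i$ then decomposes as
\[
\nabla\log p_\theta(\text{rollout}_i)=\sum_{j=1}^n\bigl(S^{\text{D}}_{i,j}+S^{\text{M}}_{i,j}\bigr)+S^{\text{VQA}}_i ,
\]
where $S^{\text{D}}_{i,j}=\nabla\log\pi_\theta(d_{i,j}\mid x_j,m_{i,<j},\dots)$ and $S^{\text{M}}_{i,j}$, $S^{\text{VQA}}_i$ are defined analogously. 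The reference estimator multiplies every one of these terms by $R_i-\bar R$, where $R_i=\frac1n\sum_s r^{\text{Driving}}_{i,s}+\lambda^{\text{VQA}}r^{\text{VQA}}_i$ and $\bar R=\frac1K\sum_k R_k$. \emph{Da Capo} instead multiplies $S^{\text{D}}_{i,j}$ by $G^{\text{Driving}}_{i,j}-\bar G^{\text{Driving}}_{j}$, $S^{\text{M}}_{i,j}$ by $G^{\text{Memory}}_{i,j}-\bar G^{\text{Memory}}_{j}$, and $S^{\text{VQA}}_i$ by $G^{\text{VQA}}_{i}-\bar G^{\text{VQA}}$. The plan is to compare the two estimators term by term and show that each difference has zero expectation. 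By linearity, it is enough to show that the two advantages multiplying any given score term differ only by quantities whose product with that score has zero mean.

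\textbf{Step 1: group baselines.} The group means contain rollout $i$'s own return with weight $1/K$, so they are not independent of $S_{i,\cdot}$. The standard fix is to note that $G_{i}-\frac1K\sum_k G_k=\frac{K-1}{K}\bigl(G_i-\frac{1}{K-1}\sum_{k\neq i}G_k\bigr)$, and this identity holds in both estimators. The leave-one-out mean depends only on rollouts $k\neq i$, which are independent of rollout $i$. Hence $\mathbb E[S_{i,\cdot}\cdot b_{-i}]=\mathbb E[S_{i,\cdot}]\,\mathbb E[b_{-i}]=0$, because the expectation of a score function is zero. This works per step as well: conditioned on the replayed inputs, $S^{\text{D}}_{i,j}$ has zero mean given its conditioning history. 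Both estimators therefore equal $\frac{K-1}{K}$ times their uncentered versions in expectation, and it remains to compare the raw returns.

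\textbf{Step 2: causality (the main step).} We drop terms that are conditionally independent of each score.

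For $S^{\text{D}}_{i,j}$: under open-loop control, a sampled $d_{i,j}$ influences none of $r^{\text{Driving}}_{i,s}$ for $s\neq j$, nor $r^{\text{VQA}}_i$. Those rewards are functions of the memories and of other steps' driving tokens, drawn from distributions that do not depend on the realized $d_{i,j}$. The subtlety is that $d_{i,j}$ and $m_{i,j}$ come from one autoregressive output $y_i^{\text{drive}}$. We therefore fix the token order CoT, memory, trajectory, and treat the memory as sampled before the trajectory. Then $d_{i,j}$ is a leaf: no later variable conditions on it, because future steps receive only $\text{mem}$. For any $Z$ determined by the other nodes, we condition on the parents of $d_{i,j}$ and on everything not downstream of it:
\[
\mathbb E\bigl[S^{\text{D}}_{i,j}\,Z\bigr]=\mathbb E\bigl[Z\,\mathbb E[S^{\text{D}}_{i,j}\mid\text{parents}]\bigr]=0 .
\]
So only $r^{\text{Driving}}_{i,j}/n=G^{\text{Driving}}_{i,j}$ survives.

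For $S^{\text{M}}_{i,j}$ (and the CoT tokens grouped with memory): the past rewards $r^{\text{Driving}}_{i,s}$ with $s<j$ are measurable with respect to the history before $\text{mem}_{i,j}$ is sampled, so the usual "past rewards" argument removes them. What remains is $\frac1n\sum_{s\ge j}r^{\text{Driving}}_{i,s}+\lambda^{\text{VQA}}r^{\text{VQA}}_i=G^{\text{Memory}}_{i,j}$. If the CoT precedes the trajectory at step $j$, the term $s=j$ legitimately stays.

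For $S^{\text{VQA}}_i$: every driving reward is determined before $a_i$ is sampled, leaving $G^{\text{VQA}}_i$.

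\textbf{Step 3: assemble.} Summing the surviving terms gives exactly the \emph{Da Capo} estimator. The removed terms and the leave-one-out baselines all have zero expectation, which proves the claim.

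I expect the main obstacle to be the rigorous handling of the within-step token ordering and of the memory window $T$. The leaf argument for driving tokens needs the factorization to make $d_{i,j}$ a sink node. I would state that assumption explicitly, since a trajectory generated before the memory would break it. The window $T$ only prunes dependencies further and so causes no trouble.
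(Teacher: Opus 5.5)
Your proposal is correct and rests on the same three facts as the paper's proof. Each block score has zero conditional mean; rollouts are independent, so other-rollout terms in the group mean vanish; and the discarded reward $\Delta^c_{i,j}=R_i-G^c_{i,j}$ is conditionally independent of the block given its context. The paper subtracts the two centered advantages and shows $\Delta^c_{i,j}-\frac{1}{K}\sum_{\ell}\Delta^c_{\ell,j}$ has zero-mean product with the score directly, while your leave-one-out step strips the baseline first, which is only a reorganization. Your explicit requirement that memory be sampled before the trajectory within a step is what the paper relies on implicitly through $y^{\text{drive}}_i=(\text{CoT}_i,\text{mem}_i,\tau^{\text{future}})$ and its remark that predicted trajectories are never fed forward.
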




See Appendix~\ref{sec:math-proof-cda} for the proof. In our implementation, we set $\lambda^{\text{VQA}}=0.5$, with $r_i^{\text{VQA}}$ equal to $1$ if the parsed answer is correct and $0$ otherwise. The driving reward is $r^{\text{Driving}}=\max(-\frac{0.2}{64}\sum_{i=1}^{64}\|p_i-p^{\text{GT}}_i\|_2, -1)$, where $p_i\in\mathbb{R}^2$, $i=\{1,2,\dots,64$\}, denotes a predicted 2D waypoint, with waypoints sampled every $0.1$ seconds over the $6.4$-second future trajectory, and $p_i^{\text{GT}}$ is the corresponding ground-truth waypoint. We train our model for $2$ epochs on the all-way stop dataset and $1$ epoch on the general driving dataset. See Appendix~\ref{sec:hyperparam} for detailed hyperparameter choices. In contrast to ``Dr.\ GRPO''~\citep{liu2025understanding}, standard-deviation normalization of the advantage is vital in our setting, as it naturally accounts for differences in reward scales, e.g., 0.1m vs.\ 10m ADE difference between rollouts; see Appendix~\ref{sec:math-proof-cda} for detailed rationale and Appendix~\ref{sec:aws_exp} for ablations.

%


\section{Experiments}
\label{sec:exp}

This section is organized as follows. Sec.~\ref{sec:aws_exp} studies whether AD-Memo can drive well on the task-specific all-way stop scenes where memory is particularly important, as the model must remember the arrival order at the crossing when multiple cars are stopping simultaneously; Sec.~\ref{sec:gen_exp} studies whether AD-Memo can handle general and diverse driving scenes. In both sections, we study whether AD-Memo better understands the driving scene by better answering related VQA questions, and whether our semi-closed loop RL algorithm, \textit{Da Capo} can further improve our model. Finally, in Sec.~\ref{sec:port_exp}, we study whether the memory produced by AD-Memo can be portable to other models.

\textbf{Baseline Settings.} For a fair comparison, unless otherwise specified, all models are based on the VLA backbone of a 2B, single camera-focused checkpoint of Alpamayo 2~\citep{nvidia2026alpamayo2super} (i.e., no action experts; see Appendix~\ref{sec:limit} for details). We mainly test four baselines: 1) the \textit{base model} mentioned above that has not been further finetuned; 2) \textit{no memory}, which goes through the same training recipe but without memory; 3) \textit{CoT-as-memory}, which goes through the same training recipe but uses the original past CoT text as memory; 4) To provide a standard for the performance, we also test NVIDIA's recently published \textit{flagship driving model, Alpamayo 2 Super 34B}.

\subsection{All-Way Stop}
\label{sec:aws_exp}

\textbf{Evaluation Settings.} For each of the 2479 clips in the test set, following prior work~\citep{tan2025latent}, we generate $K=6$ parallel trajectories with different memories. To expand our evaluation basis, we calculate and report the average performance in each keyframe before the ground truth stop, which is a total of 50533 keyframes. 
For those keyframes, we follow existing evaluation protocols~\citep{liang2026planning} and report the following metrics: (1) \textit{minADE, Average (Avg.) ADE, and Most Likely (ML.) ADE}~\citep{minade}, where ADE stands for Average Displacement Error. This metric measures the difference between the predicted trajectory and the ground truth human expert trajectory;  (2) \textit{Task-specific metric}, such as go Success Rate (go SR), stop Success Rate (stop SR), roll-through rate (Roll), position error ($\Delta_{\text{pos}}$) and stop duration error ($\Delta_{\text{dur}}$), which measures the performance of prediction stopping and moving within a particular threshold of the ground truth, the chance of not stopping at all when it should stop, and the spatial and temporal difference related to ground truth stop; (3) \textit{MCQ Accuracy (Acc.)}, which measures the scene understanding ability of the VLA as a streaming video agent. See Appendix~\ref{sec:metrics} for details on the metrics.

\begin{table}[ht]
    \scriptsize
    \centering
    \caption{The main result for all-way stop dataset, with the best results bolded. ``ref. mem'' stands for using ``reference memory'' as input; they are generated in the same way as SFT ground truth labels. $\downarrow$ means the lower the better, and vice versa. The result shows that AD-Memo works the best on both driving and question-answering tasks, and even far outperforms the flagship model, Alpamayo2 Super 34B (which has action expert and does not output logits, thus has no most likely ADE).}
    \begin{tabular}{c@{\hspace{1pt}}cccccccc@{\hspace{6pt}}c}
    \toprule
          & minADE\rlap{$\,\downarrow$}
        & Avg. ADE\rlap{$\,\downarrow$}
        & ML. ADE\rlap{$\,\downarrow$}
        & Stop SR\rlap{$\,\uparrow$}
        & Go SR\rlap{$\,\uparrow$}
        & $\Delta_{\text{pos}}$\rlap{$\,\downarrow$}
        & $\Delta_{\text{dur}}$\rlap{$\,\downarrow$}
        & Roll\rlap{$\,\downarrow$}
        & MCQ Acc.\rlap{$\,\uparrow$} \\
    \midrule
        Base Model & 1.386 & 2.351 & 2.393 & 82.06\% & 33.74\% & 1.725 & 1.871 & 11.79\% & 0\% \\
        Alpamayo 2 Super 34B & 0.993 & 2.293 & N/A & 78.75\% & 34.30\% & 2.182 & 1.488 & 15.94\% & 33.19\% \\
        \midrule
        No mem. (SFT only) & 1.102 & 2.185 & 2.318 & 87.04\% & 38.59\% & 1.358 & 1.564 & 6.62\% & 33.77\% \\
        CoT mem. (SFT only) & 1.096 & 2.171 & 2.296 & 86.89\% & 39.37\% & 1.377 & 1.531 & 6.69\% & 45.41\% \\
        AD-Memo (SFT only) & 1.049 & 2.121 & 2.256 & 87.48\% & 40.66\% & 1.257 & 1.469 & 6.05\% & 45.99\%\\
        \midrule
        CoT mem. (SFT+ref. mem) & 1.098 & 2.169 & 2.286 & 86.95\% & 39.45\% & 1.367 & 1.521 & 6.61\% & 45.24\% \\
        AD-Memo (SFT+ref. mem) & 0.963 & 1.963 & 2.086 & 87.70\% & 44.66\% & 1.185 & 1.296 & 5.84\% & \textbf{89.53\%} \\
        \midrule
        No mem. (SFT+\textit{Da Capo}) & \textbf{0.944} & 1.952 & 2.005 & 88.92\% & 43.76\% & 1.125 & 1.248 & \textbf{5.21\%} & 35.32\% \\
        CoT mem. (SFT+\textit{Da Capo}) & 0.968 & 1.906 & 1.942 & 89.20\% & 44.88\% & 1.157 & 1.194 & 5.31\% & 45.87\% \\
        AD-Memo (Ours) & 0.951 & \textbf{1.866} & \textbf{1.915} & \textbf{89.26\%} & \textbf{45.01\%} & \textbf{1.070} & \textbf{1.183} & 5.26\% & 51.66\% \\
        
    \bottomrule
    \end{tabular}
    
    \label{tab:main_aws}
\end{table}

\textbf{Results.} Tab.~\ref{tab:main_aws} illustrates the result, which clearly shows that AD-Memo, both after SFT and RL, outperforms all other baselines (even Alpamayo2 Super 34B) under the same training recipe, which shows the effectiveness of AD-Memo. Notably, with the reference memory, AD-Memo works much better than its SFT counterpart with self-generated memory, which shows the potential of improvement from accurate language memory, and also necessitates RL training. CoT-as-memory works similar or slightly better in driving than no memory and largely helps question answering, but shows much lower potential as the MCQ accuracy of reference and self-generated memory is almost the same for CoT-as-memory. This is because CoT focuses more on ego action and misses most objects on the road (Appendix~\ref{sec:datastats}). Also, all three types of memory after RL significantly improved, which proves the effectiveness of \textit{Da Capo}. 

\textbf{Qualitative example.} Fig.~\ref{fig:qual_img} shows an example of how memory aids driving. In this case, both ego and the left car stay still at the intersection. Without memory, the agent cannot decide which car to go first. In fact, the car on the left arrives prior to ego car, which is recorded in the memory as ``left car arrives at the crossing''. This information does not exist in the original CoT without memory. Thus, as Fig.~\ref{fig:qual_res} shows, our method achieves more precise control that better fits ground truth.

\begin{figure}[t]
    \centering
    \includegraphics[width=\linewidth]{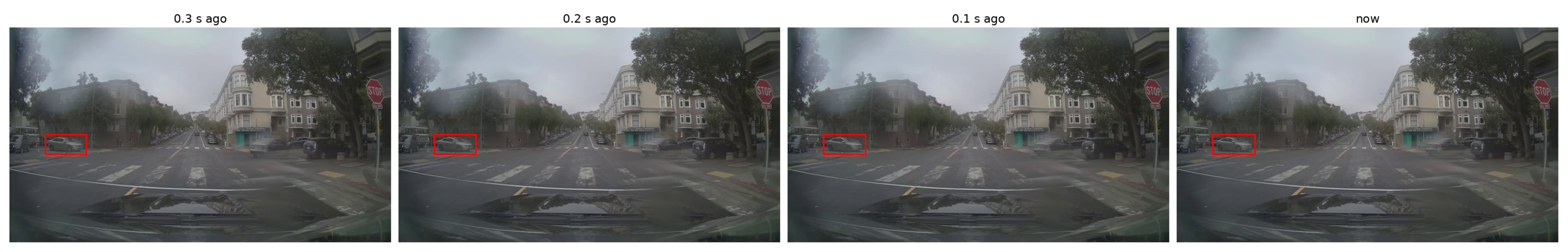}
    \caption{An example of how memory aids driving. The ego and the car on the left (circled in red) are both stopping. Without memory, the agent does not know whether it should move first.}
    \label{fig:qual_img}
\end{figure}

\begin{figure}[t]
    \centering
    \subfigure[No memory]{
    \includegraphics[width=0.3\linewidth]{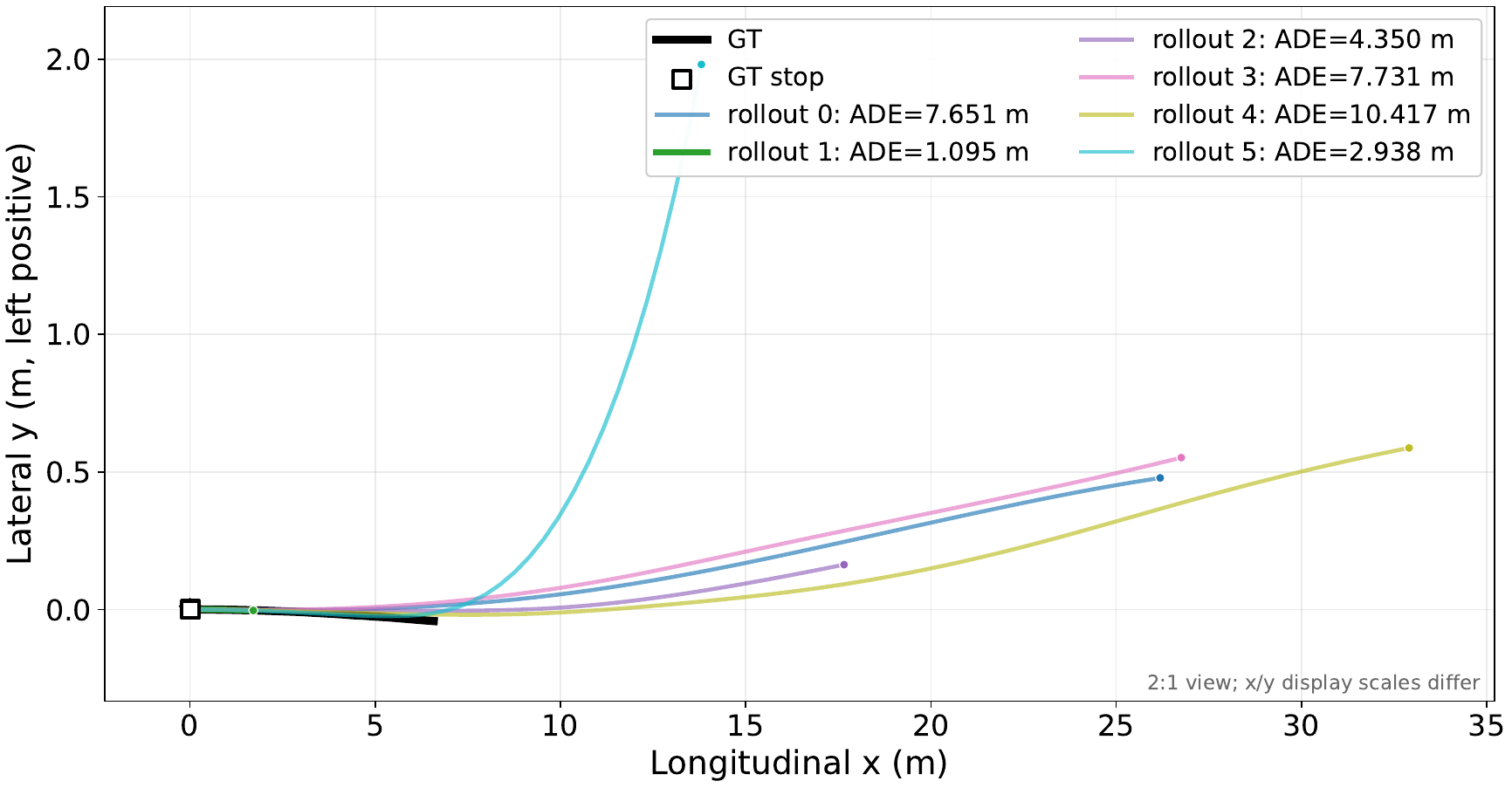}}
    \subfigure[CoT as memory]{
    \includegraphics[width=0.3\linewidth]{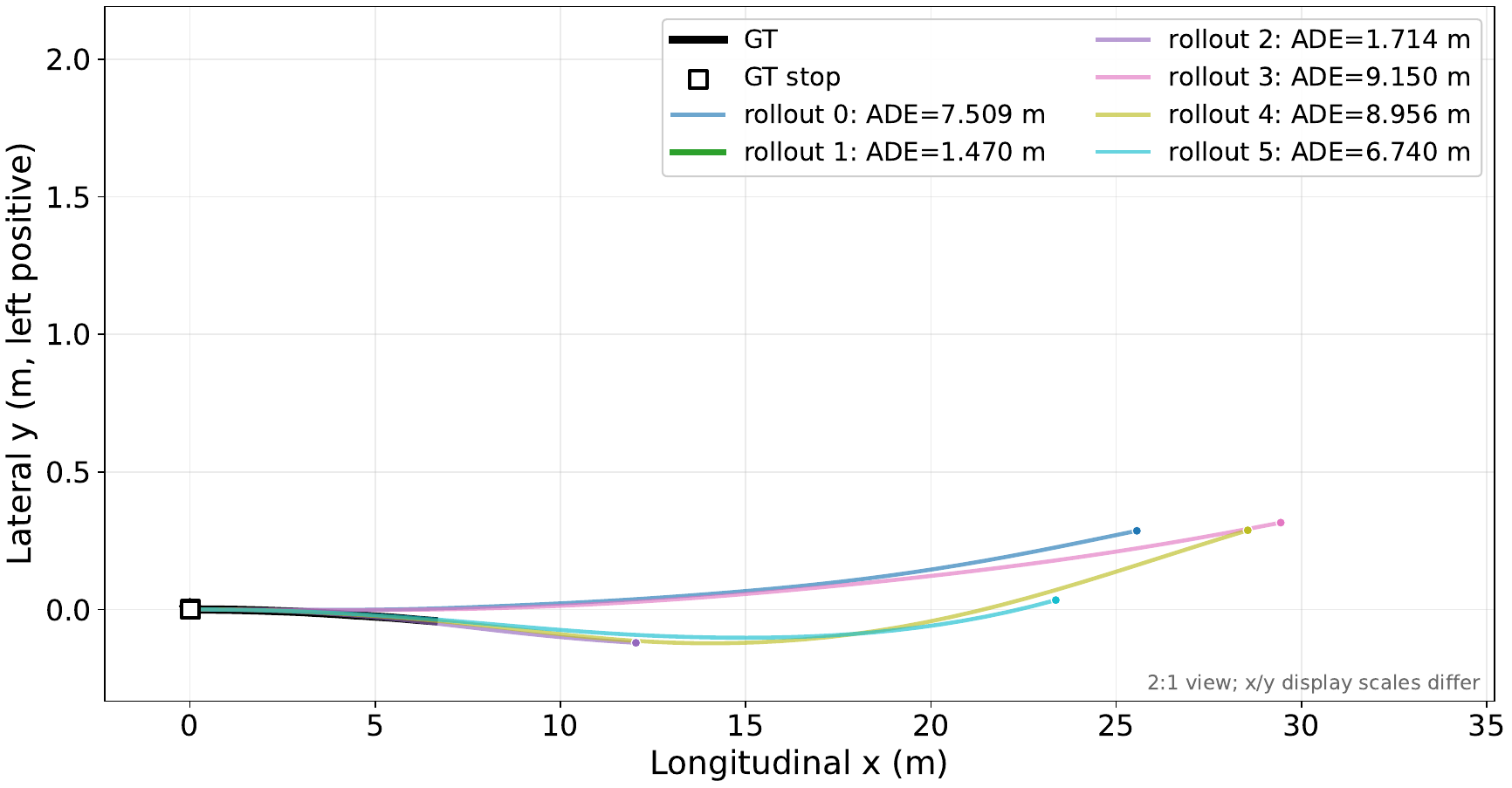}}
    \subfigure[Ours]{
    \includegraphics[width=0.3\linewidth]{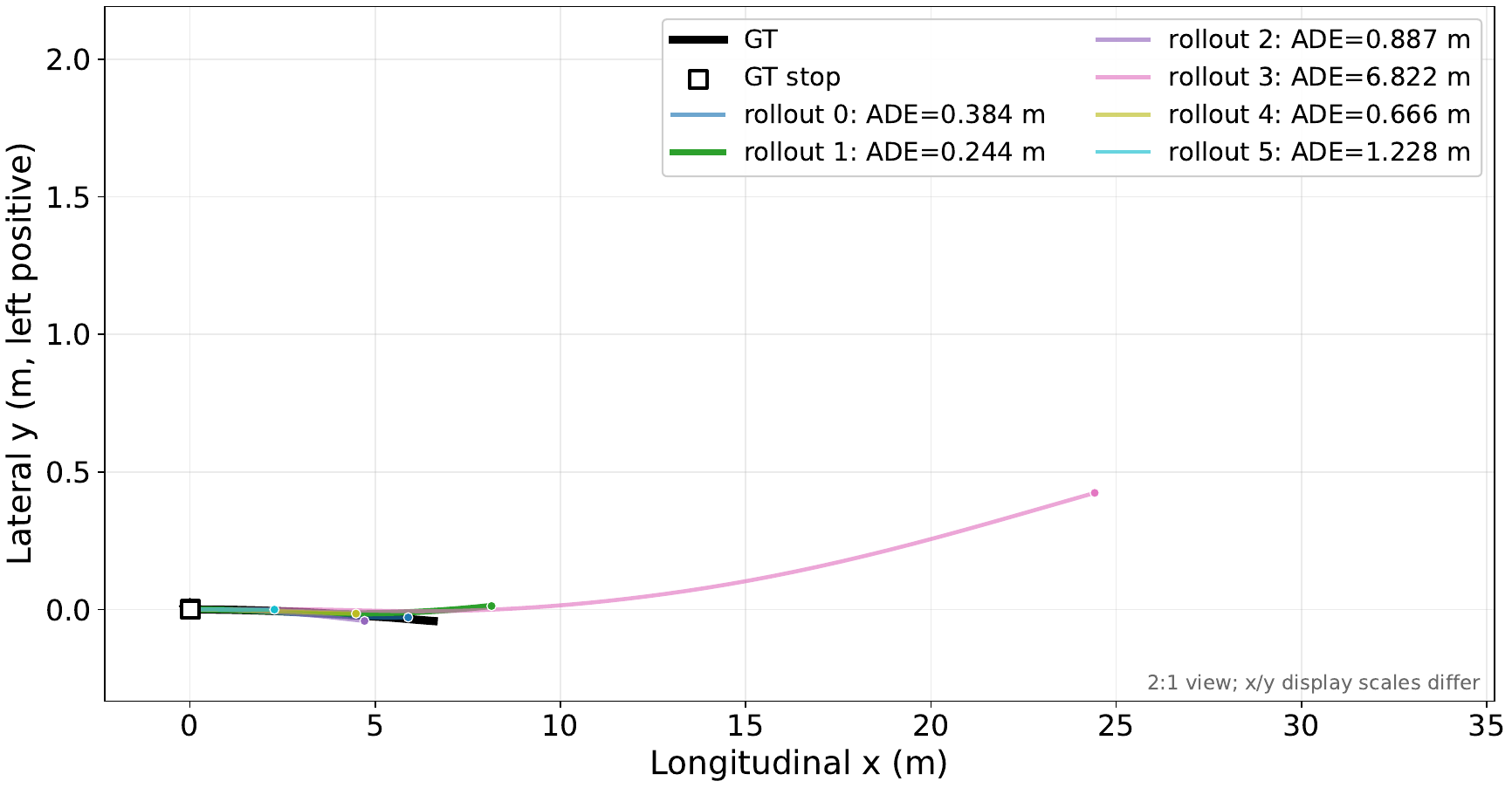}}
    \caption{The trajectory of different methods with the input from Fig.~\ref{fig:qual_img}. The result shows that our method can better recognize that ego should stop, which yields lower ADE results.}
    \label{fig:qual_res}
\end{figure}

\textbf{Ablation on the components of Da Capo.} To test the necessity of introducing standard deviation in our RL algorithm (see Appendix~\ref{sec:math-proof-cda} for detailed rationale), we test our RL algorithm based on the same AD-Memo (SFT-only) checkpoint against the following baselines: \textit{Capo} (no standard deviation), using a constant scaling factor on advantage $(=0.4)$, and trajectory-level GRPO. The results summarized in Tab.~\ref{tab:ablation_dacapo} show that adding standard deviation to our algorithm improves the empirical performance. The result also shows that all variants of \textit{Capo} far outperform GRPO with trajectory-level reward, which proves the benefit of removing causally unrelated reward terms. See RL training curves and ablations on $\lambda^{\text{VQA}}$ in Appendix~\ref{sec:curve} and~\ref{sec:ablation_lambda}.

\begin{table}[t]
    \centering
    \scriptsize
    \caption{The ablation on \textit{Da Capo}, which proves that a better credit assignment significantly improves final performance over trajectory-level GRPO. The standard deviation in advantage also helps by introducing a natural scaling factor that balances rewards of different magnitudes.}
    \begin{tabular}{c@{\hspace{4pt}}ccccccccc}
    \toprule
        & minADE\rlap{$\,\downarrow$} & Avg. ADE\rlap{$\,\downarrow$} & ML. ADE\rlap{$\,\downarrow$} & Stop SR\rlap{$\,\uparrow$} & Go SR\rlap{$\,\uparrow$} & $\Delta_\text{pos}$\rlap{$\,\downarrow$} & $\Delta_{\text{dur}}$\rlap{$\,\downarrow$} & Roll\rlap{$\,\downarrow$} & MCQ Acc.\rlap{$\,\uparrow$} \\
    \midrule
       \textit{Da Capo} (Ours) & 0.951 & \textbf{1.866} & \textbf{1.915} & \textbf{89.26\%} & \textbf{45.01\%}
       & \textbf{1.070} & \textbf{1.183} & 5.26\% & \textbf{51.66\%} \\
       \textit{Capo} (no std) & \textbf{0.942} & 1.952 & 2.047 & 89.22\% & 44.27\%
       & 1.130 & 1.277 & \textbf{5.01\%} & 50.91\% \\
       \textit{Capo} + const. scaling & 0.943 & 1.911 & 1.979 & 89.21\% & 44.74\%
       & 1.122 & 1.227 & 5.03\% & 51.57\%  \\
       GRPO (traj-level reward) & 1.007 & 2.113 & 2.194 & 87.11\% & 40.94\% & 1.312 & 1.392 & 6.28\% & 48.16\% \\
       \midrule
       AD-Memo (SFT only) & 1.049 & 2.121 & 2.256 & 87.48\% & 40.66\% & 1.257 & 1.469 & 6.05\% & 45.99\%\\
    \bottomrule
    \end{tabular}
    
    \label{tab:ablation_dacapo}
\end{table}

\subsection{General Driving Scenes}
\label{sec:gen_exp}

\begin{table}[t]
    \centering
    \scriptsize
     \caption{The main result for the general driving dataset, where the best results are bolded; the result shows that our method performs the best. ADE, FDE and corner distance are all the lower the better; the MCQ accuracy is the higher the better.}
    \begin{tabular}{c@{\hspace{4pt}}cccccccc}
         \toprule
          & minADE\rlap{$\,\downarrow$} & Avg. ADE\rlap{$\,\downarrow$} & ML. ADE\rlap{$\,\downarrow$} & minFDE\rlap{$\,\downarrow$} & Avg. FDE\rlap{$\,\downarrow$} &
          ML. FDE\rlap{$\,\downarrow$} & Corner\rlap{$\,\downarrow$} & MCQ Acc.\rlap{$\,\uparrow$} \\
         \midrule
         Base Model
          & 1.032 & 1.981 & 2.049 & 2.853 & 5.937 & 6.194 & 1.001 & 0 \\
         Alpamayo 2 Super 34B
          & 0.954 & 2.081 & N/A & \textbf{2.556} & 6.141  & N/A & 0.902 & 25.71\% \\
         \midrule
          No mem. (SFT only)
          & 1.007 & 2.033 & 2.075 & 2.716 & 6.108 & 6.279 & 0.968 & 58.02\% \\
         CoT mem. (SFT only) 
          & 1.031 & 2.059 & 2.105 & 2.787 & 6.192 & 6.375 & 0.993 & 57.51\% \\
         AD-Memo (SFT only)
          & \textbf{1.005} & 2.011 & 2.049 & 2.705 & 6.042 & 6.196 & \textbf{0.967} & \textbf{65.63\%} \\
         \midrule
         CoT mem. (SFT+ref. mem)
          & 1.034 & 2.064 & 2.105 & 2.795 & 6.207 & 6.373 & 0.995 & 57.36\% \\
         AD-Memo (SFT+ref. mem)
          & 1.012 & 2.014 & 2.051 & 2.734 & 6.053 & 6.208 & 0.974 & 67.65\% \\
         \midrule
          No mem. (SFT+\textit{Da Capo}) & 1.090 & 1.883 & 1.891 & 3.042 & 5.583 & 5.648 & 1.062 & 57.43\% \\
         
          CoT mem. (SFT+\textit{Da Capo})
            & 1.105 & 1.907 & 1.918 & 3.071 & 5.631 & 5.707 & 1.079 & 56.94\% \\
         AD-Memo (Ours)
          & 1.091 & \textbf{1.864} & \textbf{1.869} & 3.027 & \textbf{5.512} & \textbf{5.576} & 1.066 & 65.51\% \\
         \bottomrule
    \end{tabular}
   
    \label{tab:main_gen}
\end{table}

\textbf{Evaluation Settings.} For general driving scenes, we use the same ADE metrics and MCQ accuracy as those in Sec.~\ref{sec:aws_exp}. We additionally report: (1) \textit{minFDE, average FDE and most likely FDE} (Final Displacement Error), which measures the difference between the car's final position between the predicted and ground truth trajectory; (2) \textit{corner distance} (See Appendix~\ref{sec:metrics} for details), which describes the discrepancy of the car's bounding box between ground truth and prediction. In total, we report the average result of 9112 clips over 145462 keyframes.

\textbf{Results.} Tab.~\ref{tab:main_gen} illustrates the result on the general driving dataset. Our method outperforms all other baselines. Notably, different from the all-way stop dataset (Sec.~\ref{sec:aws_exp}), RL improves average and most likely discrepancy to ground truth but not the minimum of rollouts. This is because with RL, the policy distribution is more concentrated and thus leads to higher ADE with inaccurate intention prediction. For example, the model may concentrate on predictions of turning left at the crossing, while the ground truth is turning right. In contrast, at an all-way stop crossing, the intention of passing is clear. See Appendix~\ref{sec:qualitative} for an example of how our memory benefits general driving.

\subsection{Memory Portability}
\label{sec:port_exp}

\begin{wrapfigure}[18]{r}{0.34\linewidth}
    \centering
    \includegraphics[width=\linewidth]{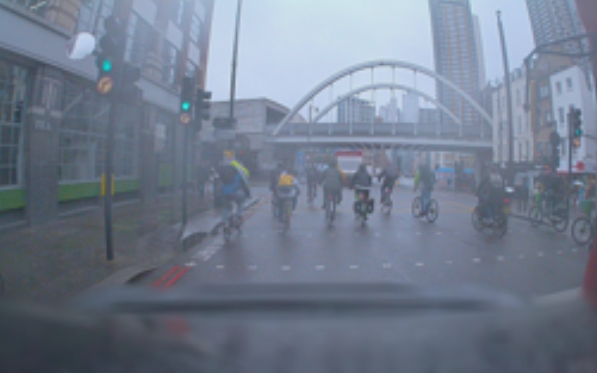}
    \caption{An image paired with the question ``What is the color of traffic lights?'' Repeated statements in the unmerged memory that ``the ego vehicle is now stopped'' mislead GPT-5.6 Luna into incorrectly inferring that the traffic light was red.}
    \label{fig:merge}
\end{wrapfigure}

One crucial advantage of AD-Memo is its \textit{portability}: it can improve the model's own performance and serve as a plug-and-play enhancement for other models. To evaluate this portability, we use our SFT-trained checkpoint to process each clip sequentially, generating memory from memories generated at previous frames and the current frame. We then ask GPT-5.6 Luna to answer questions from LingoQA~\citep{marcu2024lingoqa} and WaymoQA~\citep{yu2025waymoqa} (see Appendix~\ref{sec:qas}), using only the final frame and the generated memory as context. This setup allows the answering model to access information from earlier frames through memory generated by our model. The results in Tab.~\ref{tab:port} show that this memory improves another model's VQA performance in a zero-shot setting.

\begin{table}[t]
    \centering
    \small
    \caption{The accuracy of GPT-5.6 Luna on LingoQA and WaymoQA, which shows that our memory can enhance the scene understanding of other models in a plug-and-play manner.}
    \begin{tabular}{ccccc}
    \toprule
         & Our memory + last frame & Unmerged memory + last frame & Only last frame & Full video \\
        \midrule
       WaymoQA & 71.65\% & 68.3\% & 66.63\% & 75\% \\
       LingoQA & 67\% & 60.4\% & 62.6\% & 70\% \\
    \bottomrule
    \end{tabular}
    
    \label{tab:port}
\end{table}

\textbf{What matters for language memories?} When we curate ``ground truth / reference memory'' as SFT training labels, we ask the model to review the memories generated for each frame, and \textit{merge} them to keep track of the same object (See Appendix~\ref{sec:datastats-gen} for details). To verify the importance of this merging step, we train our checkpoint on the memory-writing task with merged memory and unmerged memory respectively, and test the portability of the memory generated by such checkpoint. The result is illustrated in Tab.~\ref{tab:port}, which clearly shows that unmerged memory performs significantly worse.

A closer inspection in the wording of the two versions of memory suggests that the performance gap can be attributed to three reasons. First, the lack of consistent references (e.g., the lead car, the pedestrian on the left) in standalone description of individual frames caused the memory to repeatedly identify the same item, increasing more than twice in length (16.57 words vs.\ 51.62 words on average) than the merged memory, making it not only undesirable for memory, but also distracting; second, standalone memory contains 97-98\% fewer temporal words such as ``continue'' and ``remain'', reducing the temporal information of the objects; finally, as shown in Fig.~\ref{fig:merge}, when the standalone memory repeatedly states ``ego vehicle is stopped'', the model can be misled and determine that the color of traffic lights were mostly red. Thus, \textit{temporal information in memory is a key to success.}

\section{Related Works}


\textbf{VLA driving agents.} Empowered by the recent success of LLMs~\citep{singh2025openai,openai2026tenadvances} and Vision-Language Models (VLMs)~\citep{li2025survey}, VLA models~\citep{brohan2023rt,o2024open} are an emerging solution to the long-standing challenge of fully autonomous driving~\citep{wang2025alpamayo,li2026unidrivevla,zhou2026qwen}. Compared to the conventional ``perception-decision-action'' modular pipeline~\citep{urmson2009autonomous,8443742}, Vision-Action (VA) models that map sensor input to action~\citep{hu2023planning, liao2025diffusiondrive}, and driving VLMs that only predict high-level meta-actions~\citep{huang2024drivlme,drivemlm}, VLA driving agents utilize the reasoning abilities of their VLM backbones while producing end-to-end trajectories for control. This idea has attracted considerable attention recently~\citep{tan2025latent,fu2025minddrive,peng2026counterfactual,li2026drivevla}. Following this idea, we make full use of VLM backbone knowledge with language memory and build AD-Memo on Alpamayo~\citep{wang2025alpamayo}.


\textbf{Driving agents with memory.} Memory-dependent tasks pose a long-standing and important challenge in autonomous driving~\citep{liang2026planning}. Prior solutions mainly fall into two groups: \textit{latent vector}~\citep{leng2025occupancy,song2025don} and \textit{in-context learning}~\citep{wen2024dilu,mei2024continuously,zhang2026pram}. The former maintains a bank of dense representation vectors extracted from past inputs. While some methods keep KV-memory bank~\citep{shao2023reasonnet} or perform test-time training~\citep{kim2026think}, the prevailing approach is to store compact tokens produced by Q-Former~\citep{li2023blip, fu2025orion, zhang2025adadrive,huang2026mindvla}, ConvLSTM~\citep{shi2015convolutional, 10736002}, or other encoders~\citep{salazar2024tlcfuse}; such memory is usually accessed via cross-attention. The latter often stores structured records of prior driving experiences (e.g., scene descriptions and responses), which are added to the agent's input as prompts in a Retrieval-Augmented Generation (RAG)~\citep{lewis2020retrieval} manner~\citep{cui2024board,yao2025lilodriver}. While other memory methods also exist, such as directly storing past frames for reasoning agents~\citep{zheng2026driveagent,baimbetova2026pedestrian} and maintaining knowledge graphs~\citep{li2026vln}, AD-Memo is the first VLA driving agent with in-episode language memory.




\textbf{VLA agents with in-episode language memory.} Several works in the robotics community explore in-episode language memory in VLA agents. Notes-to-Self~\citep{haresh2026self} records grounding information, plans, and the agent's actions, and updates memory when the model determines that a subtask has been completed; MEM~\citep{torne2026mem}
uses language memory to record the agent's plans and actions for a high-level VLM, which in turn instructs a low-level VLA; Goal2Skill~\citep{liu2026goal2skill} and $\tau_0$-VLA~\citep{cai2026tau_0} are similar but incorporate additional mechanisms for verbal self-reflection and verification. AD-Memo differs from these works in two ways: 1) None of these works studies autonomous driving, where language memory remains surprisingly underexplored. This domain presents new challenges regarding memory content (Sec.~\ref{sec:lbm}); 2) None of the above works uses RL for training, whereas our work proposes a novel semi-closed-loop RL algorithm, representing a methodological advance.


\section{Conclusion}
\label{sec:conclusion}

In this paper, we propose AD-Memo, the first VLA driving agent with in-episode language-based memory that is not only effective but also brief, explainable, and portable. The agent outputs memory alongside CoT when driving, and the memory is then appended to the future input. To train this model, we curate multiple datasets, and propose \textit{Da Capo}, a novel semi-closed-loop RL algorithm that replays ground-truth driving states while using the agent's self-generated memory. We empirically demonstrate that AD-Memo improves driving performance, performs better on VQA tasks for scene understanding, and produces memory portable to other models. We believe that AD-Memo offers a timely and effective solution for memory-dependent tasks in autonomous driving.

\subsection*{Reproducibility statement}

We describe the detail of how we curate our dataset in Appendix~\ref{sec:datastats}. The experiment metric, external benchmarks and hyperparameters are described in Appendix~\ref{sec:metrics}, Appendix~\ref{sec:qas}, and Appendix~\ref{sec:hyperparam} respectively. We also report our computational resource consumption in Appendix~\ref{sec:resource}. 





\bibliography{iclr2027_conference}

@article{kusano2025comparison,
  title={Comparison of Waymo Rider-Only crash rates by crash type to human benchmarks at 56.7 million miles},
  author={Kusano, Kristofer D and Scanlon, John M and Chen, Yin-Hsiu and McMurry, Timothy L and Gode, Tilia and Victor, Trent},
  journal={Traffic Injury Prevention},
  year={2025}
}

@InProceedings{brohan2023rt,
  title = 	 {RT-2: Vision-Language-Action Models Transfer Web Knowledge to Robotic Control},
  author =       {Zitkovich, Brianna and Yu, Tianhe and Xu, Sichun and Xu, Peng and Xiao, Ted and Xia, Fei and Wu, Jialin and Wohlhart, Paul and Welker, Stefan and Wahid, Ayzaan and Vuong, Quan and Vanhoucke, Vincent and Tran, Huong and Soricut, Radu and Singh, Anikait and Singh, Jaspiar and Sermanet, Pierre and Sanketi, Pannag R. and Salazar, Grecia and Ryoo, Michael S. and Reymann, Krista and Rao, Kanishka and Pertsch, Karl and Mordatch, Igor and Michalewski, Henryk and Lu, Yao and Levine, Sergey and Lee, Lisa and Lee, Tsang-Wei Edward and Leal, Isabel and Kuang, Yuheng and Kalashnikov, Dmitry and Julian, Ryan and Joshi, Nikhil J. and Irpan, Alex and Ichter, Brian and Hsu, Jasmine and Herzog, Alexander and Hausman, Karol and Gopalakrishnan, Keerthana and Fu, Chuyuan and Florence, Pete and Finn, Chelsea and Dubey, Kumar Avinava and Driess, Danny and Ding, Tianli and Choromanski, Krzysztof Marcin and Chen, Xi and Chebotar, Yevgen and Carbajal, Justice and Brown, Noah and Brohan, Anthony and Arenas, Montserrat Gonzalez and Han, Kehang},
  booktitle = {CoRL},
  year = 	 {2023},
}

@inproceedings{o2024open,
  title={Open x-embodiment: Robotic learning datasets and rt-x models: Open x-embodiment collaboration},
  author={O’Neill, Abby and Rehman, Abdul and Maddukuri, Abhiram and Gupta, Abhishek and Padalkar, Abhishek and Lee, Abraham and Pooley, Acorn and Gupta, Agrim and Mandlekar, Ajay and Jain, Ajinkya and others},
  booktitle={ICRA},
  year={2024}
}

@article{bai2025qwen3,
  title={Qwen3-vl technical report},
  author={Bai, Shuai and Cai, Yuxuan and Chen, Ruizhe and Chen, Keqin and Chen, Xionghui and Cheng, Zesen and Deng, Lianghao and Ding, Wei and Gao, Chang and Ge, Chunjiang and others},
  journal={arXiv preprint arXiv:2511.21631},
  year={2025}
}

@misc{waymo2026scale,
  author       = {Tekedra Mawakana and Dmitri Dolgov},
  title        = {Accelerating Our Global Growth: Waymo Raises \$16 Billion Investment Round},
  year         = {2026},
  url          = {https://waymo.com/blog/2026/02/waymo-raises-usd16-billion-investment-round/},
}

@article{wang2025alpamayo,
  title={Alpamayo-r1: Bridging reasoning and action prediction for generalizable autonomous driving in the long tail},
  author={Wang, Yan and Luo, Wenjie and Bai, Junjie and Cao, Yulong and Che, Tong and Chen, Ke and Chen, Yuxiao and Diamond, Jenna and Ding, Yifan and Ding, Wenhao and others},
  journal={arXiv preprint arXiv:2511.00088},
  year={2025}
}

@article{zhou2026qwen,
  title={Qwen-Drive-1.0: An Initial Step towards a Vision-Language Foundation Model for Autonomous Driving},
  author={Zhou, Xin and Zhao, Zongchuang and Yang, Zhibo and Li, Mingsheng and Zhong, Humen and Bai, Shuai and Chu, Du and Chen, Ruizhe and Li, Zhaohai and Tang, Jun and others},
  journal={arXiv preprint arXiv:2609.00111},
  year={2026}
}

@article{li2026unidrivevla,
  title={Unidrivevla: Unifying understanding, perception, and action planning for autonomous driving},
  author={Li, Yongkang and Zhou, Lijun and Yan, Sixu and Liao, Bencheng and Yan, Tianyi and Xiong, Kaixin and Chen, Long and Xie, Hongwei and Wang, Bing and Chen, Guang and others},
  journal={arXiv preprint arXiv:2604.02190},
  year={2026}
}

@inproceedings{dao2022flashattention,
  title={Flashattention: Fast and memory-efficient exact attention with io-awareness},
  author={Dao, Tri and Fu, Dan and Ermon, Stefano and Rudra, Atri and R{\'e}, Christopher},
  booktitle={NeurIPS},
  year={2022}
}

@article{liang2026planning,
  title={Planning-aligned Token Compression for Long-Context Autonomous Driving},
  author={Liang, Zhixuan and Chen, Yuxiao and You, Yurong and Karkus, Peter and Ding, Wenhao and Li, Boyi and Popov, Alexander and Wang, Yan and Igl, Maximilian and Li, Yiming and others},
  journal={IEEE Robotics and Automation Letters},
  year={2026}
}

@article{kumar2025occluded,
  title={Occluded nuScenes: A Multi-Sensor Dataset for Evaluating Perception Robustness in Automated Driving},
  author={Kumar, Sanjay and Brophy, Tim and Mohandas, Reenu and Grua, Eoin Martino and Sistu, Ganesh and Donzella, Valentina and Eising, Ciaran},
  journal={arXiv preprint arXiv:2510.18552},
  year={2025}
}

@inproceedings{shi2026memoryvla,
  title={Memoryvla: Perceptual-cognitive memory in vision-language-action models for robotic manipulation},
  author={Shi, Hao and Xie, Bin and Liu, Yingfei and Sun, Lin and Liu, Fengrong and Wang, Tiancai and Zhou, Erjin and Fan, Haoqiang and Zhang, Xiangyu and Huang, Gao},
  booktitle={ICLR},
  year={2026}
}

@inproceedings{zeng2026streamforest,
  title={Streamforest: Efficient online video understanding with persistent event memory},
  author={Zeng, Xiangyu and Qiu, Kefan and Zhang, Qingyu and Li, Xinhao and Wang, Jing and Li, Jiaxin and Yan, Ziang and Tian, Kun and Tian, Meng and Zhao, Xinhai and others},
  booktitle={NeurIPS},
  year={2025}
}

@inproceedings{zeng2025vision,
  title={Are vision llms road-ready? a comprehensive benchmark for safety-critical driving video understanding},
  author={Zeng, Tong and Wu, Longfeng and Shi, Liang and Zhou, Dawei and Guo, Feng},
  booktitle={SIGKDD},
  year={2025}
}

@inproceedings{wei2022chain,
  title={Chain-of-thought prompting elicits reasoning in large language models},
  author={Wei, Jason and Wang, Xuezhi and Schuurmans, Dale and Bosma, Maarten and Ichter, Brian and Xia, Fei and Chi, Ed and Le, Quoc V and Zhou, Denny},
  booktitle={NeurIPS},
  year={2022}
}

@inproceedings{Bengio2015ScheduledSampling,
  author    = {Samy Bengio and Oriol Vinyals and Navdeep Jaitly and Noam Shazeer},
  title     = {Scheduled Sampling for Sequence Prediction with Recurrent Neural Networks},
  booktitle = {NIPS},
  year      = {2015}
}

@article{liu2022understanding,
  title={Understanding time variations of dnn inference in autonomous driving},
  author={Liu, Liangkai and Wang, Yanzhi and Shi, Weisong},
  journal={arXiv preprint arXiv:2209.05487},
  year={2022}
}

@inproceedings{li2023blip,
  title={Blip-2: Bootstrapping language-image pre-training with frozen image encoders and large language models},
  author={Li, Junnan and Li, Dongxu and Savarese, Silvio and Hoi, Steven},
  booktitle={ICML},
  year={2023}
}

@inproceedings{shi2015convolutional,
  title={Convolutional LSTM network: A machine learning approach for precipitation nowcasting},
  author={Shi, Xingjian and Chen, Zhourong and Wang, Hao and Yeung, Dit-Yan and Wong, Wai-Kin and Woo, Wang-chun},
  booktitle={NIPS},
  year={2015}
}

@article{huang2026mindvla,
  title={Mindvla-u1: Vla beats va with unified streaming architecture for autonomous driving},
  author={Huang, Yuzhou and Zhu, Benjin and Lu, Hengtong and Huang, Victor Shea-Jay and Zhang, Haiming and Chen, Wei and Dai, Jifeng and Xie, Yan and Li, Hongsheng},
  journal={arXiv preprint arXiv:2605.12624},
  year={2026}
}

@inproceedings{shao2023reasonnet,
  title={Reasonnet: End-to-end driving with temporal and global reasoning},
  author={Shao, Hao and Wang, Letian and Chen, Ruobing and Waslander, Steven L and Li, Hongsheng and Liu, Yu},
  booktitle={CVPR},
  year={2023}
}

@inproceedings{salazar2024tlcfuse,
  title={TLCFuse: Temporal Multi-Modality Fusion Towards Occlusion-Aware Semantic Segmentation},
  author={Salazar-Gomez, Gustavo and Liu, Wenqian and Diaz-Zapata, Manuel and Sierra-Gonzalez, David and Laugier, Christian},
  booktitle={IEEE Intelligent Vehicles Symposium},
  year={2024}
}

@inproceedings{vaswani2017attention,
  title={Attention is all you need},
  author={Vaswani, Ashish and Shazeer, Noam and Parmar, Niki and Uszkoreit, Jakob and Jones, Llion and Gomez, Aidan N and Kaiser, {\L}ukasz and Polosukhin, Illia},
  booktitle={NIPS},
  year={2017}
}

@article{10736002,
  author={Gerasyov, Matvey and Savchenko, Andrey V. and Makarov, Ilya},
  journal={IEEE Access}, 
  title={Enhancing Autonomous Driving With Spatial Memory and Attention in Reinforcement Learning}, 
  year={2024}}

@inproceedings{leng2025occupancy,
  title={Occupancy learning with spatiotemporal memory},
  author={Leng, Ziyang and Yang, Jiawei and Yi, Wenlong and Zhou, Bolei},
  booktitle={ICCV},
  year={2025}
}

@inproceedings{song2025don,
  title={Don’t shake the wheel: Momentum-aware planning in end-to-end autonomous driving},
  author={Song, Ziying and Jia, Caiyan and Liu, Lin and Pan, Hongyu and Zhang, Yongchang and Wang, Junming and Zhang, Xingyu and Xu, Shaoqing and Yang, Lei and Luo, Yadan},
  booktitle={CVPR},
  year={2025}
}

@article{kim2026think,
  title={Think as Needed: Geometry-Driven Adaptive Perception for Autonomous Driving},
  author={Kim, Donghyun and Park, Jaehyoung},
  journal={arXiv preprint arXiv:2605.10117},
  year={2026}
}

@inproceedings{fu2025orion,
  title={Orion: A holistic end-to-end autonomous driving framework by vision-language instructed action generation},
  author={Fu, Haoyu and Zhang, Diankun and Zhao, Zongchuang and Cui, Jianfeng and Liang, Dingkang and Zhang, Chong and Zhang, Dingyuan and Xie, Hongwei and Wang, Bing and Bai, Xiang},
  booktitle={ICCV},
  year={2025}
}

@inproceedings{zhang2025adadrive,
  title={Adadrive: Self-adaptive slow-fast system for language-grounded autonomous driving},
  author={Zhang, Ruifei and Xie, Junlin and Zhang, Wei and Chen, Weikai and Tan, Xiao and Wan, Xiang and Li, Guanbin},
  booktitle={ICCV},
  year={2025}
}

@inproceedings{zheng2026driveagent,
  title={Driveagent-r1: Advancing vlm-based autonomous driving with active perception and hybrid thinking},
  author={Zheng, Weicheng and Mao, Xiaofei and Ye, Nanfei and Li, Pengxiang and Zhan, Kun and Lang, Xianpeng and Zhao, Hang},
  booktitle={ICLR},
  year={2026}
}

@article{li2026vln,
  title={VLN-AVP: Zero-Shot Vision-Language Navigation with Hybrid Long-Short-Term Memory for Autonomous Valet Parking},
  author={Li, Yijian and Mu, Xiangru and Li, Changze and Shi, Hantian and Cai, Jiyuan and Cai, Jia and Liu, Xiaoxue and Sun, Yajing and Yang, Ming and Qin, Tong},
  journal={arXiv preprint arXiv:2607.17767},
  year={2026}
}

@inproceedings{wen2024dilu,
  title={Dilu: A knowledge-driven approach to autonomous driving with large language models},
  author={Wen, Licheng and Fu, Daocheng and Li, Xin and Cai, Xinyu and Ma, Tao and Cai, Pinlong and Dou, Min and Shi, Botian and He, Liang and Qiao, Yu},
  booktitle={ICLR},
  year={2024}
}

@article{zhang2026pram,
  title={PRAM-R: A Perception-Reasoning-Action-Memory Framework with LLM-Guided Modality Routing for Adaptive Autonomous Driving},
  author={Zhang, Yi and Zhang, Xian and Zhao, Saisi and Song, Yinglei and Wu, Chengdong and Petrovic, Nenad and Knoll, Alois},
  journal={arXiv preprint arXiv:2603.04222},
  year={2026}
}

@inproceedings{mei2024continuously,
  title={Continuously learning, adapting, and improving: A dual-process approach to autonomous driving},
  author={Mei, Jianbiao and Ma, Yukai and Yang, Xuemeng and Wen, Licheng and Cai, Xinyu and Li, Xin and Fu, Daocheng and Zhang, Bo and Cai, Pinlong and Dou, Min and others},
  booktitle={NeurIPS},
  year={2024}
}

@inproceedings{baimbetova2026pedestrian,
  title={Pedestrian-Aware LLM-Driven Behavioral Planning for Autonomous Vehicles},
  author={Baimbetova, Aidana and Yonekura, Haruki and Rizk, Hamada and Yamaguchi, Hirozumi},
  booktitle={ITSC},
  year={2026}
}

@article{yao2025lilodriver,
  title={Lilodriver: A lifelong learning framework for closed-loop motion planning in long-tail autonomous driving scenarios},
  author={Yao, Huaiyuan and Li, Pengfei and Jin, Bu and Zheng, Yupeng and Liu, An and Mu, Lisen and Su, Qing and Zhang, Qian and Chen, Yilun and Li, Peng},
  journal={arXiv preprint arXiv:2505.17209},
  year={2025}
}

@inproceedings{cui2024board,
  author={Cui, Can and Yang, Zichong and Zhou, Yupeng and Peng, Juntong and Park, Sung-Yeon and Zhang, Cong and Ma, Yunsheng and Cao, Xu and Ye, Wenqian and Feng, Yiheng and Panchal, Jitesh and Li, Lingxi and Chen, Yaobin and Wang, Ziran},
  booktitle={IROS}, 
  title={On-Board Vision-Language Models (VLMs) for Personalized Motion Control of Autonomous Vehicles}, 
  year={2025},
}

@inproceedings{lewis2020retrieval,
  title={Retrieval-augmented generation for knowledge-intensive nlp tasks},
  author={Lewis, Patrick and Perez, Ethan and Piktus, Aleksandra and Petroni, Fabio and Karpukhin, Vladimir and Goyal, Naman and K{\"u}ttler, Heinrich and Lewis, Mike and Yih, Wen-tau and Rockt{\"a}schel, Tim and others},
  booktitle={NeurIPS},
  year={2020}
}

@inproceedings{yu2025dapo,
  title={Dapo: An open-source llm reinforcement learning system at scale},
  author={Yu, Qiying and Zhang, Zheng and Zhu, Ruofei and Yuan, Yufeng and Zuo, Xiaochen and Yue, Yu and Dai, Weinan and Fan, Tiantian and Liu, Gaohong and Liu, Juncai and Liu, Lingjun and others},
  booktitle={NeurIPS},
  year={2025}
}

@article{shao2024deepseekmath,
  title={Deepseekmath: Pushing the limits of mathematical reasoning in open language models},
  author={Shao, Zhihong and Wang, Peiyi and Zhu, Qihao and Xu, Runxin and Song, Junxiao and Bi, Xiao and Zhang, Haowei and Zhang, Mingchuan and Li, Y.K. and Wu, Y. and others},
  journal={arXiv preprint arXiv:2402.03300},
  year={2024}
}

@inproceedings{li2025survey,
  title={A survey of state of the art large vision language models: Alignment, benchmark, evaluations and challenges},
  author={Li, Zongxia and Wu, Xiyang and Du, Hongyang and Liu, Fuxiao and Nghiem, Huy and Shi, Guangyao},
  booktitle={CVPRW},
  year={2025}
}

@misc{openai2026tenadvances,
  author       = {{OpenAI}},
  title        = {Ten Advances in Mathematics and Theoretical Computer Science},
  year         = {2026},
  howpublished = {\url{https://cdn.openai.com/pdf/ten-proofs-oai.pdf}}
}

@article{singh2025openai,
  title={Openai gpt-5 system card},
  author={Singh, Aaditya and Fry, Adam and Perelman, Adam and Tart, Adam and Ganesh, Adi and El-Kishky, Ahmed and McLaughlin, Aidan and Low, Aiden and Ostrow, AJ and Ananthram, Akhila and others},
  journal={arXiv preprint arXiv:2601.03267},
  year={2025}
}

@inproceedings{8443742,
  author={Kato, Shinpei and Tokunaga, Shota and Maruyama, Yuya and Maeda, Seiya and Hirabayashi, Manato and Kitsukawa, Yuki and Monrroy, Abraham and Ando, Tomohito and Fujii, Yusuke and Azumi, Takuya},
  booktitle={ICCPS}, 
  title={Autoware on Board: Enabling Autonomous Vehicles with Embedded Systems}, 
  year={2018},
}

@article{urmson2009autonomous,
  title     = {Autonomous Driving in Traffic: Boss and the Urban Challenge},
  author    = {Urmson, Chris and Baker, Chris and Dolan, John and Rybski, Paul and Salesky, Bryan and Whittaker, William and Ferguson, Dave and Darms, Michael},
  journal   = {AI Magazine},
  year      = {2009}
}

@article{drivemlm,
title={DriveMLM: aligning multi-modal large language models with behavioral planning states for autonomous driving},
author={Cui, Erfei and Wang, Wenhai and Li, Zhiqi and Xie, Jiangwei and Zou, Haoming and Deng, Hanming and Luo, Gen and Lu, Lewei and Zhu, Xizhou and Dai, Jifeng},
journal={Visual Intelligence},
year={2025}
}

@inproceedings{huang2024drivlme,
  title={Drivlme: Enhancing llm-based autonomous driving agents with embodied and social experiences},
  author={Huang, Yidong and Sansom, Jacob and Ma, Ziqiao and Gervits, Felix and Chai, Joyce},
  booktitle={IROS},
  year={2024}
}

@inproceedings{hu2023planning,
  title={Planning-oriented autonomous driving},
  author={Hu, Yihan and Yang, Jiazhi and Chen, Li and Li, Keyu and Sima, Chonghao and Zhu, Xizhou and Chai, Siqi and Du, Senyao and Lin, Tianwei and Wang, Wenhai and others},
  booktitle={CVPR},
  year={2023}
}

@inproceedings{liao2025diffusiondrive,
  title={Diffusiondrive: Truncated diffusion model for end-to-end autonomous driving},
  author={Liao, Bencheng and Chen, Shaoyu and Yin, Haoran and Jiang, Bo and Wang, Cheng and Yan, Sixu and Zhang, Xinbang and Li, Xiangyu and Zhang, Ying and Zhang, Qian and others},
  booktitle={CVPR},
  year={2025}
}

@inproceedings{peng2026counterfactual,
  title={Counterfactual vla: Self-reflective vision-language-action model with adaptive reasoning},
  author={Peng, Zhenghao and Ding, Wenhao and You, Yurong and Chen, Yuxiao and Luo, Wenjie and Tian, Thomas and Cao, Yulong and Sharma, Apoorva and Xu, Danfei and Ivanovic, Boris and others},
  booktitle={CVPR},
  year={2026}
}

@inproceedings{tan2025latent,
  title={Latent chain-of-thought world modeling for end-to-end autonomous driving},
  author={Tan, Shuhan and Chitta, Kashyap and Chen, Yuxiao and Tian, Ran and You, Yurong and Wang, Yan and Luo, Wenjie and Cao, Yulong and Krähenbühl, Philipp and Pavone, Marco and others},
  booktitle={CVPR},
  year={2026}
}

@inproceedings{li2026drivevla,
  title={Drivevla-w0: World models amplify data scaling law in autonomous driving},
  author={Li, Yingyan and Shang, Shuyao and Liu, Weisong and Zhan, Bing and Wang, Haochen and Wang, Yuqi and Chen, Yuntao and Wang, Xiaoman and An, Yasong and Tang, Chufeng and others},
  booktitle={ICLR},
  year={2026}
}

@inproceedings{fu2025minddrive,
  title={Minddrive: A vision-language-action model for autonomous driving via online reinforcement learning},
  author={Fu, Haoyu and Zhang, Diankun and Zhao, Zongchuang and Cui, Jianfeng and Xie, Hongwei and Wang, Bing and Chen, Guang and Ye, Hangjun and Liang, Dingkang and Bai, Xiang},
  booktitle={ECCV},
  year={2026}
}

@inproceedings{haresh2026self,
  title={Notes-to-Self: Scratchpad Augmented VLAs for Memory Dependent Manipulation Tasks},
  author={Haresh, Sanjay and Dijkman, Daniel and Bhattacharyya, Apratim and Memisevic, Roland},
  booktitle={ICRA},
  year={2026}
}

@article{torne2026mem,
  title={Mem: Multi-scale embodied memory for vision language action models},
  author={Torne, Marcel and Pertsch, Karl and Walke, Homer and Vedder, Kyle and Nair, Suraj and Ichter, Brian and Ren, Allen Z and Wang, Haohuan and Tang, Jiaming and Stachowicz, Kyle and others},
  journal={arXiv preprint arXiv:2603.03596},
  year={2026}
}

@article{liu2026goal2skill,
  title={Goal2skill: Long-horizon manipulation with adaptive planning and reflection},
  author={Liu, Zhen and Ning, Xinyu and Hu, Zhe and Xie, Xinxin and Li, Weize and Tang, Zhipeng and Wang, Chongyu and Yang, Zejun and Wang, Hanlin and Liu, Yitong and others},
  journal={arXiv preprint arXiv:2604.13942},
  year={2026}
}

@article{cai2026tau_0,
  title={$\tau_0$-VLA: a Hierarchical Robot Foundation Model with World-Model-Guided Test-Time Computation},
  author={Cai, Xiaowei and Cai, Yunuo and Chen, Bingao and Chen, Jingxiao and Chen, Zhi and Feng, Siyuan and Hou, Tengyu and Huang, Jingshun and Jiang, Han and Ju, Runkun and others},
  journal={arXiv preprint arXiv:2608.16885},
  year={2026}
}

@inproceedings{todorov2012mujoco,
  title={MuJoCo: A physics engine for model-based control},
  author={Todorov, Emanuel and Erez, Tom and Tassa, Yuval},
  booktitle={IROS},
  year={2012},
}

@inproceedings{yan2024reinforcement,
  title={Reinforcement learning gradients as vitamin for online finetuning decision transformers},
  author={Yan, Kai and Schwing, Alexander G and Wang, Yu-Xiong},
  booktitle={NeurIPS},
  year={2024}
}

@article{shi2026clear,
  title={CLEAR: Closed-Loop Reinforcement Learning at Scale for End-to-End Autonomous Driving},
  author={Shi, Yunxiao and Cai, Hong and Ghavamzadeh, Mohammad and Porikli, Fatih},
  journal={arXiv preprint arXiv:2607.02841},
  year={2026}
}

@inproceedings{dosovitskiy2017carla,
  title={CARLA: An open urban driving simulator},
  author={Dosovitskiy, Alexey and Ros, German and Codevilla, Felipe and Lopez, Antonio and Koltun, Vladlen},
  booktitle={CoRL},
  year={2017}
}

@article{nvidia2026instantnurec,
  title={Instant NuRec: Feed-Forward 3D Gaussian Reconstruction for Driving Scene Simulation},
  author={Huang, Jiahui and Ren, Jiawei and Tyszkiewicz, Michal and Haefner, Bjoern and Shelley, Michael and Kang, Xin and Kim, Seung Wook and Xu, Ning and Wu, Qi and Esturo, Janick Martinez and others},
  journal={arXiv preprint arXiv:2607.14203},
  year={2026}
}

@inproceedings{choi2026scale,
  title={SCALE: Self-uncertainty Conditioned Adaptive Looking and Execution for Vision-Language-Action Models},
  author={Choi, Hyeonbeom and Ahn, Daechul and Lee, Youhan and Kang, Taewook and Cho, Seongwon and Choi, Jonghyun},
  booktitle={ICML},
  year={2026}
}

@inproceedings{wang2024gensim,
  title={Gensim: Generating robotic simulation tasks via large language models},
  author={Wang, Lirui and Ling, Yiyang and Yuan, Zhecheng and Shridhar, Mohit and Bao, Chen and Qin, Yuzhe and Wang, Bailin and Xu, Huazhe and Wang, Xiaolong},
  booktitle={ICLR},
  year={2024}
}

@inproceedings{xiong2026recogdrive,
  title={Recogdrive: A reinforced cognitive framework for end-to-end autonomous driving},
  author={Li, Yongkang and Xiong, Kaixin and Guo, Xiangyu and Li, Fang and Yan, Sixu and Xu, Gangwei and Zhou, Lijun and Chen, Long and Sun, Haiyang and Wang, Bing and Ma, Kun and others},
  booktitle={ICLR},
  year={2026}
}

@inproceedings{li2026simplevla,
  title={Simplevla-rl: Scaling vla training via reinforcement learning},
  author={Li, Haozhan and Zuo, Yuxin and Yu, Jiale and Zhang, Yuhao and Yang, Zhaohui and Zhang, Kaiyan and Zhu, Xuekai and Zhang, Yuchen and Chen, Tianxing and Cui, Ganqu and others},
  booktitle={ICLR},
  year={2026}
}

@inproceedings{zhou2026autovla,
  title={Autovla: A vision-language-action model for end-to-end autonomous driving with adaptive reasoning and reinforcement fine-tuning},
  author={Zhou, Zewei and Cai, Tianhui and Zhao, Seth and Zhang, Yun and Huang, Zhiyu and Zhou, Bolei and Ma, Jiaqi},
  booktitle={NeurIPS},
  year={2025}
}

@inproceedings{sheng2026explorevla,
  title={Explorevla: Dense world modeling and exploration for end-to-end autonomous driving},
  author={Sheng, Zihao and Ye, Xin and Luo, Jingru and Chen, Sikai and Ren, Liu},
  booktitle={ECCV},
  year={2026}
}

@inproceedings{marcu2024lingoqa,
  title={Lingoqa: Visual question answering for autonomous driving},
  author={Marcu, Ana-Maria and Chen, Long and H{\"u}nermann, Jan and Karnsund, Alice and Hanotte, Benoit and Chidananda, Prajwal and Nair, Saurabh and Badrinarayanan, Vijay and Kendall, Alex and Shotton, Jamie and others},
  booktitle={ECCV},
  year={2024}
}

@inproceedings{spooner2021factored,
  title={Factored policy gradients: Leveraging structure for efficient learning in MOMDPs},
  author={Spooner, Thomas and Vadori, Nelson and Ganesh, Sumitra},
  booktitle={NeurIPS},
  year={2021}
}

@inproceedings{schulman2015gradient,
  title={Gradient estimation using stochastic computation graphs},
  author={Schulman, John and Heess, Nicolas and Weber, Theophane and Abbeel, Pieter},
  booktitle={NIPS},
  year={2015}
}

@inproceedings{xu2026wod,
  title={Wod-e2e: Waymo open dataset for end-to-end driving in challenging long-tail scenarios},
  author={Xu, Runsheng and Lin, Hubert and Jeon, Wonseok and Feng, Hao and Zou, Yuliang and Sun, Liting and Gorman, John and Tolstaya, Kate and Tang, Sarah and White, Brandyn and others},
  booktitle={CVPR},
  year={2026}
}

@inproceedings{he2016deep,
  title={Deep residual learning for image recognition},
  author={He, Kaiming and Zhang, Xiangyu and Ren, Shaoqing and Sun, Jian},
  booktitle={CVPR},
  year={2016}
}

@inproceedings{minade,
  title={You'll never walk alone: Modeling social behavior for multi-target tracking},
  author={Pellegrini, Stefano and Ess, Andreas and Schindler, Konrad and Van Gool, Luc},
  booktitle={ICCV},
  year={2009},
}

@article{pasios2025carla2real,
  title={Carla2real: A tool for reducing the sim2real appearance gap in carla simulator},
  author={Pasios, Stefanos and Nikolaidis, Nikos},
  journal={IEEE Transactions on Intelligent Transportation Systems},
  year={2025}
}

@article{zhou2025tumtraffic,
  title={Tumtraffic-videoqa: A benchmark for unified spatio-temporal video understanding in traffic scenes},
  author={Zhou, Xingcheng and Larintzakis, Konstantinos and Guo, Hao and Zimmer, Walter and Liu, Mingyu and Cao, Hu and Zhang, Jiajie and Lakshminarasimhan, Venkatnarayanan and Strand, Leah and Knoll, Alois C},
  journal={arXiv preprint arXiv:2502.02449},
  year={2025}
}

@inproceedings{liu2025understanding,
  title={Understanding r1-zero-like training: A critical perspective},
  author={Liu, Zichen and Chen, Changyu and Li, Wenjun and Qi, Penghui and Pang, Tianyu and Du, Chao and Lee, Wee Sun and Lin, Min},
  booktitle={COLM},
  year={2025}
}

@inproceedings{qian2024nuscenes,
  title={Nuscenes-qa: A multi-modal visual question answering benchmark for autonomous driving scenario},
  author={Qian, Tianwen and Chen, Jingjing and Zhuo, Linhai and Jiao, Yang and Jiang, Yu-Gang},
  booktitle={AAAI},
  year={2024}
}

@inproceedings{xie2025drivebench,
        author    = {Xie, Shaoyuan and Kong, Lingdong and Dong, Yuhao and Sima, Chonghao and Zhang, Wenwei and Chen, Qi Alfred and Liu, Ziwei and Pan, Liang},
        title     = {Are VLMs Ready for Autonomous Driving? An Empirical Study from the Reliability, Data and Metric Perspectives},
        booktitle = {ICCV},
        year      = {2025},
    }

@inproceedings{huang2026thinkact,
  title={Thinkact: Vision-language-action reasoning via reinforced visual latent planning},
  author={Huang, Chi-Pin and Wu, Yueh-Hua and Chen, Min-Hung and Wang, Frank and Yang, Fu-En},
  booktitle={NeurIPS},
  year={2025}
}

@article{huang2026mobilevla,
  title={MobileVLA-R1 2.0: RL-Enhanced Reasoning for Mobile Robot Control},
  author={Huang, Ting and Huang, Yue and Zhang, Zeyu and Yan, Shuicheng and Tang, Hao},
  journal={arXiv preprint arXiv:2609.06251},
  year={2026}
}

@misc{nvidia2026physicalaiav,
  title        = {PhysicalAI-Autonomous-Vehicles Dataset},
  author       = {{NVIDIA Corporation}},
  year         = {2026},
  howpublished = {\url{https://huggingface.co/datasets/nvidia/PhysicalAI-Autonomous-Vehicles}},
  note         = {NVIDIA Physical AI Autonomous Vehicles Dataset, version 26.03}
}

@misc{nvidia2026alpamayo2super,
  author       = {{NVIDIA}},
  title        = {{Alpamayo 2 Super}},
  year         = {2026},
  howpublished = {Hugging Face},
  url = {https://huggingface.co/nvidia/Alpamayo2-Super},
  note         = {Model card}
}

@article{yu2025waymoqa,
  title={Waymoqa: A multi-view visual question answering dataset for safety-critical reasoning in autonomous driving},
  author={Yu, Seungjun and Lee, Seonho and Kim, Namho and Shin, Jaeyo and Park, Junsung and Ryu, Wonjeong and Jung, Raehyuk and Shim, Hyunjung},
  journal={arXiv preprint arXiv:2511.20022},
  year={2025}
}
\bibliographystyle{iclr2027_conference}

\appendix
\newpage
\section*{Appendix: Vision-Language-Action Autonomous Driving Agent with Language-based Memory}
\renewcommand{\thetheorem}{\thesection.\arabic{theorem}}
\setcounter{claim}{0}
The appendix is organized as follows: in Sec.~\ref{sec:math-proof-cda}, we provide discussion and proofs for our novel semi-closed loop algorithm, \textit{Da Capo}; in Sec.~\ref{sec:details_appendix}, we discuss the details on our evaluation metrics (Sec.~\ref{sec:metrics}), benchmark adopted (Sec.~\ref{sec:qas}), and hyperparameters (Sec.~\ref{sec:hyperparam}); in Sec.~\ref{sec:datastats}, we introduce our dataset curation pipeline, statistics and examples of memory and VQA questions in our dataset; in Sec.~\ref{sec:add_exp}, we provide a broad range of qualitative and quantitative ablations; in Sec.~\ref{sec:resource}, we describe the computational resource for this work; finally, in Sec.~\ref{sec:limit}, we discuss the limitation and potential future work of our AD-Memo.

\section{Mathematical Proofs}
\label{sec:math-proof-cda}

To see the equivalence between the expected gradient of \textit{Da Capo} without standard deviation and the gradient of GRPO with trajectory-level reward, we start from an intuitive example. Suppose we have a $3$-step trajectory with memory $\text{mem}_1,\text{mem}_2, \text{mem}_3$, predicted trajectories $\tau_1, \tau_2,
\tau_3$, driving rewards $r^{\text{Driving}}_1,
r^{\text{Driving}}_2, r^{\text{Driving}}_3$, and a subsequent VQA
reward $r^{\text{VQA}}$ with $\lambda^{\text{VQA}}=1$. With a trajectory-level return, without
baseline subtraction or advantage normalization, the contribution
of $\text{mem}_2$ to the policy gradient is
\begin{equation}
\mathbb{E}_{\pi_\theta}\left[
\left(
\frac{1}{3}
\left(r^{\text{Driving}}_1+r^{\text{Driving}}_2+r^{\text{Driving}}_3\right)
+r^{\text{VQA}}
\right)
\nabla_\theta\log\pi_\theta(\text{mem}_2\mid c_2)
\right],
\end{equation}
where $c_2$ consists of the second-step input, including
$\text{mem}_1$, and the second-step CoT output.


Let $H$ denote the common rollout prefix containing the first-step
input, CoT output, and $\text{mem}_1$, before generating $\tau_1$
and the second-step continuation.
The latter uses $\text{mem}_1$ and the fixed second-step environmental
input, but not $\tau_1$. With independent sampling randomness for
the two branches, $\tau_1$ and $(c_2,\text{mem}_2)$ are therefore
\textit{conditionally independent} given $H$. Since
$r_1^{\text{Driving}}$ is determined by $\tau_1$ and the fixed
reward references,
$r_1^{\text{Driving}}\perp(c_2,\text{mem}_2)\mid H$.

By the score-function identity and the law of total expectation,
\begin{equation}
\begin{aligned}
&\mathbb{E}_{\pi_\theta}\left[
\nabla_\theta\log\pi_\theta(\text{mem}_2\mid c_2)
\mid H
\right]\\
&\qquad =
\mathbb{E}_{c_2\mid H}\left[
\sum_m
\pi_\theta(m\mid c_2)
\nabla_\theta\log\pi_\theta(m\mid c_2)
\right]\\
&\qquad =
\mathbb{E}_{c_2\mid H}\left[
\nabla_\theta\sum_m\pi_\theta(m\mid c_2)
\right]
=0,
\end{aligned}
\end{equation}
where $c_2$ is held fixed in the inner derivative.
Consequently, conditional independence gives
\begin{equation}
\begin{aligned}
&\mathbb{E}_{\pi_\theta}\left[
\frac{1}{3}r^{\text{Driving}}_1
\nabla_\theta\log\pi_\theta(\text{mem}_2\mid c_2)
\right]\\
=&
\frac{1}{3}\mathbb{E}_{\pi_\theta}\left[
\mathbb{E}_{\pi_\theta}
\left[r^{\text{Driving}}_1\mid H\right]
\mathbb{E}_{\pi_\theta}\left[
\nabla_\theta\log\pi_\theta(\text{mem}_2\mid c_2)
\mid H
\right]
\right]=0.
\end{aligned}
\end{equation}
Thus, removing $\frac{1}{3}r^{\text{Driving}}_1$ from the return
weighting the score-function term for $\text{mem}_2$ preserves
the expected policy gradient, consistent with
\citet{schulman2015gradient}.

More generally, fix the replayed inputs and terminal question $X$, and
sample $K$ independent on-policy rollouts with $n+1$ keyframes ($n$ driving frames + 1 VQA frame).
All expectations below are conditional on $X$; the conclusion also
holds after averaging over $X$. Assume differentiable policy
probabilities with fixed support and sufficient integrability to
interchange differentiation, summation, and expectation.

For rollout $i$ at step $j$, let
$S^{\text{Memory}}_{i,j}$, $S^{\text{Driving}}_{i,j}$, and
$S^{\text{VQA}}_i$ denote the scores of the CoT/memory block,
predicted trajectory, and terminal answer, respectively.
Each score is the gradient of the corresponding conditional
log-probability, summed over the block's tokens.

Define the trajectory-level return
\begin{equation}
R_i =
\frac{1}{n}\sum_{k=1}^{n}r^{\text{Driving}}_{i,k}
+\lambda^{\text{VQA}}r^{\text{VQA}}_i,
\end{equation}
and recall the causal returns from Sec.~\ref{sec:training}:
\begin{equation}
\begin{aligned}
G^{\text{Driving}}_{i,j}
&= \frac{1}{n}r^{\text{Driving}}_{i,j},\\
G^{\text{Memory}}_{i,j}
&= \frac{1}{n}\sum_{k=j}^{n}r^{\text{Driving}}_{i,k}
+\lambda^{\text{VQA}}r^{\text{VQA}}_i,\\
G^{\text{VQA}}_i
&= \lambda^{\text{VQA}}r^{\text{VQA}}_i.
\end{aligned}
\end{equation}
Without standard-deviation normalization, the corresponding
group-centered advantages are
\begin{equation}
A^c_{i,j}
= G^c_{i,j}-\frac{1}{K}\sum_{\ell=1}^{K}G^c_{\ell,j},
\qquad
A^{\text{traj}}_i
= R_i-\frac{1}{K}\sum_{\ell=1}^{K}R_\ell,
\end{equation}
where $c\in\{\text{Driving},\text{Memory},\text{VQA}\}$,
with the step index $j$ omitted for VQA.

\begin{claim}[Expected-gradient equivalence]
For every output block,
\begin{equation}
\mathbb{E}\left[S^c_{i,j}A^c_{i,j}\right]
=
\mathbb{E}\left[S^c_{i,j}A^{\text{traj}}_i\right].
\end{equation}
Consequently, without standard deviation, our method and trajectory-level group centering yield
the same expected on-policy score-function gradient (but different credit assignment).
\end{claim}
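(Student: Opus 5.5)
The plan is to prove the equivalence one output block at a time. For each block I will show that the difference $A^{\text{traj}}_i - A^c_{i,j}$ is a sum of group-centered reward terms, and that each of these terms has zero correlation with $S^c_{i,j}$. Two ingredients do the work. The first is the score-function identity $\mathbb{E}[S \mid \text{context of the block}] = 0$, which was already derived in the three-step example above. The second is a conditional-independence argument in the style of \citet{schulman2015gradient}: a reward that is conditionally independent of a block given some history $H$ that contains the block's context contributes nothing in expectation.

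Write $\bar R = \frac1K\sum_\ell R_\ell$, and similarly for the other group means. The differences are:
\begin{itemize}
\item Driving: $A^{\text{traj}}_i - A^{\text{Driving}}_{i,j} = \frac1n\sum_{k\neq j}\bigl(r^{\text{Driving}}_{i,k}-\overline{r^{\text{Driving}}_{k}}\bigr) + \lambda^{\text{VQA}}\bigl(r^{\text{VQA}}_i-\overline{r^{\text{VQA}}}\bigr)$.
\item Memory: only the driving terms with $k<j$ remain.
\item VQA: all driving terms remain.
\end{itemize}
The target is then reduced to showing $\mathbb{E}[S^c_{i,j}\, Z]=0$ for three kinds of $Z$.

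\textbf{Other rollouts.} First, $Z$ may be a reward from another rollout $\ell\neq i$. Rollouts are sampled independently given $X$, so $\mathbb{E}[S^c_{i,j} Z] = \mathbb{E}[S^c_{i,j}]\,\mathbb{E}[Z] = 0$ by the score identity. This removes all the cross-rollout pieces of the group means. What is left of each centered term is the own-rollout contribution, scaled by $(1-\frac1K)$.

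\textbf{Same rollout, earlier steps.} Second, $Z$ may be a same-rollout reward $r^{\text{Driving}}_{i,k}$ with $k<j$. This covers every block type at step $j$, and VQA uses $j=n+1$. Let $H$ be the rollout prefix up to and including $\text{mem}_k$ (before $\tau_k$). Under open-loop control, $\tau_k$ is a leaf: no later input depends on it, because later inputs use only memories and replayed data. So $r^{\text{Driving}}_{i,k} \perp (c_{i,j}, \text{block}_{i,j}) \mid H$, and the argument of the example applies verbatim.

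\textbf{Driving block, later or terminal rewards.} Third, for the driving block only, $Z$ may be $r^{\text{Driving}}_{i,k}$ with $k>j$, or $r^{\text{VQA}}_i$. Here the conditioning runs the other way: take $H$ to be the prefix through $\text{mem}_j$. Then $\tau_j$ is independent of all subsequent generations given $H$, and $\mathbb{E}[S^{\text{Driving}}_{i,j}\mid H]=0$.

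Summing over blocks gives equality of the full expected gradients. The main obstacle I expect is making the conditional-independence claim precise, because in the actual implementation CoT, memory, and trajectory are generated in one autoregressive sequence. I would first fix the factorization explicitly. I would assume that within step $j$ the trajectory tokens are generated after $\text{mem}_j$, and that the input to step $j+1$ is built from $\text{mem}_{\le j}$ and replayed data only. With that assumption the block graph is a tree in which each $\tau_j$ is a leaf hanging off the memory chain. Conditioning on the appropriate chain node then makes every needed independence immediate. I would state this structural assumption explicitly alongside the integrability assumptions already listed.
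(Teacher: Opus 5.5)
Your proposal is correct and follows essentially the same route as the paper. You write $A^{\text{traj}}_i - A^c_{i,j}$ as a sum of group-centered discarded rewards, remove the cross-rollout pieces using independence between rollouts and the zero-mean score, and remove the same-rollout pieces by conditional independence given a history on which the score has zero conditional mean. Your explicit case split (conditioning on the prefix through $\text{mem}_k$ for earlier rewards and through $\text{mem}_j$ for later ones), together with your stated assumption that $\tau_j$ is generated after $\text{mem}_j$ and never fed forward, just spells out what the paper's proof asserts in one sentence and mirrors its own three-step example.
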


\begin{proof}
For any generated block $Y$ with conditioning input and prefix $H$,
its score $S=\nabla_\theta\log\pi_\theta(Y\mid H)$ satisfies
\begin{equation}
\mathbb{E}[S\mid H]
=
\sum_y \pi_\theta(y\mid H)
\nabla_\theta\log\pi_\theta(y\mid H)
=
\nabla_\theta\sum_y\pi_\theta(y\mid H)
=0.
\end{equation}

Let $\Delta^c_{i,j}=R_i-G^c_{i,j}$ denote the discarded reward.
For driving, it contains the other steps' driving rewards and
the VQA reward; these are conditionally independent of the
current predicted trajectory because predicted trajectories are
not reused as later driving or VQA inputs.
For Memory and VQA, it contains only driving rewards determined
before the corresponding block is generated.
Thus, in each case, $\Delta^c_{i,j}$ is conditionally independent
of the generated block given its conditioning context $H$, and
\begin{equation}
\mathbb{E}\left[S^c_{i,j}\Delta^c_{i,j}\right]
=
\mathbb{E}\left[
\mathbb{E}[\Delta^c_{i,j}\mid H]\,
\mathbb{E}[S^c_{i,j}\mid H]
\right]
=0.
\end{equation}

For $\ell\neq i$, independence between the rollouts and
$\mathbb{E}[S^c_{i,j}]=0$ also give
$\mathbb{E}[S^c_{i,j}\Delta^c_{\ell,j}]=0$.
Therefore,
\begin{equation}
\mathbb{E}\left[
S^c_{i,j}\left(A^{\text{traj}}_i-A^c_{i,j}\right)
\right]=\mathbb{E}\left[
S^c_{i,j}\left(
\Delta^c_{i,j}
-\frac{1}{K}\sum_{\ell=1}^{K}\Delta^c_{\ell,j}
\right)
\right]
=0.
\end{equation}
Summing over output blocks and averaging over rollouts
completes the proof.
\end{proof}

\begin{remark}
Adding standard deviation to the advantage (i.e. the Deviation-Adjustment (DA) in \textit{Da Capo}) generally breaks the expected-gradient equivalence in two ways. First, as standard deviation is computed individually for every step, the introduction of such denominator adds a stepwise scaling to the total reward. Second, estimating
the denominator from the same rollout group can invalidate the zero-mean cancellation of conditionally independent reward terms. To see this, decompose the trajectory-level return associated with a driving output as $r_1+r_2$, where $r_2$ is conditionally independent of the current trajectory tokens given their conditioning context. Although $r_2$ can be removed without changing the expected
unnormalized gradient, its contribution after division by
$\operatorname{std}(r_1+r_2)$ does not necessarily vanish, because the denominator
also depends on the sampled output through $r_1$.

Despite this loss of exact equivalence, we find standard-deviation normalization empirically beneficial in our autonomous-driving setting, unlike the findings of ``Dr.~GRPO''~\citep{liu2025understanding} for mathematical reasoning. This choice is motivated by the
heterogeneous reward scales across scenes and task components. For example, driving-reward differences within a rollout group can range from approximately $0.02$ for a $0.1$\,m ADE difference to $1$ for an ADE difference exceeding $5$\,m. Normalization within each step and output component puts these heterogeneous advantage signals on a comparable scale, allowing small but meaningful driving improvements to contribute alongside larger reward differences. Our ablations show that this adaptive normalization outperforms RL with no standard deviation or a fixed normalization constant; see Sec.~\ref{sec:aws_exp} for ablations.
\end{remark}

\section{Experiment Details}
\label{sec:details_appendix}
\subsection{Evaluation Metrics}
\label{sec:metrics}
We list the details on the evaluation metrics used in Sec.~\ref{sec:exp} here:

\begin{itemize}

\item \textbf{minADE (lower is better).} Minimum Average Displacement Error (ADE) is one of the most commonly used metric in autonomous driving. More specifically, given $K$ predicted trajectory $\tau_1=(p^1_1,p^1_2,\dots,p^1_{M})$, $\dots$, $\tau_K=(p^K_1,p^K_2,\dots,p^K_{M})$ and ground truth trajectory $\tau=(p^{\text{GT}}_1,p^{\text{GT}}_2,\dots,p^{\text{GT}}_M)$, minADE is calculated as

\begin{equation}
\min_{i\in\{1,2,\dots,K\}} \frac{1}{M}\sum_{j=1}^M\|p_j^i-p^{\text{GT}}_j\|_2,
\end{equation}

where $p^i_j, p^{\text{GT}}_j\in \mathbb{R}^2$ are waypoints on the 2D coordinate, each represents the location for every 0.1 second. Following prior works, unless otherwise specified, we have $K=6$ and $M=64$ (i.e. prediction of 6.4s) in our evaluation. minADE describes the behavior of the best prediction, which compensates for potential mode error (e.g. due to human intention unavailable to the agent, the car turns right when the prediction is turning left).

\item \textbf{Average ADE (lower is better).} Similar to minADE, average ADE can be calculated as 

\begin{equation}
\frac{1}{MK}\sum_{i=1}^K\sum_{j=1}^M\|p_j^i-p^{\text{GT}}_j\|_2,    
\end{equation}

which describes the average driving performance of the VLA.

\item \textbf{Most likely ADE (lower is better).} Most likely ADE describes the performance of the agent if the car is under its control. With $K$ rollouts, we calculate the log probability of each rollout, and report the ADE $\frac{1}{M}\sum_{j=1}^M\|p_j^i-p^{\text{GT}}_j\|_2$ for trajectory $i$ with the highest log probability.

\item \textbf{minFDE (lower is better).} Minimum Final Displacement Error (FDE) describes the discrepancy between the final position of the predicted and ground truth trajectory. More specifically, it is calculated as

\begin{equation}
\min_{i\in\{1,2,\dots,K\}} \|p_{M}^i-p^{\text{GT}}_{M}\|_2,
\end{equation}

i.e., only take the last waypoint into account.

\item \textbf{Average FDE (lower is better).} Similar to minFDE and average ADE, average FDE is calculated as 

\begin{equation}
\sum_{j=1}^{K}\frac{1}{K}\|p_{M}^j-p^{\text{GT}}_{M}\|_2,
\end{equation}

\item \textbf{Most likely FDE (lower is better).} Similar to most likely ADE, most likely FDE is the FDE for the trajectory with the highest log probability.

\item \textbf{Corner Distance (lower is better).} Corner distance jointly measures the translation and orientation errors of the predicted ego-vehicle trajectory. Let $c^{i}_{j,l}$ and $c^{\mathrm{GT}}_{j,l}$ denote the $l$-th corner of the predicted and ground-truth ego-vehicle bounding boxes, respectively, at timestep $j$, where $l\in\{1,\dots,8\}$ for a 3D bounding box. Corner distance is calculated as
\begin{equation}
\frac{1}{8M}
\sum_{j=1}^{M}
\sum_{\ell=1}^{8}
\min_{i\in\{1,\ldots,K\}}
\left\|
c_{j,\ell}^{i}-c_{j,\ell}^{\mathrm{GT}}
\right\|_2 .
\end{equation}
The corners are obtained from the predicted vehicle position and orientation using a fixed-size ego-vehicle bounding box.

\item \textbf{Stop Success rate (SR, higher is better).} We first detect a stop as a contiguous interval during which the vehicle speed is below a threshold $v_{\mathrm{stop}}$. Only examples containing a valid ground-truth stop are included in the evaluation. A prediction is considered a successful stop if it contains a valid predicted stop and its stop-start time $t^{\mathrm{pred}}_{\mathrm{stop}}$ satisfies
\begin{equation}
t^{\mathrm{GT}}_{\mathrm{stop}}-2\epsilon_{\mathrm{stop}}
\leq t^{\mathrm{pred}}_{\mathrm{stop}}
\leq t^{\mathrm{GT}}_{\mathrm{stop}}+\epsilon_{\mathrm{stop}}.
\end{equation}
Stop SR is the fraction of eligible predictions satisfying this condition. Unless otherwise specified, following prior work~\citep{liang2026planning}, we use $v_{\mathrm{stop}}=0.5\,\mathrm{m/s}$, a minimum stop duration of $0.2\,\mathrm{s}$, and $\epsilon_{\mathrm{stop}}=0.7\,\mathrm{s}$. Notably, only when a keyframe is before the ground truth stop time will we consider it as eligible for metric computation as reported in Sec.~\ref{sec:aws_exp}.

\item \textbf{Go Success rate (SR, higher is better).} For a ground-truth trajectory that departs after stopping, a prediction is considered successful if it contains a valid stop and its stop-end time is within the specified tolerance of the ground-truth stop-end time:
\begin{equation}
t^{\mathrm{GT}}_{\mathrm{go}}-\epsilon_{\mathrm{early}}
\leq t^{\mathrm{pred}}_{\mathrm{go}}
\leq t^{\mathrm{GT}}_{\mathrm{go}}+\epsilon_{\mathrm{late}}.
\end{equation}
If the ground-truth vehicle remains stopped until the end of the prediction horizon, the prediction is successful only if it also remains stopped. Go SR is the fraction of eligible predictions satisfying the corresponding condition. We use $\epsilon_{\mathrm{early}}=\epsilon_{\mathrm{late}}=0.7\,\mathrm{s}$ following prior work~\citep{liang2026planning}.

\item \textbf{Roll-through Rate (Roll, lower is better).}
A prediction is classified as a roll-through if the ground-truth trajectory contains a valid stop but the predicted trajectory contains no valid stop interval. Roll-through rate is the fraction of eligible predictions classified as roll-throughs.

\item \textbf{Position Error ($\Delta_{\mathrm{pos}}$, lower is better).}
Position error measures the spatial discrepancy between the predicted and ground-truth stopping locations. Let $p^{\mathrm{GT}}_{\mathrm{stop}}$ denote the ground-truth stopping position and $p^{\mathrm{pred}}_{\mathrm{stop}}$ the position at which the predicted stop begins. It is calculated as
\begin{equation}
\Delta_{\mathrm{pos}}
=
\left\|
p^{\mathrm{pred}}_{\mathrm{stop}}
-
p^{\mathrm{GT}}_{\mathrm{stop}}
\right\|_2.
\end{equation}
If no valid predicted stop is detected, the final predicted waypoint is used as $p^{\mathrm{pred}}_{\mathrm{stop}}$. We report the mean position error in meters over eligible examples.

\item \textbf{Stop Duration Error ($\Delta_{\mathrm{dur}}$, lower is better).}
Let $d^{\mathrm{pred}}_{\mathrm{stop}}$ and $d^{\mathrm{GT}}_{\mathrm{stop}}$ denote the durations of the predicted and ground-truth stop intervals, respectively. Stop duration error is calculated as
\begin{equation}
\Delta_{\mathrm{dur}}
=
\left|
d^{\mathrm{pred}}_{\mathrm{stop}}
-
d^{\mathrm{GT}}_{\mathrm{stop}}
\right|.
\end{equation}
The predicted stop duration is set to zero when no valid predicted stop is detected. We report the mean duration error in seconds over eligible examples.

\item \textbf{MCQ Accuracy (higher is better).} MCQ accuracy is the accuracy of the model answering the question at the end of the clip, and is an indicator for the general scene understanding.

\end{itemize}

\subsection{LingoQA and WaymoQA}
\label{sec:qas}

\textbf{LingoQA.} LingoQA~\citep{marcu2024lingoqa} is a video question-answering benchmark for autonomous driving that covers both fine-grained scene perception and higher-level driving reasoning. Each sample contains a 4-second driving clip sampled at 1Hz (i.e. a total of 5 frames), together with free-form questions and answers concerning nine competencies, including action recognition, action justification, object to pay attention to, object identification, localization, description, counting, anticipation, and counterfactual reasoning. We test our memory on its evaluation set, which comprises 500 questions from 100 held-out scenarios, with two independently written reference answers per question, yielding 1,000 reference answers. Predictions are evaluated primarily using Lingo-Judge, a learned truthfulness classifier that determines whether a generated answer is semantically consistent with either reference answer; the resulting correctness scores are averaged over the evaluation set.

\textbf{WaymoQA.} WaymoQA~\citep{yu2025waymoqa} is a visual question-answering benchmark built from long-tail scenarios in the Waymo End-to-End Driving Dataset~\citep{xu2026wod} and designed to assess safety-critical driving reasoning. It includes both key-frame image questions and temporal video questions spanning perception, behavior prediction, planning, uncertainty, counterfactual reasoning, spatial and temporal relationships, traffic signals, and two-stage reasoning about immediate hazards and risks induced by an initial evasive action. In our experiment, we keep the video-VQA in the test set, which translates to 896 questions, and write memory for the video at 2.5Hz (same frequency as that in our dataset). The predictions are verified by rule-based scripts that extracts answers.

\subsection{Hyperparameters}
\label{sec:hyperparam}
Tab.~\ref{tab:hyperparam} lists the hyperparameter we use for our experiments. Notably, to align the training dynamics, for our no memory and CoT-as-memory baseline on the general driving dataset, we also train them on the same memory writing data as our method.

\begin{table}[t]
    \centering
    \small
    \renewcommand{\arraystretch}{1.08}
    \caption{List of hyperparameters for our algorithm.}
    \begin{tabular}{p{0.36\linewidth}p{0.56\linewidth}}
         \toprule
         \multicolumn{2}{c}{Supervised Fine-Tuning} \\
         \midrule
         global batch size & 128 \\
         learning rates & $1\times10^{-5}$ (vision encoder);
                              $1\times10^{-4}$ (language model and LM head) \\
         optimizer & fused AdamW \\
         Adam betas / epsilon & $(0.9, 0.95)$ / $1\times10^{-8}$ \\
         weight decay & 0.1 \\
         lr scheduler & cosine decay; 100-step linear warm-up;
                         minimum lr factor 0.01 \\
         gradient clipping & 1.0 \\
         precision & bfloat16 parameters; float32 reduction \\
         maximum sequence length & 8192 tokens \\
         input frames per sample & 4 (0.1s per frame) \\
         weight for memory-writing in SFT $c_1$ & 1 for general driving, 0 for all-way stop \\
         VQA weight in SFT $c_2$ & 10 \\
         \midrule
         \multicolumn{2}{c}{Semi-Closed-Loop RL} \\
         \midrule
         number of steps &
             420 (all-way stop, 2 epochs);
             760 (general driving, 1 epoch) \\
         global rollout batch & 144 trajectories per training step
                                (12 clips $\times$ 12 trajectories) \\
         rollouts per clip / group size & 12 \\
         optimizer & fused AdamW \\
         learning rate & $3\times10^{-6}$ \\
         lr scheduler  & constant  \\
         Adam betas / epsilon & $(0.9, 0.999)$ / $1\times10^{-6}$ \\
         weight decay & 0.01 \\
         sampling temperature / top-$p$ & 0.6 / 0.98 \\
         gradient accumulation interval &
             60 samples per GPU, with the final partial interval flushed \\
         optimizer steps per epoch & vary with trajectory length; typically 3-4 (144 trajectories $\times$ 20-25 steps / 16 GPUs / 60 samples)\\
         epochs of update per batch & 1 \\
         gradient clipping & 1.0 \\
         PPO clipping range & $\epsilon_{\mathrm{low}}=0.20$,
                              $\epsilon_{\mathrm{high}}=0.28$ \\
         KL coefficient & 0.001 \\
         importance sampling weight clipping & 2.0 \\
         MCQ reward weight & 0.5 \\
         memory reward &
             $\frac{1}{n}\sum_{t=i}^{n}r_t^{\mathrm{drive}}
              +0.5\,r^{\mathrm{MCQ}}$ for step $i$\\
         driving reward & $-0.2\,\mathrm{ADE}$ for
                           $\mathrm{ADE}<5\,\mathrm{m}$; $-1$ otherwise \\
         maximum response / context length & 512 / 4096 tokens \\
         training precision & bfloat16 parameters; float32 reduction \\
         \midrule
         \multicolumn{2}{c}{Evaluation} \\
         \midrule
         sampling temperature & 0.6 \\
         sampling top-$p$ & 0.98 \\
         parallel rollouts $K$ & 6 \\
         trajectory horizon & 6.4\,s \\
         trajectory discretization & 64 future waypoints at 10\,Hz \\
         memory horizon $T$ & 50 \\
         \bottomrule
    \end{tabular}
    \label{tab:hyperparam}
\end{table}

\section{Dataset Curation and Statistics}
\label{sec:datastats}

\subsection{All-Way Stop Dataset}
\label{sec:datastats-aws}

\subsubsection{Dataset Curation}
\label{sec:awsdata}

We curate our all-way stop dataset by first choosing the clips with all-way stop sign with the following standards: 1) ego has stopped in the clip, where ``stop'' is defined as speed $\leq 0.3$ m/s for $\geq 0.2$ seconds; 2) besides ego car, there must be at least one more car that has stopped, and the period of stopping must be no more than 1 second apart from ego's period of stopping (such that memory is needed); 3) At most one car has stopped before the clip starts, such that the stopping order can be recovered from the video. With such video, we then identify the moments of ``stop'' and ``go'' event of each car in the scene with trajectory information for ego and bounding box detection for other cars. In total, we curate 7436 clips for SFT training, 2479 clips for RL training and 2479 clips for evaluation with no joint clip between the splits.

\textbf{Memory curation.} As we care about ``stop'' and ``go'' in the all-way stop scenario, our memory is closely related to the stop/go status of each car and generated in a rule-based manner. We define six sequential states for the car: ``moving towards the crossing'', ``almost arrives at the crossing'', ``arrived at the crossing'', ``stopping'', ``start moving'', ``passing the crossing'', and assign them to our key frames according to the time of stop/go events. Memory at each frame is a concatenation of state description for each car (e.g. front car, left car, right car and ego); see Sec.~\ref{sec:aws-examples} for examples.

\textbf{VQA curation.} In this dataset, we focus on the following question: ``What is the stopping order between the cars?''. If the scene only involves two cars, we ask whether one of the car comes earlier; if three or more cars are involved, we ask that in which permutation did the car arrive. The reason for focusing on this question is because we find the other questions, such as going order, are severely biased (e.g. towards ego car moving last; see Sec.~\ref{sec:aws_stats} for details). 

\subsubsection{Statistics}
\label{sec:aws_stats}

Our all-way stop dataset contains 12394 clips, randomly divided into SFT training, RL training, and test sets in proportions of 60\%, 20\%, and 20\%, respectively, as described in the main paper. Of these, 9877 clips concern the arrival order of two cars, while 2517 concern the arrival order of three or more cars. As co-waiting and situations in which vehicles' stopping periods overlap or are separated by at most 1 second are important but relatively rare in driving, we curated this dataset by filtering over one million clips. Each memory entry contains 14.827 tokens on average.

\begin{figure}[t]
    \centering
    \subfigure[Number of keyframes]{
    \includegraphics[width=0.48\linewidth]{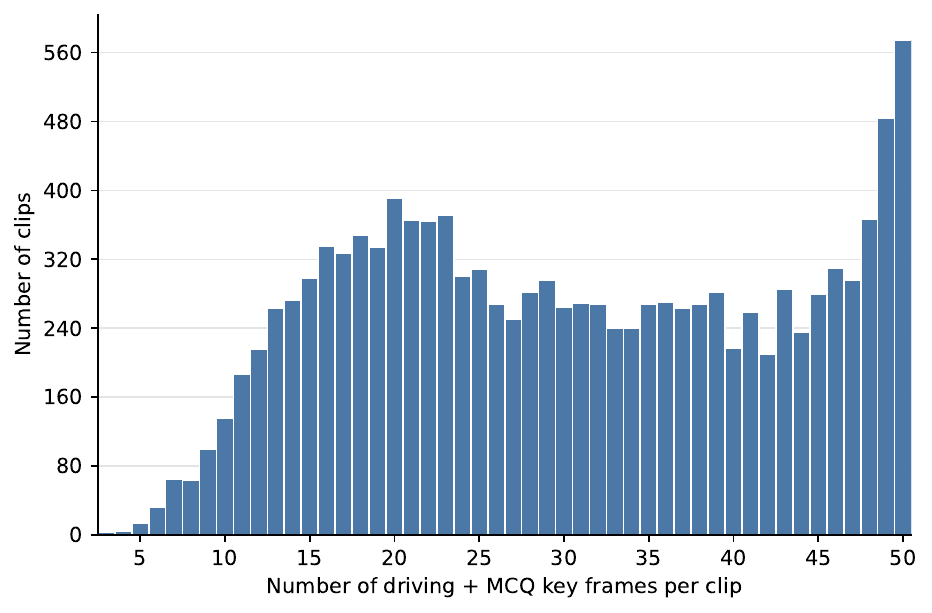}}
    \subfigure[Co-Wait Time]{
    \includegraphics[width=0.48\linewidth]{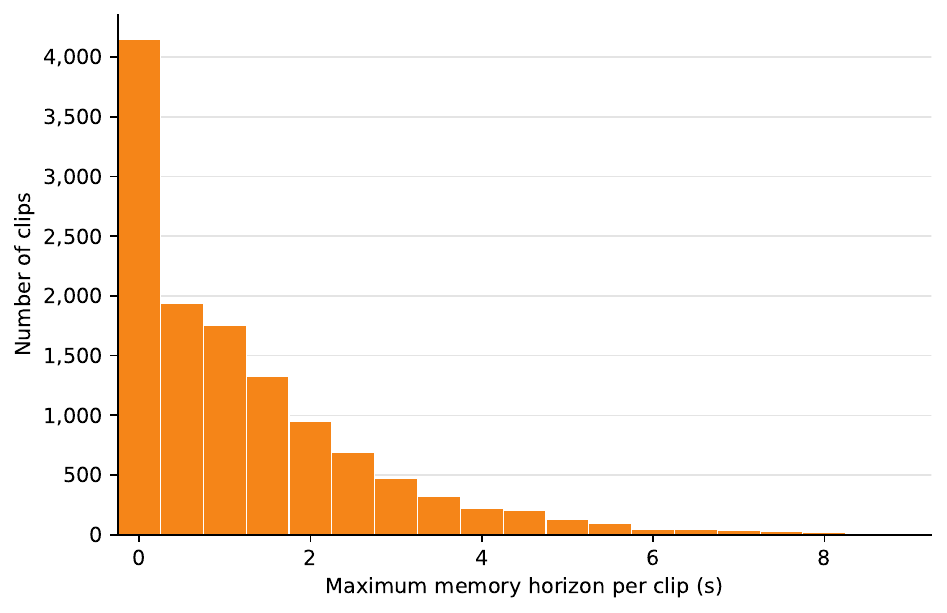}}
    \caption{An illustration of the two key statistics of our dataset. Panel (a) is the number of keyframes, which is the length of our clip; panel (b) is the co-wait time between ego and other car, which is a surrogate for the memory horizon required.}
    \label{fig:aws-stats}
\end{figure}


Fig.~\ref{fig:aws-stats} shows the distribution of the number of keyframes per clip in panel (a) and the maximum co-waiting duration between the ego vehicle and any other vehicle in each clip in panel (b). Panel (a) reflects the distribution of clip lengths, as consecutive keyframes are spaced $0.4$ seconds apart. Panel (b) serves as a proxy for the required memory horizon: when two vehicles are waiting simultaneously, the current frame alone may not reveal which arrived first. Notably, only 2.72\% of clips have a co-waiting duration of at least $5$ seconds, and none exceeds $10$ seconds. \textbf{Using this proxy, a memory horizon of $5$ seconds covers the estimated memory requirement for over 97\% of clips, while $10$ seconds covers all clips.} These observations guide our choice of memory horizon.


\textbf{Bias in departure order.} We examine the distributions of stop and go event orderings between the ego vehicle and other vehicles in our dataset. Across all pairs of stop events involving the ego vehicle and another vehicle, the ego vehicle stops earlier in 31.7\% of cases, later in 42.6\%, and simultaneously in 25.6\%. For pairs of go events, the ego vehicle starts moving earlier in 18.8\% of cases, later in 66.5\%, and simultaneously in 14.8\%. Given this strong bias toward the ego vehicle starting later, we exclude questions about departure order from our VQA task.

\subsubsection{Examples}
\label{sec:aws-examples}
Below we show an example of question, corresponding memory and answer in an all-way stop dataset, with input visualized in Fig.~\ref{fig:example_aws}. The rationale of the answer are generated by GPT-5.6 Luna by taking the memory, the question and the answer option as input. 

\begin{figure}[t]
    \centering
    \includegraphics[width=\linewidth]{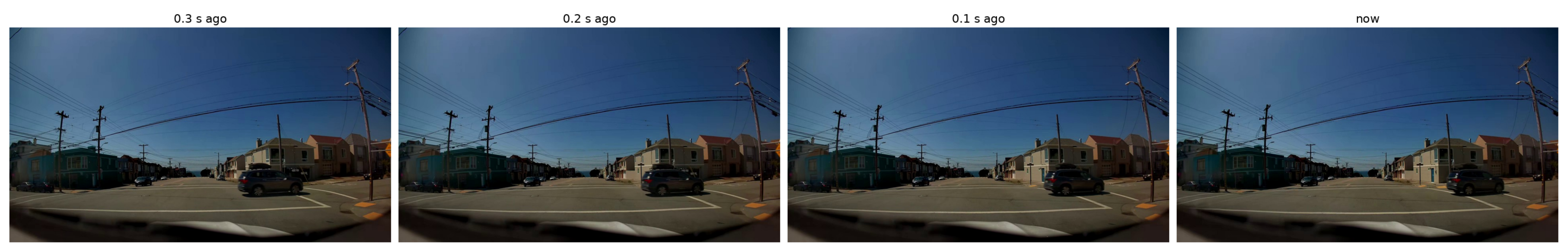}
    \caption{An example of the visual input for all-way stop dataset at the VQA frame.}
    \label{fig:example_aws}
\end{figure}

\textit{Example Question:}

{
\setlength{\fboxsep}{0.3cm}
\ovalbox{\small
\begin{minipage}{0.95\linewidth}
Determine the chronological arrival order of the following vehicles at the
crossing. Here, ``$>$'' means that the event on the left occurred more than
0.5 seconds earlier, while ``$=$'' means that the two events occurred within
0.5 seconds of each other; ``$\geq$'' means ``no later than'' and allows
either relation. If multiple answers apply, choose any correct one.

1. car ahead arrives at the crossing \qquad
2. ego arrives at the crossing \qquad
3. left car arrives at the crossing

\begin{center}
\textbf{A.} $3 \geq 2 \geq 1$ \qquad
\textbf{B.} $2 \geq 1 \geq 3$ \qquad
\textbf{C.} $2 \geq 3 \geq 1$ \\[0.6em]
\textbf{D.} $1 \geq 3 \geq 2$ \qquad
\textbf{E.} $1 \geq 2 \geq 3$ \qquad
\textbf{F.} $3 \geq 1 \geq 2$
\end{center}
\end{minipage}
}
}

\textit{Example Memory:}

{
\setlength{\fboxsep}{0.3cm}
\ovalbox{\small
\begin{minipage}{0.95\linewidth}
\texttt{[-0.4s]} The car ahead seems to start moving. The left car is passing
the crossing. The ego vehicle seems to start moving.

\texttt{[-0.8s]} The car ahead arrives at the crossing. The left car is passing
the crossing. The ego vehicle is stopping.

\centerline{$\ldots$}

\texttt{[-4.8s]} The car ahead is moving towards the crossing. The left car
stays still. The ego vehicle arrives at the crossing.

\texttt{[-5.2s]} The car ahead is moving towards the crossing. The left car
arrives at the crossing. The ego vehicle almost arrives at the crossing.

\centerline{$\ldots$}

\texttt{[-6.8s]} The car ahead, the left car, and the ego vehicle are moving
towards the crossing.

\end{minipage}
}
}

\textit{Example CoT as memory:}

{
\setlength{\fboxsep}{0.3cm}
\ovalbox{\small
\begin{minipage}{0.95\linewidth}

\texttt{[-0.4s]} the crosswalk due to the pedestrian crossing ahead.

\texttt{[-0.8s]} the intersection because the crossing vehicle has cleared.

\centerline{$\ldots$}

\texttt{[-4.8s]} to yield to the vehicle crossing the intersection.

\texttt{[-5.6s]} to stop at the stop sign controlling the intersection.

\centerline{$\ldots$}

\texttt{[-6.8s]} to stop due to the stop sign at the intersection.

\end{minipage}
}
}

\textit{Example Answer:}

{
\setlength{\fboxsep}{0.3cm}
\ovalbox{\small
\begin{minipage}{0.95\linewidth}
\textbf{Ours.} According to the history, the car ahead arrives at the crossing
(1) 0.8 seconds ago, the ego vehicle arrives at the crossing (2) 4.8 seconds
ago, and the left car arrives at the crossing (3) 5.2 seconds ago. Since the
last two events are within 0.5 seconds of each other, the order is
$3 = 2 > 1$. Thus, the answer is \textbf{A}.

\medskip
\textbf{CoT as memory.} The left car is already passing through and then
clearing the intersection, so it arrived first. The ego vehicle remains behind
the stop line during the final visual inputs, while the car ahead does not
enter the crossing before the ego vehicle. Thus, the answer is \textbf{A}.
\end{minipage}
}
}

\subsection{General Driving Dataset}
\label{sec:datastats-gen}

\subsubsection{Dataset Curation}
\label{sec:generaldata}

We curate a dataset that tests the performance of our algorithm in more diverse and challenging driving scenes. To identify such scenes, we use Qwen3-VL-30B-A3B-Instruct~\citep{bai2025qwen3} and collect a set of 45446 clips (27246 SFT training set, 9088 RL training set and 9112 test set) with challenging scenes; see Sec.~\ref{sec:prompt} for the prompt. 

\textbf{Decision graph.} With such clips available, the key challenge of language-based memory is to balance between brevity and inclusion of the objects that indeed affect driving; describing each of the 20 pedestrians on the sidewalk who never enters the road will only make the language memory less efficient than image input. Thus, we design a novel agentic framework in which the agent first identify and track items of interests in the frame with tools according to the video context. The agent then builds a \textit{decision graph}, which describes the objects of interest on the road and their relation with the ground truth action of ego car on this frame. By doing so, we ensure that all items described in the memory are relevant to the driving decision. Such graph is built with the following procedure:

\begin{itemize}


\item \textit{Object Grounding:}
Given the video clips, we first apply specialized object detection and map-element perception models to localize driving-relevant entities, including vehicles, Vulnerable Road Users (VRUs), crosswalks, lane areas, road signs, and traffic lights. Each detected entity is represented by a grounded visual prompt, such as an indexed bounding box, mask, or polygon overlaid on the corresponding frame. These prompts establish an explicit correspondence between image regions and unique object or map-element identifiers.

\item \textit{ Semantic Association:} With the grounded object, a vision-language model is then queried with the visually prompted frames to associate higher-level semantics with the grounded entities, such as their type, state, motion, and relation to the ego vehicle or road layout. The resulting grounded entities and semantic attributes are converted into structured evidence records. Such evidences form a comprehensive set of candidate nodes in the decision graph, which will be filtered in the next step.

\item \textit{Evidence Gathering:} Given a specific key frame from the video and the related evidences, we call GPT-5.6 Luna to select the ones that affects driving with rationale, which is essentially a filter over the evidences. The model is also asked to review the frame and add any missing evidences in the image.

\item \textit{Retrieve Final Action:} We call GPT-5.6 Luna to generate future action of ego car in natural language, and calibrate with the ground truth future trajectory.

\item \textit{Build Graph:} With both the evidences and final action available, we prompt GPT-5.6 Luna to build a graph, where each evidence and final action is a node and the connection between them is an edge. The model is asked to give rationale and a category (e.g. causal, temporal, intentional, regulatory, etc.) for each edge connected.

\item \textit{Description-Based Linting:} With the decision graph generated, we use a rule-based script to conduct a sanity check on the graph, and regenerate those which have invalid node, empty rationale, or no connection to the final action.
\end{itemize}

\begin{figure}[t]
    \centering
    \includegraphics[width=0.6\linewidth]{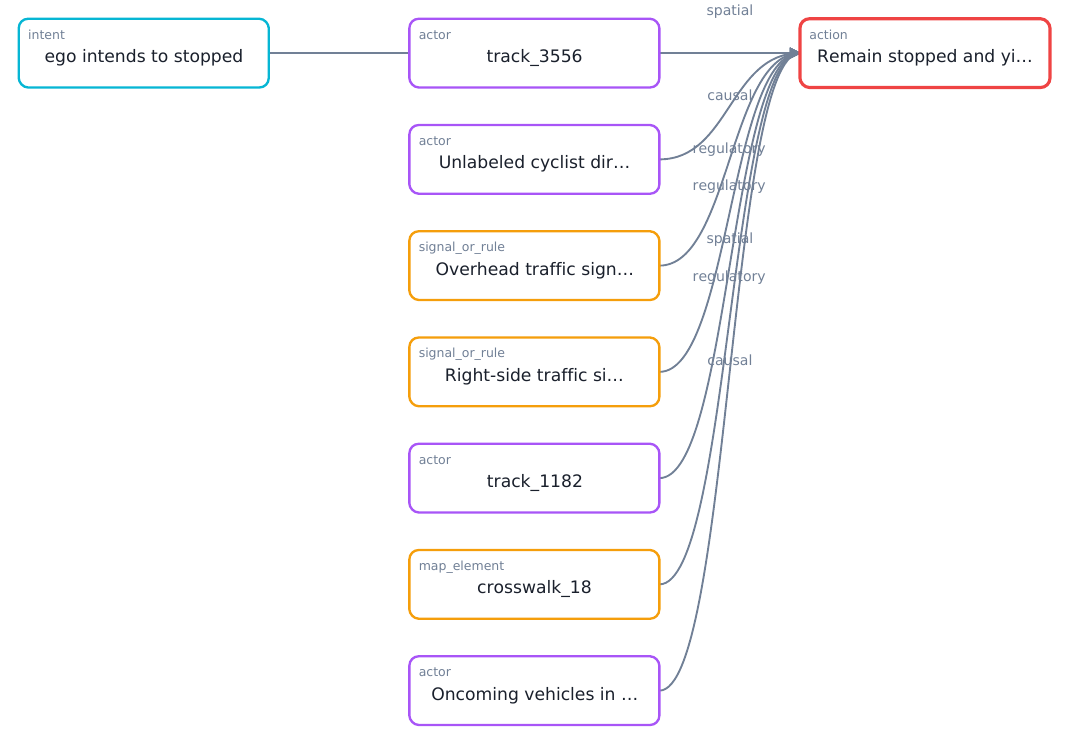}
    \caption{An example of a decision graph.}
    \label{fig:decision-graph}
\end{figure}

Fig.~\ref{fig:decision-graph} shows an example of decision graph. Notably, as the decision graph requires multiple VLM calls and can be expensive in practice, we only generate such graph for every 1.2s (i.e. every three keyframes), and then use GPT-5.6 Luna to fill in memory for the time between decision graphs.

\textbf{Memory curation.} With decision graphs for each key frame, we prompt GPT-5.6 Luna to curate the memory by first describing the nodes in the graph connected to the final action, then asking the model to review and merge memories from the same clip and ``connect'' them by keeping track of the same object throughout the text. We found that merging memory is vital, as repeated description of the same item is not only lengthy, but can also be misleading; for example, the stop caused by a pedestrian on the road, if described repeatedly, can mislead the model on the judgment of the traffic light. See Sec.~\ref{sec:port_exp} for an example. Notably, the memories are generated independently from VQA; this is because if the memory overfits toward a particular VQA question, it will only focus on a small set of object of interest and miss most items in the scene that affects driving.

\textbf{VQA curation.} We curate two types of VQA questions with GPT-5.6 Luna: the easy type and the hard type, where the former directly asks about an object of interest in the video, while the latter is multi-hop, i.e., asks the relative change between two time frames by checking the decision graph difference between the two frames. We use the former in the training set and the latter in the test set. This is because we found that training and testing on the same type of VQA questions will overfit and only illustrates the memorization of the dataset instead of genuine scene understanding; see Sec.~\ref{sec:example_gen} for examples of the dataset, Sec.~\ref{sec:prompt} for prompts on curating the questions, and Sec.~\ref{sec:overfit_vqa} / Sec.~\ref{sec:ablation_dg} for ablations.

\subsubsection{Statistics}

Our general driving dataset has a total of 27246 SFT training clips, 9088 RL training clips and 9112 test clips; Fig.~\ref{fig:gen-stats} shows the distributions of the number of keyframes. On average, the length of each entry of memory is 29.054 tokens.

\begin{figure}[t]
    \centering
    \includegraphics[width=0.6\linewidth]{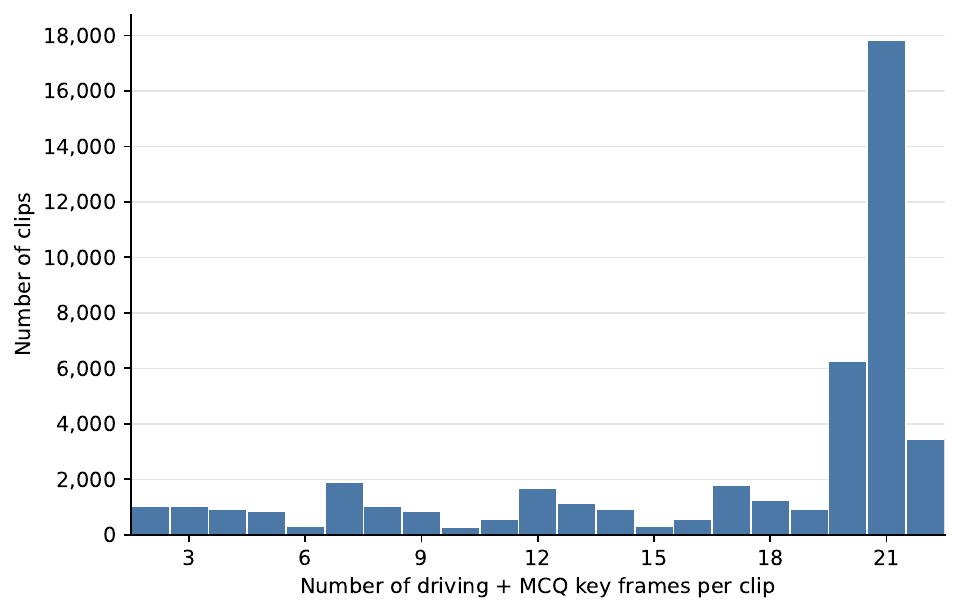}
    \caption{The distributions of the number of keyframes in the general driving dataset. Most clips are around 21 frames, which translates to 8 seconds of video.}
    \label{fig:gen-stats}
\end{figure}

\subsubsection{Examples}
\label{sec:example_gen}
Below we show an example of questions and corresponding memories in our general driving dataset, with the image input illustrated in Fig.~\ref{fig:example_gen}:

\begin{figure}[t]
    \centering
    \includegraphics[width=\linewidth]{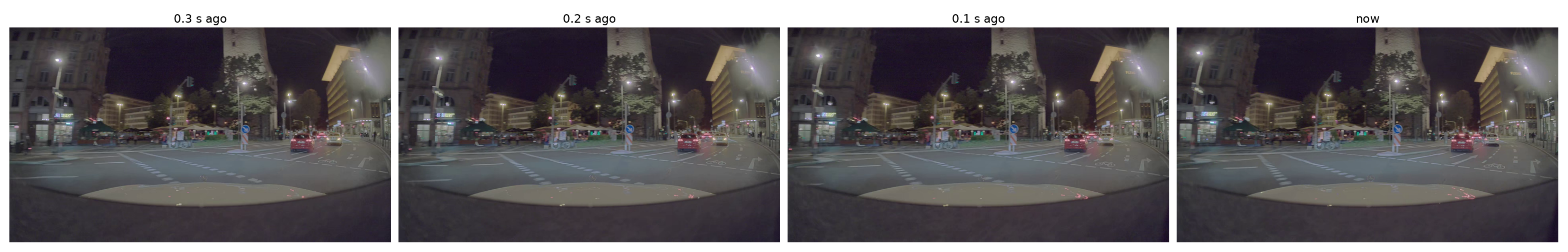}
    \caption{An example of the visual input for the general driving dataset at the VQA frame.}
    \label{fig:example_gen}
\end{figure}

\textit{Example Easy Question:}

{
\setlength{\fboxsep}{0.3cm}
\ovalbox{\small
\begin{minipage}{0.95\linewidth}
What does the ego vehicle do throughout the sequence as it approaches and
traverses the intersection?

\textbf{A.} It stops before the intersection, waiting for cross traffic to
clear.

\textbf{B.} It turns left across opposing traffic after the signal changes
red.

\textbf{C.} It pulls toward the curb and parks beside the tour storefront.

\textbf{D.} It continues straight across the intersection without changing
its lane position.

\textbf{E.} It reverses briefly, then waits behind the stopped red car ahead.

\textbf{F.} It proceeds through the green-lit intersection, following the
roadway's rightward curve.
\end{minipage}
}
}

\textit{Example Hard Question \#1:}

{
\setlength{\fboxsep}{0.3cm}
\ovalbox{\small
\begin{minipage}{0.95\linewidth}
What sequence links the scene at 5.6 seconds ago with the later situation at
2.0 seconds ago?

\textbf{A.} Left-side traffic and a right-side parked vehicle constrained
passage; later, a red vehicle followed ahead right-to-left.

\textbf{B.} Left-side traffic and a right-side parked vehicle constrained
passage; later, a red vehicle crossed ahead left-to-right.

\textbf{C.} Left-side traffic and a right-side parked vehicle allowed passage;
later, a red vehicle crossed ahead right-to-left.

\textbf{D.} Right-side traffic and a left-side parked vehicle constrained
passage; later, a red vehicle crossed ahead right-to-left.

\textbf{E.} Left-side traffic and a right-side parked vehicle constrained
passage; later, a white vehicle crossed ahead right-to-left.

\textbf{F.} Left-side traffic and a right-side parked vehicle constrained
passage; later, a red vehicle crossed ahead right-to-left.
\end{minipage}
}
}

\textit{Example Hard Question \#2:}

{
\setlength{\fboxsep}{0.3cm}
\ovalbox{\small
\begin{minipage}{0.95\linewidth}
Which sequence best describes the ego vehicle's response as it approached and
then traversed the intersection?

\textbf{A.} It began stopped behind traffic, then merged right while continuing
through the intersection.

\textbf{B.} It began stopped near the crosswalk, then turned left after nearby
pedestrians cleared its path.

\textbf{C.} It began stopped amid tight lateral clearances, then proceeded
straight while preserving crosswalk clearance.

\textbf{D.} It began moving beside parked vehicles, then stopped only after
clearing the crosswalk.

\textbf{E.} It began moving through the intersection, then continued straight
after stopping beside roadside vehicles.

\textbf{F.} It began moving behind traffic, then remained stopped as oncoming
vehicles cleared the intersection.
\end{minipage}
}
}

\textit{Example CoT as Memory:}

{
\setlength{\fboxsep}{0.3cm}
\ovalbox{\small
\begin{minipage}{0.95\linewidth}
\texttt{[-0.4s]} the right lane change due to slower traffic ahead in the
current lane.

\centerline{$\ldots$}

\texttt{[-2.0s]} the right turn into the rightmost lane because the signal
permits the turn, the lead car ahead is already turning, and the crosswalk and
bike lane to the right are clear of conflicts.

\centerline{$\ldots$}

\texttt{[-4.8s]} to merge right behind the lead car because the intersection
and crosswalk ahead require a rightward merge.

\texttt{[-5.6s]} to turn right at the intersection to follow the lead vehicle.

\texttt{[-6.4s]} to stop due to the red traffic light.

\centerline{$\ldots$}

\texttt{[-8.4s]} to merge left due to the lead vehicle moving left ahead.
\end{minipage}
}
}

\textit{Example Memory:}

{
\setlength{\fboxsep}{0.3cm}
\ovalbox{\small
\begin{minipage}{0.95\linewidth}
\texttt{[-0.4s]} The ego vehicle continues straight past the intersection
crosswalk behind the red vehicle, with additional traffic ahead and a marked
bicycle lane on the right.

\centerline{$\ldots$}

\texttt{[-2.0s]} The ego vehicle continues through the intersection while the
red crossing vehicle moves farther left and vulnerable road users remain near
the crosswalk.

\centerline{$\ldots$}

\texttt{[-5.6s]} The ego vehicle remains stopped as the oncoming vehicle
continues past on the left and vulnerable road users remain near the crosswalk
and right curb.

\texttt{[-6.0s]} The ego vehicle remains stopped with the vehicle ahead
waiting, a large black vehicle at the right edge, an oncoming vehicle alongside
on the left, and vulnerable road users near both sides of the crosswalk.

\centerline{$\ldots$}

\texttt{[-8.4s]} The ego vehicle is stopped at a nighttime urban intersection
approach with an oncoming vehicle on the left, a vehicle directly ahead, a
vehicle close along the right edge, crosswalks ahead and left, and pedestrians
near both crossings.
\end{minipage}
}
}

\textit{Example Answer:}

{
\setlength{\fboxsep}{0.3cm}
\ovalbox{\small
\begin{minipage}{0.95\linewidth}
\textbf{Ours.}

\textbf{Easy: F.} The ego vehicle proceeds through the green-lit intersection,
following the roadway's rightward curve.

\textbf{Hard \#1: F.} Traffic occupied the left side while a parked vehicle
narrowed the right side, constraining straight passage. A red vehicle later
crossed ahead from right to left.

\textbf{Hard \#2: C.} The ego vehicle was initially stopped while nearby
traffic and close roadside vehicles constrained the approach. It then began
moving straight through the intersection while preserving crosswalk clearance.

\medskip
\textbf{CoT as memory.}

\textbf{Easy: F.} The ego vehicle proceeds through the green-lit intersection,
following the roadway's rightward curve.

\textbf{Hard \#1: F.} The constrained passage was followed by the red vehicle
crossing ahead from right to left.

\textbf{Hard \#2: C.} The ego vehicle changes from being stopped amid tight
lateral clearances to proceeding through the intersection.

\end{minipage}
}
}

\subsection{Prompts for Data Generation}
\label{sec:prompt}

Below we show the core prompt for data generation. We first show the prompt for filtering the challenging clips for the general driving dataset. We only disclose excerpts of the prompt for both brevity and protection of proprietary information:

{
\setlength{\fboxsep}{0.3cm}
    \ovalbox{\small
        \begin{minipage}{0.95\linewidth}
        
Find challenging scenario that requires deep human reasoning to drive through safely. Key criteria include:

    - Complex interactions between multiple road users (e.g., pedestrians, cyclists, vehicles) that require careful negotiation and anticipation.

    - Unusual or unexpected events (e.g., sudden pedestrian crossing, erratic behavior from other drivers) that demand quick and accurate decision-making.
    \\
    
    and exclude:
    \\
    - only concern is low visibility or adverse weather, without complex road user interactions or unexpected events.
    \\
    ...
    \\
\end{minipage}
    }
}

{
\setlength{\fboxsep}{0.3cm}
    \ovalbox{\small
        \begin{minipage}{0.95\linewidth}

         Analyze the video and motion to deduce ego's driving decision at the keyframe according to the pre-defined decision list below. Here's a list of pre-defined LONGITUDINAL and LATERAL driving decisions:
        
        - LONGITUDINAL decisions
        
            -- Stop
            
            Decelerate to, hold at stop/yield lines or other control points (traffic light, stop sign, school bus/railroad rules, blocked path by lead vehicle).
            
            -- Yield
            
                Yield to the [pedestrian/cross-traffic/cyclist/emergency vehicle] / give way for / wait for priority traffic to clear.

    ...

    Priority policy (must follow):

        1. First check if ego is conducting 'interesting' decisions:
        
           - LONGITUDINAL: 'Yield', 'Stop', 'Pass/Overtake', 'Gap-search', 'Adapt Speed'.
        
           - LATERAL: 'Lane Change', 'Nudge', 'Turn', 'Merge', 'Split'.
           
           - If any interesting decision is clearly present, output that decision and do not downgrade to generic 'Keep Distance', 'Keep Lane' or 'Resume Speed'.

         2. Lane change and nudge are top-priority lateral decisions:
       
       ...

 Critical definitions:
     
        1. Lead vehicle means a vehicle ahead in the same lane and moving in the same direction.
        
    ...

    Best practices:

    1. Use "crosswalk" and only mention it when a pedestrian is walking across the road.

    ...
    
    Format:

    ...
       
        \end{minipage}
    }
}

Next, we show the prompt for VQA curation of the general driving dataset:

{
\setlength{\fboxsep}{0.3cm}
    \ovalbox{\small
        \begin{minipage}{0.95\linewidth}
        
       \textbf{\# Easy questions}

       You are an expert autonomous-driving scene analyst.

You are given front-wide-camera frames sampled at 2 frames per second from
exactly the interval beginning at this clip's first retained driving frame and
ending at its final retained driving frame. Frames are ordered oldest to newest.
A prior caption is supplied only to help locate potentially relevant actors; it
may be wrong, so determine behavior from the frames.

Prior caption:
\{prior\_description\}

Create one six-option multiple-choice question about a temporal behavior or ego-object interaction that requires the full interval, not only the final frame. If no other actor is salient, ask about ego motion or traffic-control response. Audio is unavailable.

Requirements:

- Exactly six mutually exclusive options A-F.

- correct\_answer must be A before downstream deterministic shuffling.

- Options must be plausible and scene-compatible, with matched grammar, specificity, certainty, and action intensity.

- Each option must have 8-12 words; longest-shortest difference at most 3 words.

- Do not use memory notes or information outside the supplied frames.

- Return only the requested JSON object.

        \end{minipage}
    }
}

{
\setlength{\fboxsep}{0.3cm}
    \ovalbox{\small
        \begin{minipage}{0.95\linewidth}
        
       \textbf{\# Hard question 1}

      You are an expert autonomous-driving scene analyst and a rigorous temporal
multiple-choice-question author.
You receive two synchronized sources for one driving clip:

1. Front-wide-camera images ordered oldest to newest. Immediately before every
   image is a TEXT label of the form ``Frame time: N.N seconds ago'' or
   ``Frame time: now''. The label is not drawn on the image. Images are sampled at
   approximately 2 frames per second, so adjacent labels are usually about
   0.5 seconds apart. ``Now'' is the final question image.
   
2. Raw structured decision-graph snapshots sampled on a 1.2-second source grid.
   Every snapshot is explicitly marked with its time relative to ``now'' and
   contains compact `evidence', `nodes', and `edges'. Node fields include id,
   type, label, and confidence. Edge fields include source, target, relation,
   rationale, and confidence. Use this structure to follow entities and
   decision relations over time, but verify every question-critical claim
   against the images.

Important graph interpretation rules:

- An `action' node is a hypothetical safe decision, not necessarily observed ego
  behavior. Never present it as something ego actually did unless images show it.
  
- Internal node/track IDs are for linking snapshots only and must never appear in
  a question or option.
  
- A snapshot marked slightly "after now" exists only because source and camera
  timebases can differ within the stated alignment tolerance. Never use a fact
  visible only after now.
  
- Missing snapshots must not be interpolated or invented.

Raw 1.2-second decision-graph context:\{decision\_graph\_context\}

Produce exactly ONE `past\_anchor' six-option MCQ. Analyze the complete clip before writing:

1. Track every salient actor, traffic control, road feature, hazard, and the ego
   vehicle across time. Resolve which references denote the same entity.
   
2. Identify state changes, temporal order, interactions, and causal or regulatory
   links. Images are authoritative when they conflict with graph snapshots.
   
3. Choose one exact supplied historical time and write it naturally as
   ``N.N seconds ago''. Never use notation such as ``t=-2.4s''.
   
4. The answer must require both the anchored historical evidence and evidence
   from a second supplied time. Do not ask for a static fact at the anchor alone.
   
5. Apply the final-image-only counterfactual: hide every historical image and all
   graph snapshots. If a careful observer could answer from "now" alone,
   reject and rewrite the question.

Time and spatial ambiguity rules:

- Do not ask ``when did X happen?'' when X persists across multiple images. A
  state covering an interval cannot be assigned a unique point timestamp.
  
- Prefer questions about a change after the anchor, an interaction spanning the
  anchor and a later time, or the complete anchored state needed to explain a
  later response.
  
- Exactly one option must be completely correct under every reasonable reading.

- Do not mix overlapping spatial axes. If an actor is ahead-left, ``ahead'' and
  ``left'' are both partially true. Ask about one explicitly named axis:
  lateral = left/center/right or longitudinal = ahead/alongside/behind. Otherwise
  use matched composite relations such as ahead-left/ahead-center/ahead-right in
  every option.
  
- Avoid nested categories such as moving versus accelerating, stopped versus
  yielding, vehicle versus truck, or crossing versus ahead-left.
  
- Distinguish actor identity whenever two similar actors could match.

- State in `ambiguity\_check' why A is uniquely complete, why the strongest
  distractor is false, and why the anchor does not represent a persistent-state
  timing ambiguity.

Distractor quality and wording:

- Distractors must be realistic near-misses for this exact scene. Prefer changing one subtle attribute: temporal order, matched spatial relation, speed trend,
  signal state, actor identity, or ego response.
  
- Never use obviously absurd, dangerous, or extreme actions merely to fill an
  option. In particular, no option may use reverse/reversed/reversing,
  swerve/swerved/swerving, crash/collide/collision, U-turn, driving onto a
  sidewalk or curb, entering oncoming traffic, spinning, or rolling over.
  
- Do not invent an actor type or maneuver unsupported by the clip.

- Avoid giveaway modifiers such as ``suddenly'', ``recklessly'', ``violently'',
  ``immediately'', ``without reason'', ``at full speed'', or ``completely ignored''.
  
- Match grammar, semantic axis, specificity, certainty, and action intensity across all six options.

Output requirements:

- Exactly six mutually exclusive options A-F.

- `correct\_answer` must be A before downstream deterministic shuffling.

- Each option must contain 8-16 words; longest-shortest difference at most  5 words.

- Do not ask about a static fact, count, color, or location visible at ``now''.

- Do not ask about audio, hidden intent, or information outside supplied inputs.

- Do not expose internal graph IDs or refer to graphs, memory, notes, or frames in the question or options.

- Return only the requested JSON object.

        \end{minipage}
    }
}

{
\setlength{\fboxsep}{0.3cm}
    \ovalbox{\small
        \begin{minipage}{0.95\linewidth}
        
       \textbf{\# Hard question 2}

... (same as hard question 1) ...

Produce exactly ONE `multi\_hop' six-option MCQ.

Analyze the complete clip before writing:

1. Track every salient actor, traffic control, road feature, hazard, and the ego
   vehicle across time. Resolve which references denote the same entity.
   
2. Identify state changes, temporal order, interactions, and causal or regulatory
   links. Images are authoritative when they conflict with graph snapshots.
   
3. Build an evidence chain from at least two distinct supplied times. At least
   one indispensable fact must be historical.
   
4. The answer must require combining at least two different decision-relevant
   facts, such as an actor state change plus ego response, a traffic-control
   transition plus actor interaction, or temporal order plus the resulting
   maneuver. Comparing two locations or restating one change is not multi-hop.
   
5. Apply the final-image-only counterfactual: hide every historical image and all
   graph snapshots. If a careful observer could answer from ``now'' alone,
   reject and rewrite the question.
   
6. Do not put numerical timestamps in the question or options.

Mandatory ambiguity audit:

... (same as hard question 1) ...

        \end{minipage}
    }
}

Then, we show the prompt for memory merging of the general driving dataset below:

{
\setlength{\fboxsep}{0.3cm}
    \ovalbox{\small
        \begin{minipage}{0.95\linewidth}
        Fill and rewrite the complete 0.4-second memory sequence for this driving clip. Every labeled image
is one required output timestamp. Some images include an independently generated anchor memory;
the other images explicitly say that no independent memory exists. Return exactly one memory for
every image timestamp, in the same order.

Rules:

1. Use every image, the anchor memories, and temporal continuity together. Preserve all factual
   anchor information, but do not invent details unsupported by either an image or an anchor.
   
2. Locate changes at the earliest 0.4-second image where they are visibly supported. For example,
   if a signal changes between two 1.2-second anchors, inspect all intermediate images and mention
   the new color only at the frame where it first appears. Apply the same frame-accurate treatment
   to ego acceleration/deceleration, stopping, starting, turning, lane changes, actor motion, and
   visibility changes.
   
3. The first output establishes the scene. Later outputs are concise temporal deltas. Omit
   unchanged static background such as an already established signal color, sign, crosswalk, lane,
   snowbank, or road surface until it changes. Every frame must still contain useful supervision:
   use concise continuity wording for reliably visible dynamic subjects when no new event occurs.
   
4. Track the same physical subject across frames. Introduce it once with enough visible appearance,
   location, and behavior to identify it uniquely, then use a stable natural reference while
   unambiguous.
   
5. When reliable observed ego behavior is visible, put it first and begin with "The ego vehicle".
   Describe ego motion only qualitatively, such as stopped, starting, accelerating, decelerating,
   moving straight, turning, or changing lanes. NEVER output a numeric ego speed or units such as
   m/s, mph, or km/h in any output frame. If ego behavior is uncertain, omit the ego vehicle
   completely. Never write "The ego motion remains unspecified", "The ego vehicle's observed
   motion is unspecified", "No ego maneuver is evidenced", or any equivalent placeholder.
   
6. An anchor memory equal to "N/A" is a failed generation, not a fact. Recover that frame from its
   image and temporal context; never copy "N/A" into the output.
   
7. Never emit track IDs, lane ordinals, node/evidence IDs, or raw timestamps inside
   memory\_text. Do not report hypothetical safe actions or future plans as actual behavior.
   
8. Return one concise present-tense sentence per timestamp. Do not leave any output empty.

The response must match the requested JSON shape. Copy each input timestamp exactly into the
corresponding output object; put only the completed sentence in memory\_text.
       
        \end{minipage}
    }
}

Finally, we report the prompt for generating answers for VQA:

{
\setlength{\fboxsep}{0.3cm}
    \ovalbox{\small
        \begin{minipage}{0.95\linewidth}
        
       You are verifying a fixed-ground-truth multiple-choice question about an
autonomous-driving scene.

You receive:
1. one current ego front-wide-camera image;
2. timestamped scene memory from earlier frames;
3. a fixed question with six fixed options; and
4. the fixed correct answer label and text.

Write one concise rationale that explains why the fixed correct answer follows
from the current image and the supplied earlier scene memory.

Requirements:
- Do not change or challenge the supplied answer.
- Use only evidence visible in the current image or stated in the supplied
  memory.
- Describe the relevant temporal change or ego-object interaction directly.
- Do not mention "image", "frame", "memory", "notes", "prompt", or the answer
  label.
- Do not introduce precise numbers, identities, directions, or events that are
  absent from the supplied evidence.
- Use one or two sentences and no answer tag.
- If the supplied evidence genuinely contradicts the fixed answer or cannot
  support any concise rationale, set supported to false instead of inventing
  evidence.
- Return only the requested JSON object.

Earlier scene memory:
\{memory\_block\}

Fixed question:
\{question\}

Fixed options:
\{options\}

Fixed correct answer:
\{correct\_label\}. \{correct\_text\}

        \end{minipage}
    }
}

\section{Additional Experiment Results}
\label{sec:add_exp}
\subsection{More Qualitative Results}
\label{sec:qualitative}

\begin{figure}[t]
    \centering
    \includegraphics[width=\linewidth]{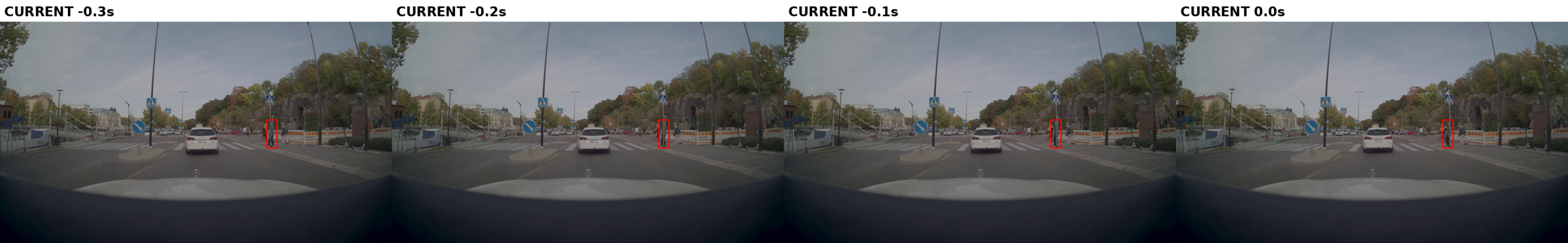}
    \caption{An example where memory benefits general driving; a pedestrian in the red box is occluded behind the crosswalk sign, which is hard to discover; the intention of the pedestrian is even harder to tell.}
    \label{fig:general_example_now}
\end{figure}

\begin{figure}[t]
    \centering
    \includegraphics[width=\linewidth]{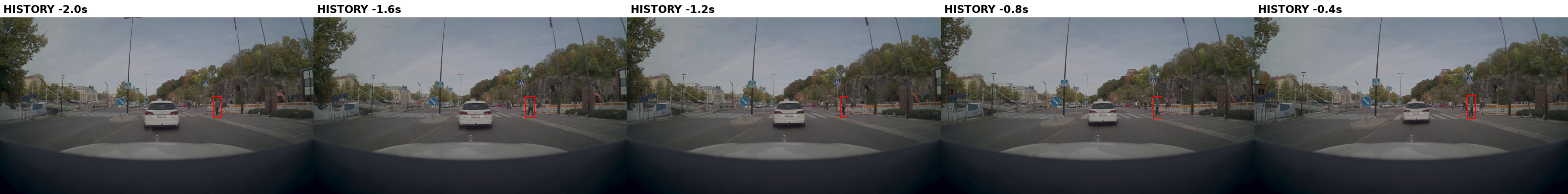}
    \caption{The previous frames of the same clip as that in Fig.~\ref{fig:general_example_now}. With these frames, it is clearly shown that the pedestrian is walking towards the road and the ego car should yield.}
    \label{fig:general_example_past}
\end{figure}

In this section, we illustrate another qualitative example in our general driving dataset (Sec.~\ref{sec:gen_exp}) where language memory helps. Fig.~\ref{fig:general_example_now} shows that a pedestrian is occluded by the crosswalk sign, which is hard to discover in current input; however, once the agent has access to memory as shown in Fig.~\ref{fig:general_example_past}, it is much easier to see that the pedestrian has an intention of walking onto the road. Below shows an example of memory of our agent:

{
\setlength{\fboxsep}{0.3cm}
    \ovalbox{\small
        \begin{minipage}{0.95\linewidth}
[ -0.4 s ] The ego vehicle continues straight toward the crosswalk behind the vehicle ahead while pedestrians remain on the right and center-left portions of the crossing.

[ -0.8 s ] The ego vehicle continues straight behind the vehicle ahead; the pedestrian is farther into the right side of the crosswalk, with another pedestrian near the center-left refuge area.

[ -1.2 s ] The ego vehicle continues straight behind the vehicle ahead as the \textbf{pedestrian moves leftward across the right side of the crosswalk} and the construction barriers remain along the right edge.

[ -1.6 s ] The ego vehicle is moving straight toward a marked crosswalk, following a vehicle directly ahead, with another vehicle ahead-left, \textbf{a pedestrian crossing from the right}, and red-and-white construction barriers narrowing the right edge.

...

        \end{minipage}
    }
}

The memory clearly records the existence and intention of pedestrian. The predicted trajectories are illustrated in Fig.~\ref{fig:general_example_traj}, where our method outperforms no memory and CoT-as-memory.

\begin{figure}[t]
    \centering
   \subfigure[No memory]{
    \includegraphics[width=0.32\linewidth]{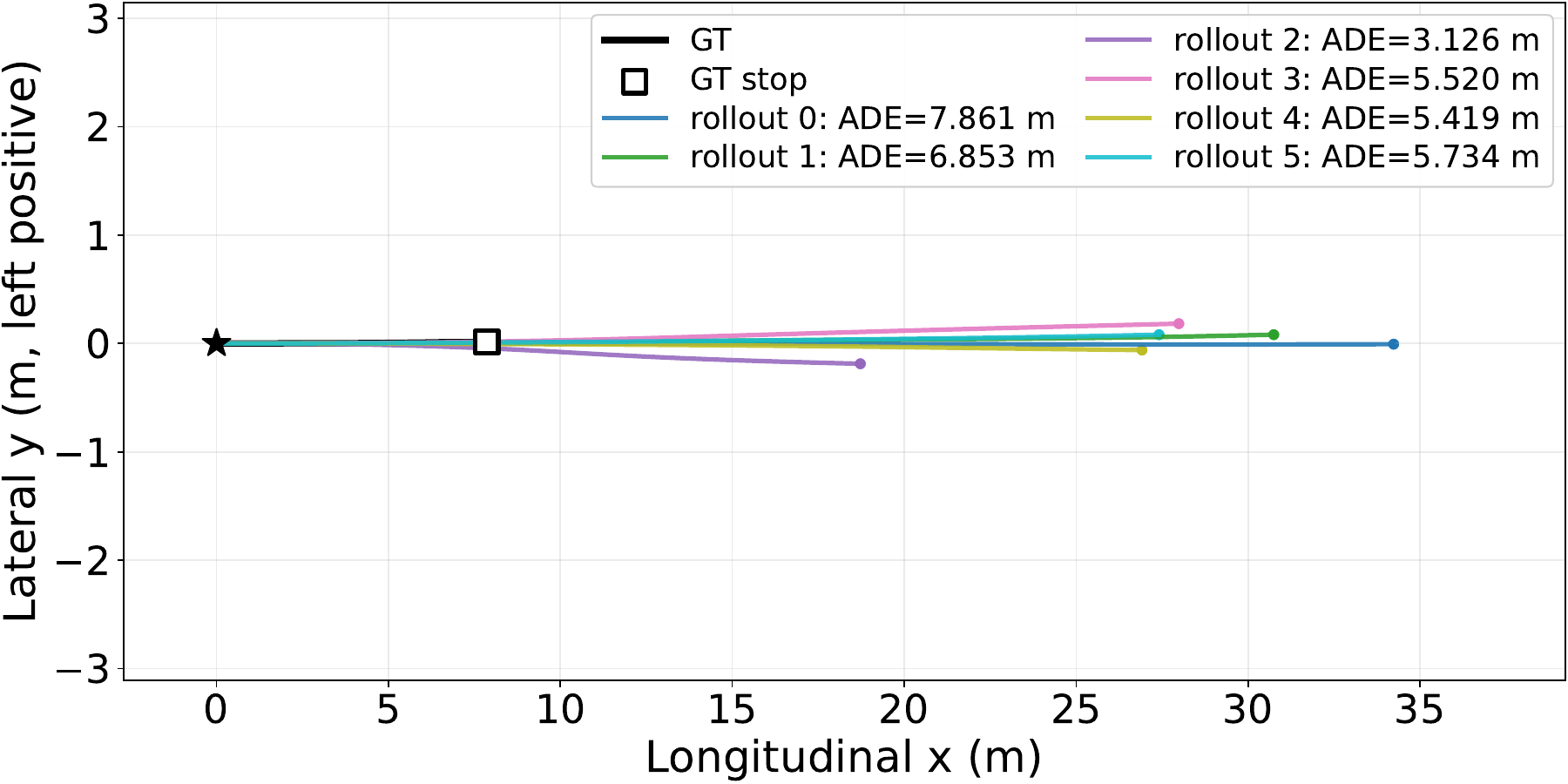}}
    \subfigure[CoT-as-memory]{
    \includegraphics[width=0.32\linewidth]{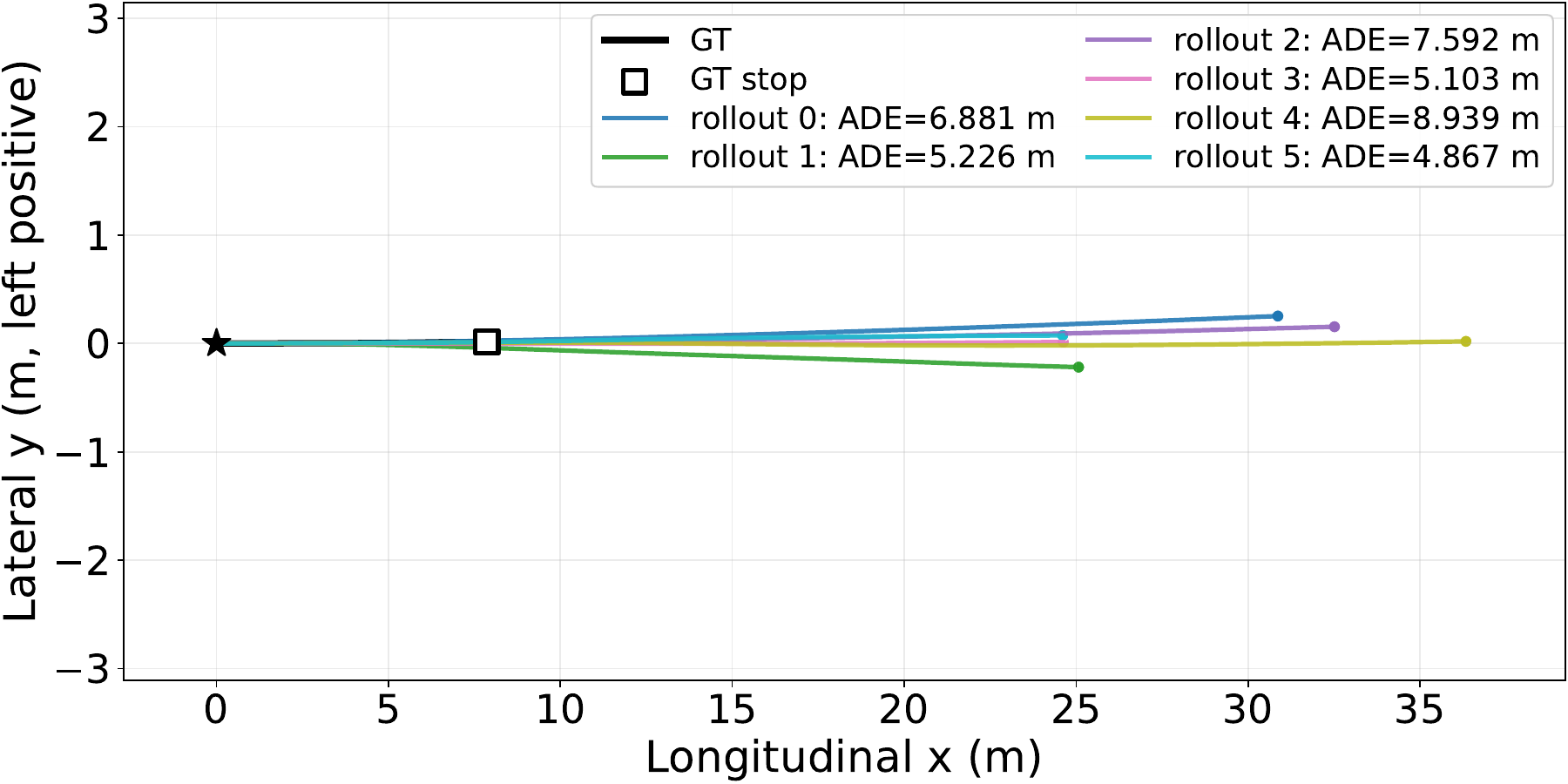}}
    \subfigure[Ours]{
    \includegraphics[width=0.32\linewidth]{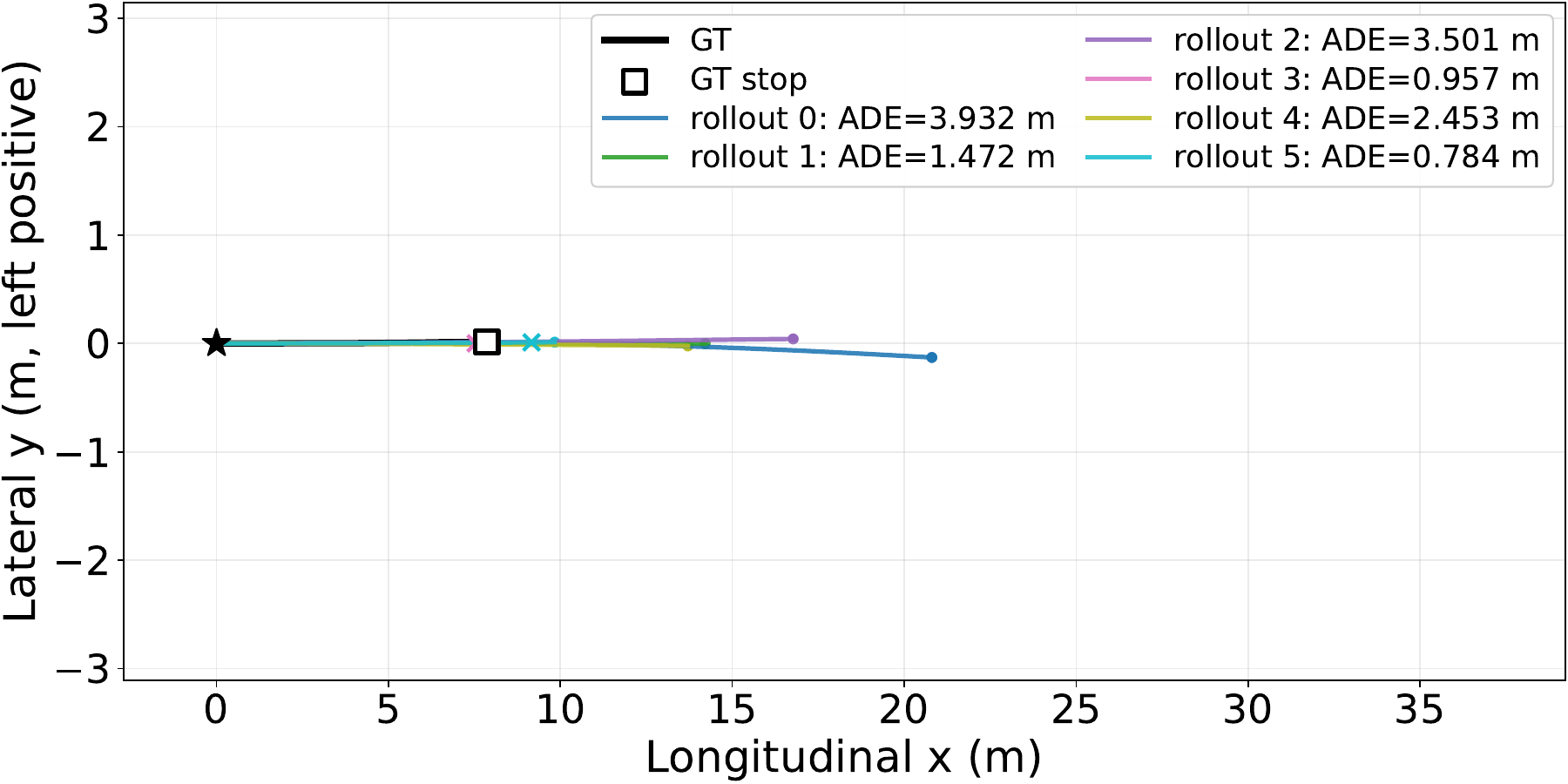}}
    \caption{The trajectories predicted by different methods in the example of Fig.~\ref{fig:general_example_now} and Fig.~\ref{fig:general_example_past}. It is clearly shown that our method outperforms baselines.}
    \label{fig:general_example_traj}
\end{figure}

\subsection{Scaling Up Model Size}

In order to see how our model works with models of larger scale, we further conduct SFT on another 8B checkpoint of Alpamayo 2 on our all-way stop dataset (Sec.~\ref{sec:awsdata}). Tab.~\ref{tab:8b_aws} shows the result, where 8B models generally outperform 2B models; our finetuned model performs the best among 8B models, and also far outperforms Alpamayo 2 Super 34B.

\begin{table}[t]
    \centering
    \scriptsize
    \caption{Results of 8B models on All-Way-Stop v4. The result clearly shows that our 8B model works the best.}
    \begin{tabular}{c@{\hspace{4pt}}ccccccccc}
    \toprule
      & minADE\rlap{$\,\downarrow$} & Avg. ADE\rlap{$\,\downarrow$} & ML. ADE\rlap{$\,\downarrow$} & Stop SR\rlap{$\,\uparrow$} & Go SR\rlap{$\,\uparrow$}
      & $\Delta_{\text{pos}}$\rlap{$\,\downarrow$} & $\Delta_{\text{dur}}$\rlap{$\,\downarrow$}
      & Roll\rlap{$\,\downarrow$} & MCQ Acc.\rlap{$\,\uparrow$}\\
    \midrule
        Alpamayo 2 Super 34B & 0.993 & 2.293 & N/A & 78.75\% & 34.30\% & 2.182 & 1.488 & 15.94\% & 33.19\% \\
        \midrule
        Base model (2B) & 1.386 & 2.351 & 2.393 & 82.06\% & 33.74\% & 1.725 & 1.871 & 11.79\% & 0\% \\
        
        No memory (2B SFT) & 1.102 & 2.185 & 2.318 & 87.04\% & 38.59\% & 1.358 & 1.564 & 6.62\% & 33.77\% \\
        CoT-as-memory (2B SFT) & 1.096 & 2.171 & 2.296 & 86.89\% & 39.37\% & 1.377 & 1.531 & 6.69\% & 45.41\% \\
        Ours (2B SFT) &  \textbf{1.049} & \textbf{2.121} & \textbf{2.256} & \textbf{87.48\%} & \textbf{40.66\%} & \textbf{1.257} & \textbf{1.469} & \textbf{6.05\%} & \textbf{45.99\%} \\
        \midrule
  Base model (8B)
  & 1.315 & 2.254 & 2.332
  & 83.72\% & 35.65\%
  & 1.627 & 1.776 & 10.90\% & 0\% \\
No memory (8B SFT)
  & 1.011 & 2.092 & 2.255
  & 88.11\% & 42.05\%
  & 1.252 & 1.415 & 6.04\% & 39.31\% \\
CoT-as-memory (8B SFT)
  & 1.000 & 2.025 & 2.130
  & 88.18\% & 42.48\%
  & 1.230 & 1.374 & 5.92\% & 44.92\% \\
Ours (8B SFT)
  & \textbf{0.980} & \textbf{1.986} & \textbf{2.071}
  & \textbf{88.99\%}& \textbf{43.24\%}
  & \textbf{1.130} & \textbf{1.352} & \textbf{5.40\%} & \textbf{52.46\%} \\
    \bottomrule
    \end{tabular}
    
    \label{tab:8b_aws}
\end{table}

\subsection{Overfitting Check on VQA of the General Driving Dataset}
\label{sec:overfit_vqa}
The biggest challenge of curating VQA questions on the general driving dataset is that models will easily overfit to questions from the same distribution by memorizing patterns in the dataset without understanding, sometimes even without assigning any attention to the visual input. Many prior autonomous driving benchmarks are haunted by such issue. For example, models on TUMTraffic-VideoQA~\citep{zhou2025tumtraffic} can achieve 71\% accuracy with no visual input, while only achieving $81\%$ accuracy with 11 frames; NuScenes-QA~\citep{qian2024nuscenes} reports an accuracy of 53.4\% by only looking at the question alone; on DriveBench~\citep{xie2025drivebench}, models with no image input can even achieve 95\% GPT-score performance compared to normal input. 


Therefore, to check whether our model overfits on the curated questions after training, and also as a sanity check to the correctness of our curated questions, we test the performance of GPT-5.6 Luna on our ``easy'' and ``hard'' split of MCQ questions, as well as the performance of our checkpoint when trained on the respective split of these questions. The result is illustrated in Tab.~\ref{tab:overfit-vqa}, which gives us the following two insights:

\begin{itemize}
    \item VQAs training on the same distribution will very likely overfit on the test set, even surpassing GPT-5.6 Luna performance with full video. We also tried to use the ``hard'' split as the training set, but the accuracy on test data generated from the same distribution will still reach over 96\%.
    
    \item Our designed memory indeed achieves the best result on the harder test set with different distributions, and can do even better with ground truth memory (as opposed to CoT-as-memory where ground truth memory does not help).
 
\end{itemize}

\begin{table}[t]
    \centering
    \small

    \caption{The MCQ accuracy on our ``easy'' VQA questions (i.e. the one in the training set) and ``hard'' VQA questions (i.e. the one in the test set) for the general driving dataset. GPT-5.6 Luna with full video serves as a sanity check and verifier for the correctness of our generated VQA questions. Avg. test set is the average of hard \#1 and hard \#2; the example for each type is listed in Sec.~\ref{sec:datastats-gen}. It is clearly shown that test on the same distribution of VQA questions as training causes overfitting. Thus, we deliberately use different questions for the training and test set.}
    
    \begin{tabular}{ccccc}
    \toprule
         & Easy & Hard (\#1, past-anchor) & Hard (\#2, multi-hop) & Avg. test set \\
    \midrule
    Random & 16.67\% & 16.67\% & 16.67\% & 16.67\% \\
      Alpamayo 2 Super 34B & 48.45\% & 25.77\% & 25.65\% & 25.71\% \\
       No memory & \textbf{98.14\%} & 54.93\% & 61.10\% & 58.02\% \\
       CoT-as-Memory & 98.02\% & 51.46\% & 63.56\% & 57.51\% \\
       Ours  &  97.92\% & \textbf{60.92\%} & \textbf{70.34\%} & \textbf{65.63\%}\\
       \midrule
       CoT-as-Memory (GT mem) & 97.99\% & 51.25\% & 63.48\% & 57.36\% \\
       Ours (GT mem)  & 98.22\% & 63.00\% & 72.29\% & 67.65\%\\
    \midrule
    GPT-5.6 Luna + full video & 94.34\% & 89.14\% & 91.57\% & 90.35\% \\
    \bottomrule
    \end{tabular}
    
    \label{tab:overfit-vqa}
\end{table}

\subsection{Which Scene Does AD-Memo Benefit the Most?}

In this section, we will investigate our test set and to locate the scenes where AD-Memo benefits the most. Following prior work~\citep{tan2025latent}, we ask GPT-5.6 Luna to classify the clips in our test set into 15 categories, including turning maneuver, speed control, merging, lane change, cut-in, nudge maneuver, nudge static obstacle maneuver, Vulnerable Road Users (VRU), intersection navigation, lane keeping curve, lead vehicle following, traffic control compliance, lane keeping, stop for vehicle, and others. Fig.~\ref{fig:piechart} shows the ratio of each type of data, and Tab.~\ref{tab:perscene} shows the performance difference between our method and the baseline without memory. The result shows that our method improves more on the performance of average and the most likely rollout, and works better on the more common challenging scenes, especially traffic control compliance, lead vehicle following, and stop for vehicles.

\begin{figure}[t]
    \centering
    \includegraphics[width=0.7\linewidth]{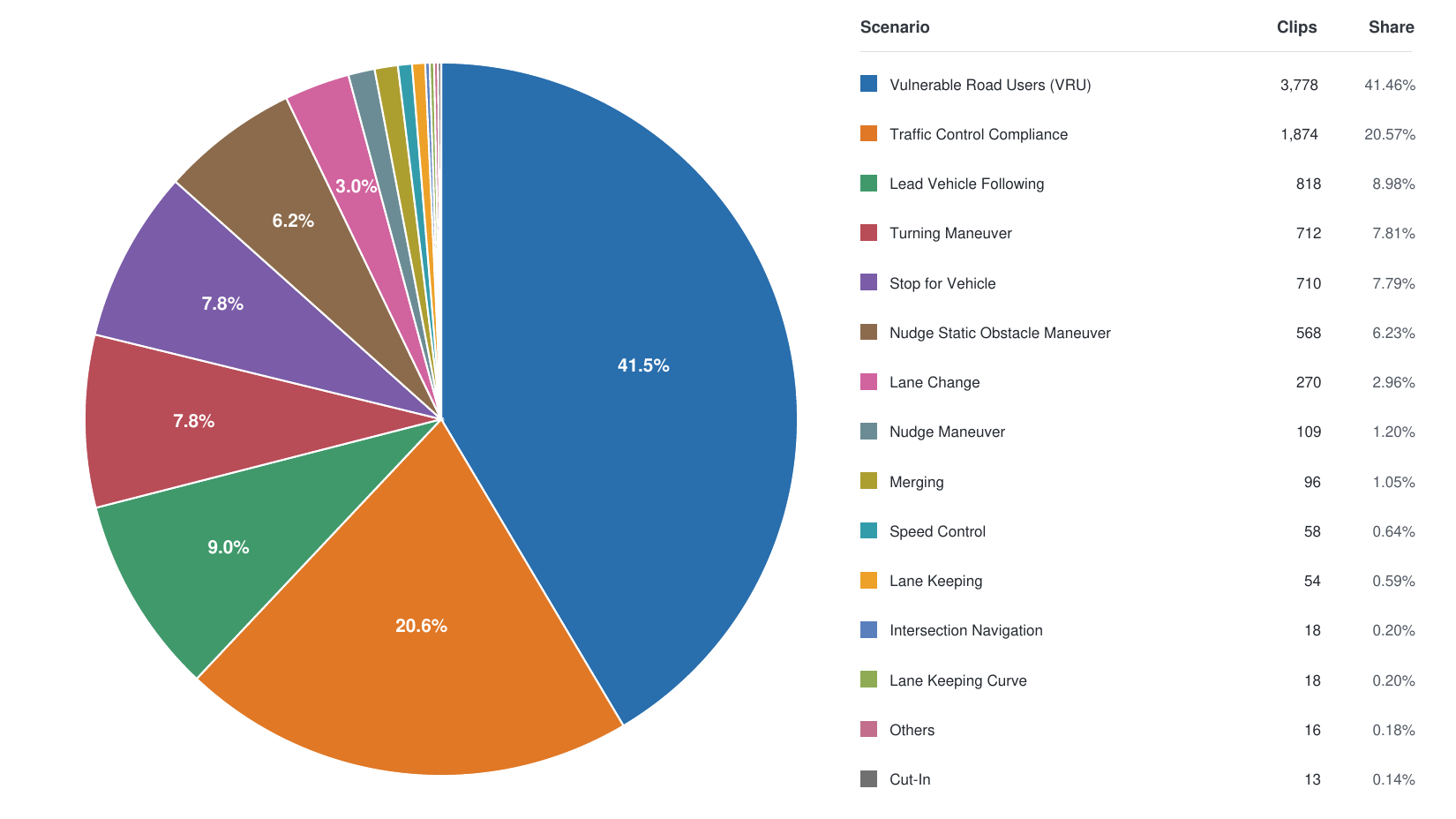}
    \caption{The distribution of different scenes in the test set of our general driving dataset.}
    \label{fig:piechart}
\end{figure}

\begin{table}[t]
    \centering
    \scriptsize

\caption{Per-scenario open-loop trajectory error gains of our model
    over the no-memory model (after RL); as it is the \textit{gain}, the higher the better, i.e., \textcolor{green!50!black}{positive
    values indicate improvement}, and \textcolor{red}{vice versa.} The categories are sorted in descending order of their shares. All values are in meters. It is shown that AD-Memo outperforms the no-memory model on the more common scenes, especially traffic control compliance, lead vehicle following and stop for vehicles.}
    
 \begin{tabular}{cccc}
    \toprule
         & ADE gain & FDE gain & Shares\\
         & (min/Avg./ML) & (min/Avg./ML) & (\%)\\
    \midrule
    Vulnerable Road Users
    & \textcolor{red}{-0.011}/+0.010/\textcolor{green!50!black}{+0.018}
    & \textcolor{red}{-0.011}/\textcolor{green!50!black}{+0.054}/\textcolor{green!50!black}{+0.072} & 41.46\% \\

    Traffic Control Compliance
    & \textcolor{green!50!black}{+0.013}/\textcolor{green!50!black}{+0.023}/\textcolor{green!50!black}{+0.030} 
    & \textcolor{green!50!black}{+0.089}/\textcolor{green!50!black}{+0.101}/\textcolor{green!50!black}{+0.105} & 20.57\% \\

    Lead Vehicle Following
    & +0.003/\textcolor{green!50!black}{+0.017}/\textcolor{green!50!black}{+0.012}
    & \textcolor{green!50!black}{+0.021}/\textcolor{green!50!black}{+0.058}/\textcolor{green!50!black}{+0.031} & 8.98\% \\

    Turning Maneuver
    & -0.002/\textcolor{green!50!black}{+0.024}/\textcolor{green!50!black}{+0.031}
    & \textcolor{red}{-0.019}/\textcolor{green!50!black}{+0.061}/\textcolor{green!50!black}{+0.092} & 7.81\% \\

    Stop for Vehicles
    & +0.008/\textcolor{green!50!black}{+0.011}/\textcolor{green!50!black}{+0.017}
    & \textcolor{green!50!black}{+0.057}/\textcolor{green!50!black}{+0.072}/\textcolor{green!50!black}{+0.067} & 7.79\% \\

    Nudge Static Obstacle Maneuver
    & -0.006/\textcolor{green!50!black}{+0.030}/-0.001
    & \textcolor{red}{-0.040}/\textcolor{green!50!black}{+0.063}/\textcolor{red}{-0.035} & 6.23\% \\

    Lane Changes
    & \textcolor{green!50!black}{+0.020}/\textcolor{green!50!black}{+0.060}/\textcolor{green!50!black}{+0.060}
    & \textcolor{green!50!black}{+0.021}/\textcolor{green!50!black}{+0.143}/\textcolor{green!50!black}{+0.124} & 2.96\% \\

    Nudge Maneuver
    & +0.002/\textcolor{green!50!black}{+0.042}/\textcolor{green!50!black}{+0.067}
    & \textcolor{green!50!black}{+0.051}/\textcolor{green!50!black}{+0.081}/\textcolor{green!50!black}{+0.154} & 1.20\% \\

    Merging
    & -0.010/\textcolor{green!50!black}{+0.031}/\textcolor{red}{-0.017}
    & \textcolor{red}{-0.013}/\textcolor{green!50!black}{+0.092}/\textcolor{red}{-0.068} & 1.05\% \\

    Speed Control
    & \textcolor{red}{-0.043}/\textcolor{green!50!black}{+0.033}/\textcolor{green!50!black}{+0.030}
    & \textcolor{red}{-0.089}/\textcolor{green!50!black}{+0.159}/\textcolor{green!50!black}{+0.136} &  0.64\% \\

    Lane Keeping
    & \textcolor{red}{-0.036}/\textcolor{red}{-0.031}/\textcolor{red}{-0.012}
    & \textcolor{red}{-0.155}/\textcolor{red}{-0.132}/\textcolor{red}{-0.049} & 0.59\%\\

    Intersection Navigation
    & \textcolor{green!50!black}{+0.022}/\textcolor{green!50!black}{+0.051}/\textcolor{red}{-0.128}
    & \textcolor{red}{-0.238}/\textcolor{red}{-0.035}/\textcolor{red}{-0.503} & 0.20\%\\

    Lane Keeping Curve
    & \textcolor{red}{-0.089}/\textcolor{red}{-0.061}/\textcolor{green!50!black}{+0.024}
    & \textcolor{red}{-0.455}/\textcolor{red}{-0.258}/\textcolor{red}{-0.045} & 0.20\%\\

    Others
    & \textcolor{red}{-0.270}/\textcolor{red}{-0.337}/\textcolor{red}{-0.332}
    & \textcolor{red}{-1.041}/\textcolor{red}{-1.220}/\textcolor{red}{-1.398} & 0.18\%\\

    Cut-in
    & \textcolor{red}{-0.128}/\textcolor{red}{-0.068}/+0.003
    & \textcolor{red}{-0.396}/\textcolor{red}{-0.211}/\textcolor{red}{-0.190} & 0.14\%\\
    \bottomrule
    \end{tabular}

    \label{tab:perscene}
\end{table}

\subsection{The Necessity of Decision Graph}
\label{sec:ablation_dg}

As we mentioned in Sec.~\ref{sec:generaldata}, the purpose of building decision graphs is to identify the objects on the road that affects driving. In this section, we illustrate the necessity of decision graph by providing an example in Fig.~\ref{fig:dg_ablation}, where we compare the memory directly generated by simply feeding the whole video into GPT-5.6 Luna and prompting the model to write memory. The generated memories are as follows:

{
\setlength{\fboxsep}{0.3cm}
\ovalbox{\small
\begin{minipage}{0.95\linewidth}
\texttt{Direct generation:} The pedestrian is at or very near the right curb/sidewalk and is nearly clear
   of the crosswalk; ego continues straight.

\texttt{Decision graph:} The ego vehicle remains stopped while the pedestrian with the stroller is at
   the right side of the rainbow crosswalk, with a yield sign on the right and
   large vehicles, the pickup, and box truck obstructing the left approach.

\end{minipage}
}
}

The result clearly shows that without decision graph, the generated memory will lose track of many vital items in the image, such as the stroller, crosswalk, yield sign and obstruction on the left.

\begin{figure}[t]
    \centering
    \subfigure[Visual input]{
    \includegraphics[width=0.32\linewidth]{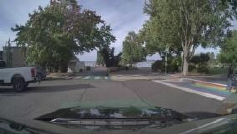}}
    \subfigure[Object detection]{
    \includegraphics[width=0.32\linewidth]{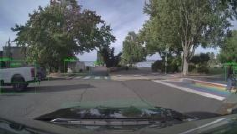}}
    \subfigure[Decision graph]{
    \includegraphics[width=0.32\linewidth]{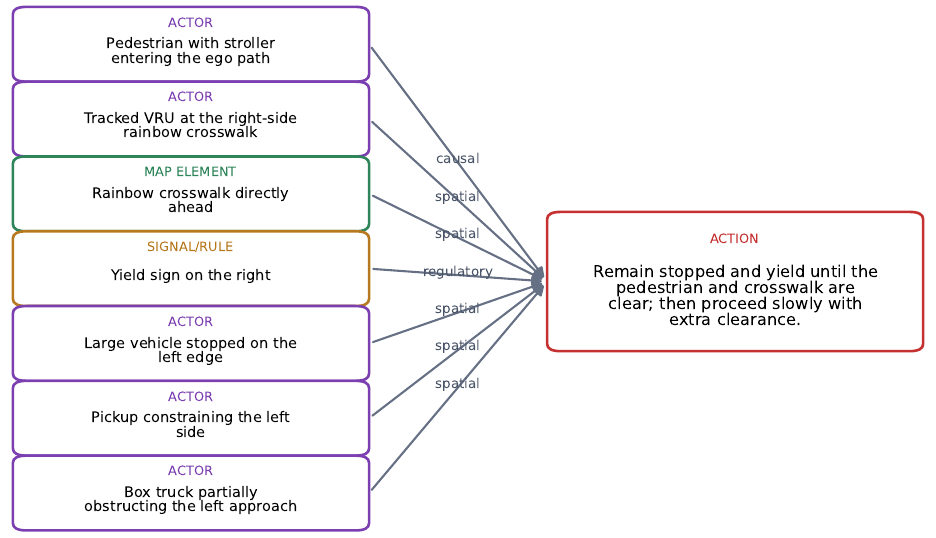}}
    \caption{An example of how our pipeline based on decision graph identify all objects related to driving. All objects that affect driving decision are grounded in panel (b) from the input frame shown in panel (a); these objects end up as nodes in the corresponding decision graph in panel (c).}
    \label{fig:dg_ablation}
\end{figure}

\subsection{Decode Success Rate}

As the output of our model contains CoT, memory and trajectory tokens, it is possible that the output of the model does not follow the corresponding format, thus leading to failures of decoding output. In Tab.~\ref{tab:decode}, we report the decode rate of our model and baselines, which shows that our model maintains a high decode success rate. Notably, during evaluation, if we fail to decode memory from one step, we will set the corresponding slot of memory to null moving forward.

\begin{table}[t]
    \centering
    \small
     \caption{The decode success rate for our model and baselines, which are all reasonably close to 100\%.}
    \begin{tabular}{lc}
    \toprule
    Model & Decode success rate \\
    \midrule
    \multicolumn{2}{c}{All-way stop dataset} \\
    \midrule
    Base model                & 100.00\% \\
    Alpamayo 2 Super 34B      & 100.00\% \\
    No memory (SFT)           & 99.44\% \\
    CoT-as-Memory (SFT)       & 99.41\% \\
    Ours (SFT)                & 99.51\% \\
    No memory (RL)            & 99.85\% \\
    CoT-as-Memory (RL)        & 99.87\% \\
    Ours (RL)                 & 99.86\% \\
    \midrule
    \multicolumn{2}{c}{General driving dataset} \\
    \midrule
    Base model                & 99.87\% \\
    Alpamayo 2 Super 34B      & 100.00\%        \\
    No memory (SFT)           & 99.78\% \\
    CoT-as-Memory (SFT)       & 99.69\% \\
    Ours (SFT)                & 99.62\% \\
    No memory (RL)            & 99.91\%  \\
    CoT-as-Memory (RL)        & 99.85\%   \\
    Ours (RL)                 & 99.83\%   \\
    \bottomrule
    \end{tabular}
   
    \label{tab:decode}
\end{table}

\subsection{Semi-Closed Loop RL Training Curve}
\label{sec:curve}
Fig.~\ref{fig:rl_aws} illustrates the training dynamics of our semi-closed loop RL on the all-way stop dataset, while Fig.~\ref{fig:rl_gen} illustrates the training dynamics of our semi-closed loop RL on the general driving dataset.

\begin{figure}[t]
    \centering
    \subfigure[Mean reward]{
    \includegraphics[width=0.4\linewidth]{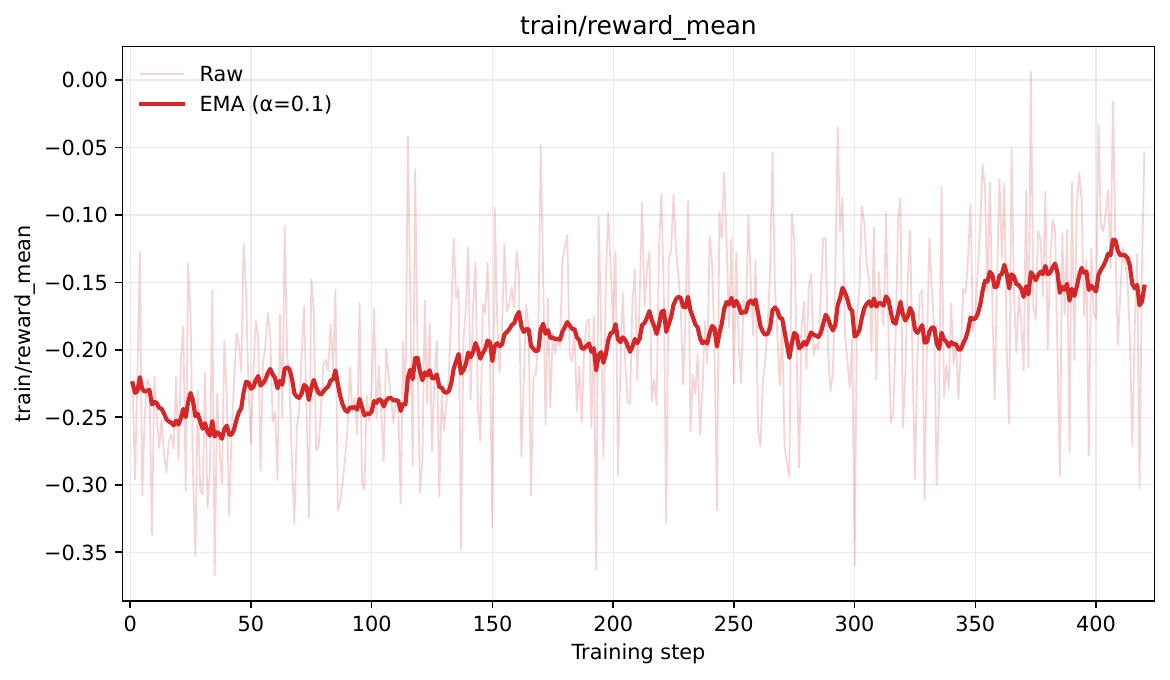}}
    \subfigure[L2 distance]{
    \includegraphics[width=0.4\linewidth]{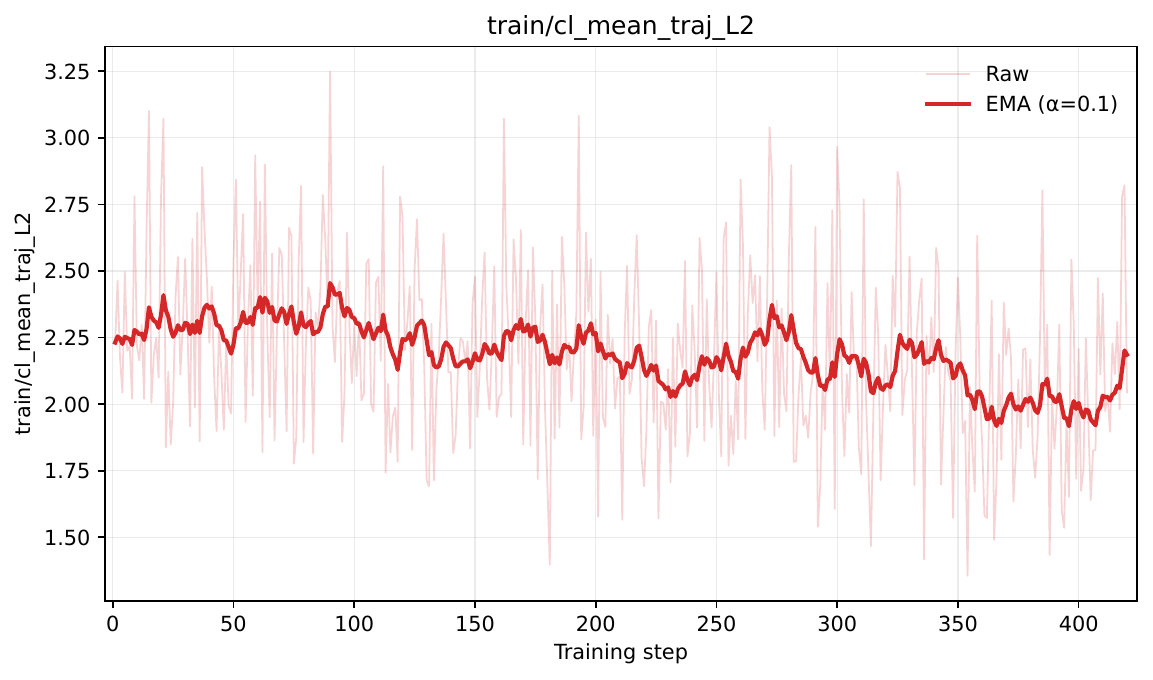}}
    \subfigure[VQA accuracy]{
    \includegraphics[width=0.4\linewidth]{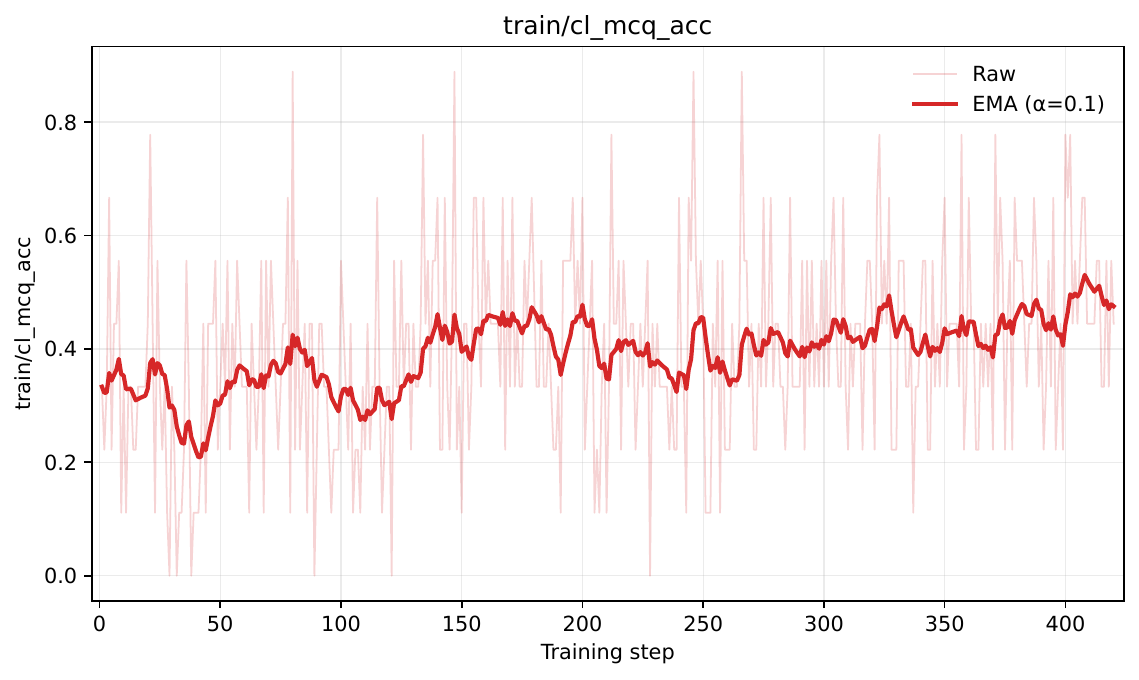}}
     \subfigure[Decode rate]{
    \includegraphics[width=0.4\linewidth]{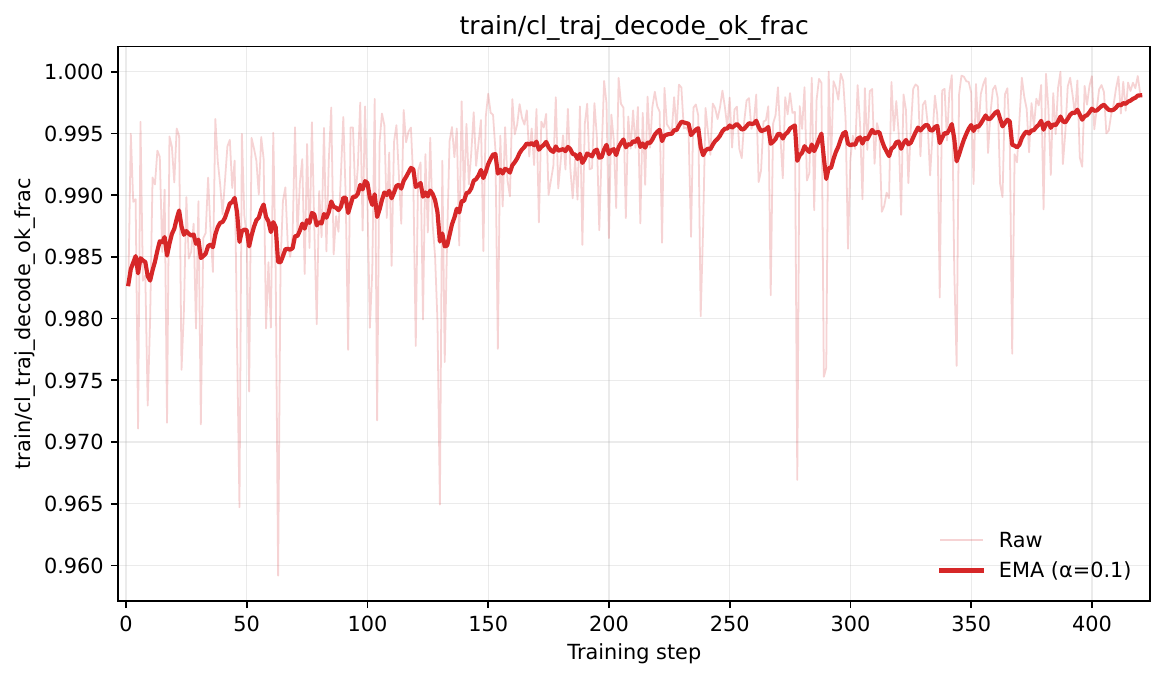}}
    \caption{The training dynamics of our semi-closed loop RL on the all-way stop dataset. We report the original curve and Exponential Moving Average (EMA).}
    \label{fig:rl_aws}
\end{figure}

\begin{figure}[t]
    \centering
    \subfigure[Mean reward]{
    \includegraphics[width=0.4\linewidth]{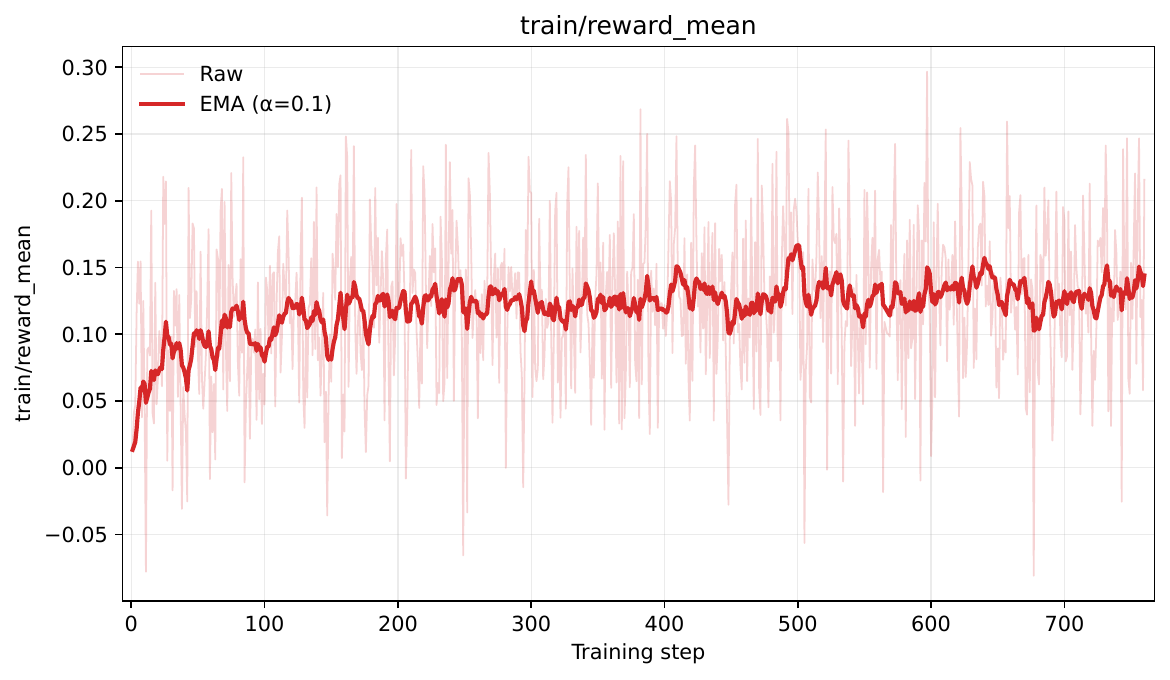}}
    \subfigure[L2 distance]{
    \includegraphics[width=0.4\linewidth]{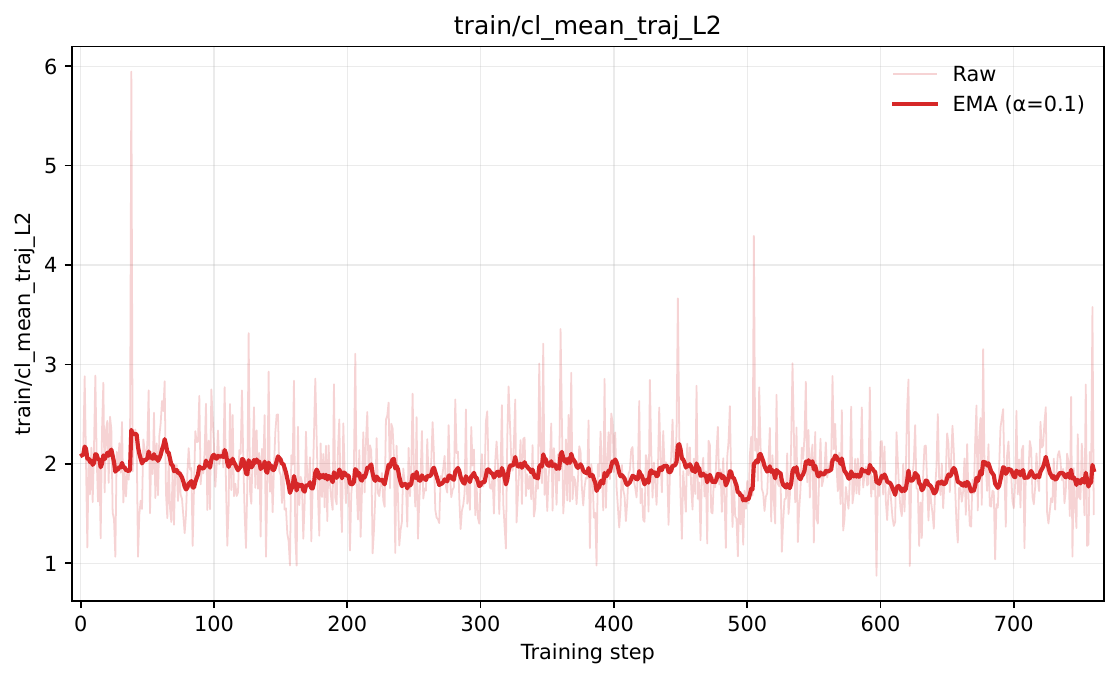}}
    \subfigure[VQA accuracy]{
    \includegraphics[width=0.4\linewidth]{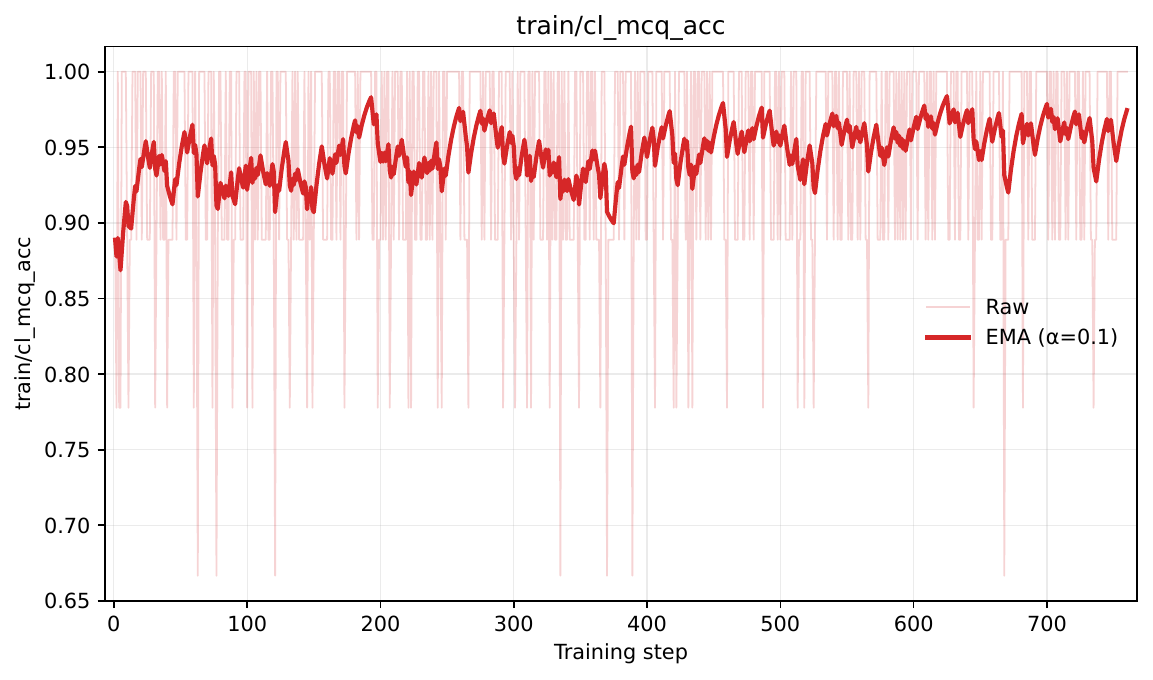}}
     \subfigure[Decode rate]{
    \includegraphics[width=0.4\linewidth]{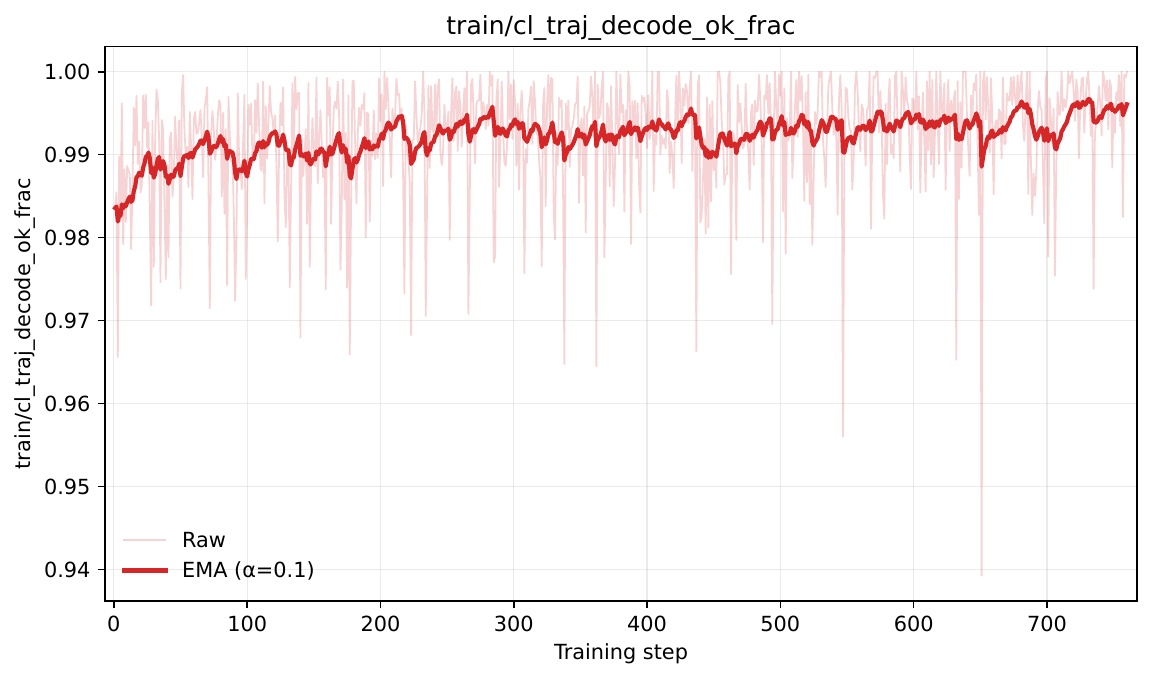}}
    \caption{The training dynamics of our semi-closed loop RL on the general driving dataset. We report the original curve and Exponential Moving Average (EMA). The VQA accuracy is close to 1 as the problems are relatively easy to overfit, and thus we test different problems in the test set; see Sec.~\ref{sec:overfit_vqa} for details.}
    \label{fig:rl_gen}
\end{figure}

\subsection{Ablation on Memory Horizon \texorpdfstring{$T$}{T}}
Fig.~\ref{fig:mem-horizon-aws} shows the result for using different memory horizon with AD-Memo and CoT-as-memory on all-way stop dataset during inference, and Fig.~\ref{fig:mem-horizon-gen} shows the result on the general driving dataset. Several conclusions can be drawn from the results:

\begin{itemize}
    \item Our memory is consistently more effective than CoT-as-memory throughout different memory horizons on both datasets.
    \item The MCQ accuracy is highly correlated with the memory horizon. On the all-way stop dataset, a memory of 8 seconds (20 frames) is sufficient; on the general driving dataset, a memory of 4 seconds (10 frames) is generally fine. The performance of AD-Memo scales largely with memory horizon, but CoT-as-memory does not, which shows the effectiveness of our memory.
    \item minADE, minFDE and corner distance are the metrics that benefits the most from longer memory horizon. We hypothesize that it because a more diverse set of memory induces more diverse driving behavior, leading to a better coverage (which also explains why avg. and most likely metrics are not scaling with respect to memory horizon).
    \item The trend of task-specific metrics on the all-way stop task, such as success stop rate, success go rate and roll-through, does not necessarily correlate with ADE, which shows the necessity of adding these metrics.
\end{itemize}

\begin{figure}[t]
    \centering
    \includegraphics[width=\linewidth]{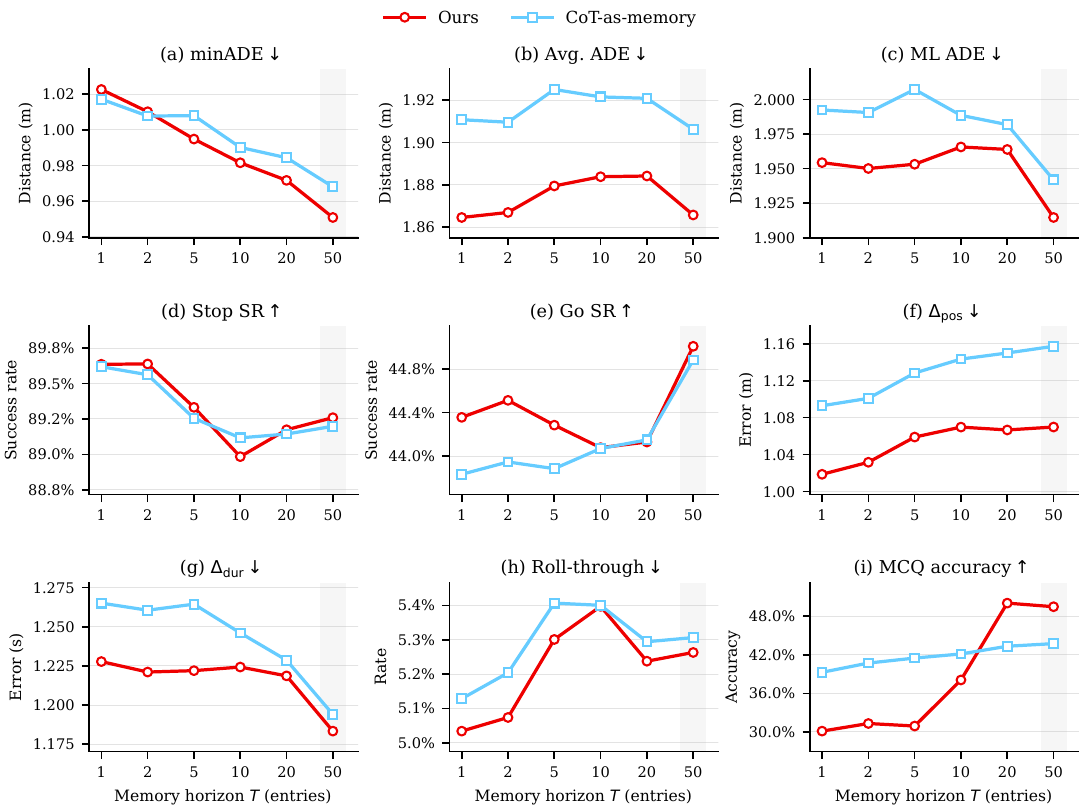}
    \caption{The change of performance with respect to memory horizon during inference time on the all-way stop dataset.}
    \label{fig:mem-horizon-aws}
\end{figure}

\begin{figure}[t]
    \centering
    \includegraphics[width=\linewidth]{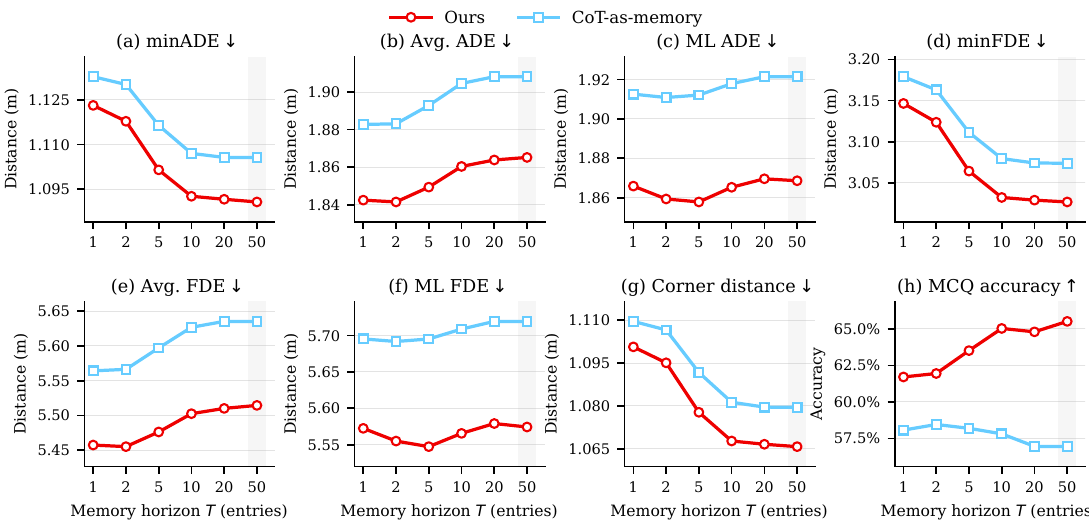}
    \caption{The change of performance with respect to memory horizon during inference time on the general driving dataset.}
    \label{fig:mem-horizon-gen}
\end{figure}

\subsection{Ablation on \texorpdfstring{$\lambda^{\text{VQA}}$}{lambda VQA}}
\label{sec:ablation_lambda}
Tab.~\ref{tab:ablation_lambda} shows the ablation result of using different $\lambda^{\text{VQA}}$ on our test set. The result shows that our method is generally robust to the choice of $\lambda$, with the optimal $\lambda^{\text{VQA}}$ for each metric spanning over the table. Notably, larger $\lambda^{\text{VQA}}$ does slightly better than smaller $\lambda^{\text{VQA}}$. Such performance difference comes from the fact that the cumulative future driving reward is a highly noisy signal. Thus, even if smaller $\lambda^{\text{VQA}}$ encourages the model to focus on driving, its performance will not necessarily improve. Such noisiness is also one of the motivation for \textit{Da Capo} that gives better credit assignment and reduces variance.

\begin{table}[t]
    \centering
    \scriptsize
    \caption{The result of our method on the all-way stop dataset with different $\lambda^{\text{VQA}}$. The performance is generally consistent, and no hyperparameter choice prevail on all metrics.}
    \begin{tabular}{cccccccccc}
    \toprule
       $\lambda^{\text{VQA}}$ & minADE\rlap{$\,\downarrow$} & Avg. ADE\rlap{$\,\downarrow$} & ML. ADE\rlap{$\,\downarrow$} & Stop SR\rlap{$\,\uparrow$} & Go SR\rlap{$\,\uparrow$} & $\Delta_\text{pos}$\rlap{$\,\downarrow$} & $\Delta_{\text{dur}}$\rlap{$\,\downarrow$} & Roll\rlap{$\,\downarrow$} & MCQ Acc.\rlap{$\,\uparrow$} \\
    \midrule
       0.02 & 0.945 & 1.886 & 1.927 & 88.73\% & 44.80\%
            & 1.111 & 1.184 & 5.54\% & 50.43\% \\
       0.05 & 0.943 & 1.877 & 1.913 & 88.67\% & 45.01\%
            & 1.092 & 1.178 & 5.38\% & 50.65\% \\
       0.1  & \textbf{0.932} & 1.886 & 1.898 & 88.71\% & 45.01\%
            & 1.129 & 1.179 & 5.28\% & 51.61\% \\
       0.2  & 0.960 & 1.876 & 1.931 & 89.03\% & 44.94\%
            & 1.099 & 1.194 & 5.34\% & 51.12\% \\
       0.5  & 0.951 & \textbf{1.866} & 1.915 & \textbf{89.26\%} & 45.01\%
            & \textbf{1.070} & 1.183 & 5.26\% & \textbf{51.66\%} \\
       1   & 0.941 & 1.900 & \textbf{1.891} & 88.52\% & 45.04\%
    & 1.119 & \textbf{1.174} & 5.44\% & 50.38\% \\
       2.5 & 0.942 & 1.886 & 1.909 & 88.10\% & \textbf{45.33\%}
    & 1.130 & 1.175 & 5.74\% & 50.80\% \\
       5    & 0.948 & 1.874 & 1.908 & 88.87\% & 45.07\%
            & \textbf{1.070} & 1.175 & \textbf{5.25\%} & 50.47\% \\
    \bottomrule
    \end{tabular}
    \label{tab:ablation_lambda}
\end{table}

\subsection{Ablation on SFT Training Length}
\label{sec:longer}

To test the optimal training length of our model on SFT, we train our 2B checkpoint for up to $4$ epochs on an earlier split of our general driving SFT dataset, and test its performance on a separate 2000-clip validation set on that split. The result is illustrated in Fig.~\ref{fig:length-ablation}, which shows that training for 2 epochs is optimal.

\begin{figure}[t]
    \centering
    \subfigure[minADE]{
    \includegraphics[width=0.3\linewidth]{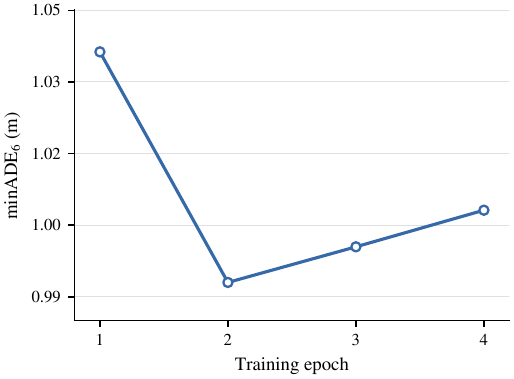}}
    \subfigure[Most likely ADE]{
    \includegraphics[width=0.3\linewidth]{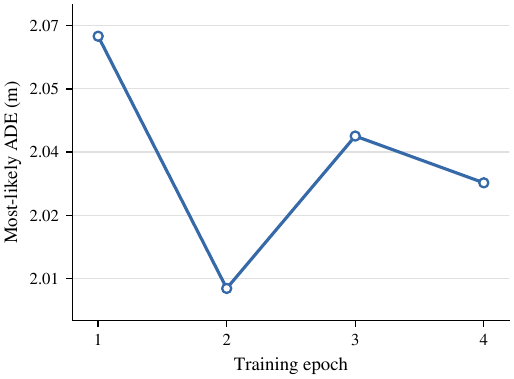}}
    \subfigure[Decode rate]{
    \includegraphics[width=0.3\linewidth]{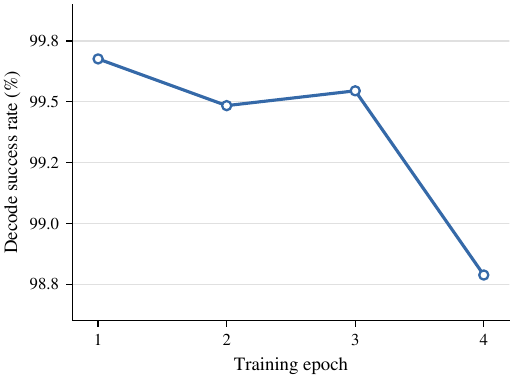}}
    \caption{The performance change with the training length of SFT on the validation set. It is clearly shown that, after 2 epochs, not only does the trajectory move further from ground truth, but the format following ability of the model is also degrading due to exposure bias~\citep{Bengio2015ScheduledSampling}, i.e., more invalid rollouts appear in sampling.}
    \label{fig:length-ablation}
\end{figure}

\section{Computational Resources}
\label{sec:resource}

Our computational resource consumption is as follows:

\textbf{Data curation:} Generating a decision graph takes on average 10.43 model calls, which sums to roughly 15K input tokens and 5K output tokens.

\textbf{SFT experiments:} All SFT experiments are conducted on a single compute node with 8 NVIDIA H100 GPUs (80GB memory), 96 AMD EPYC 9R14 CPUs, and 512GB memory. A 8900-step SFT training on the general driving dataset takes approximately 16 hours.

\textbf{RL experiments:} All RL experiments are conducted with four compute nodes (two for rollout and two for training), each of which is identical to a SFT node. RL experiments typically require 2-3 days to complete.

\textbf{Evaluation:} All evaluations are conducted with 16 single-GPU node with each node identical to the SFT node (except for the number of GPU). Evaluation usually takes 4-6 hours. 

\section{Limitation and Future Works}
\label{sec:limit}

\textbf{Horizon of memories.} Our work mainly studies memories that lasts around 10 seconds. While we empirically verified this to be sufficient for our task of interest (Sec.~\ref{sec:datastats-aws}), long-term regulations such as speed signs may exert influence on driving for minutes to hours. While AD-Memo can straightforwardly address such problem by adding a special buffer for ``long-term memory'' heuristically, an in-depth investigation into such area is still interesting.

\textbf{Extending our finding to action expert.} Our experiment is mainly done with a VLA backbone, which means the action expert in Alpamayo is not involved in this work. This is because in the Alpamayo training pipeline~\citep{nvidia2026alpamayo2super}, the action expert requires extensive training after the VLA backbone is finalized, which is too expensive to run. To see whether the advantage of language memory persists with action expert is an interesting avenue to explore in the near future.

\textbf{Comparison with vector-based baselines.} We choose the same training recipe without memory and with CoT-as-memory as our baseline because direct comparison to vector-based baselines is difficult. Most related papers have no training code or checkpoint released, such as MindVLA-U1~\citep{huang2026mindvla}, enhanced HOPE~\citep{kim2026think}, VLN-AVP~\citep{li2026vln} and AdaDrive~\citep{zhang2025adadrive}. 

For those with code available, COMPACT-VA~\citep{liang2026planning} considers the closest scene to us, and we adopt some of its metrics. However, the memory design is based on a different VLA architecture with the assumption of $20$ or more frames, which is not supported by our Alpamayo base model (pretrained to only receive $4$ frames). Also, it requires a fixed number of frame input with no padding (e.g. $20$), for which many clips in our test set cannot be tested for their limited length (and it is also unfair as we report the average of many frames in a clip, including early ones in the video that must be padded for COMPACT-VA).

For the rest of the baselines,  MoMAD~\citep{song2025don} uses ResNet~\citep{he2016deep} instead of VLA as backbone; to adapt such a method to VLA is beyond our scope (and we already have a better model, Alpamayo 2 Super 34B, as the off-the-shelf baseline compared to MoMAD's original checkpoint). ST-Occ~\citep{leng2025occupancy} is not a driving model and deals with voxels; Orion~\citep{fu2025orion} is based on CARLA dataset, while our work uses real-life clips with a different number of cameras. The dataset difference means that we cannot reliably compute ``collision loss'' and ``boundary loss'' in the paper.

\end{document}